\documentclass[12pt]{article}
\usepackage{amsmath}
\usepackage{amsfonts}
\usepackage{amsthm} 
\usepackage{amssymb}
\usepackage{dsfont}
\usepackage{algorithm}
\usepackage{algpseudocode}
\usepackage{mathtools}
\usepackage{natbib}
\usepackage{hyperref}
\usepackage{tabularx}
\usepackage{latexsym}
\usepackage{euscript,makeidx, color,mathrsfs}
\usepackage[margin=1in]{geometry}
\usepackage{bm}
\usepackage{booktabs}
\usepackage{array}
\usepackage{float}
\usepackage{enumitem}
\usepackage{xcolor}
\usepackage{caption}
\usepackage{subcaption}
\usepackage{indentfirst}
\usepackage{graphicx}
\usepackage{epstopdf}
\usepackage{multirow}
\usepackage{cleveref}
\usepackage[utf8]{inputenc}
\usepackage{textgreek}

\newtheorem{Theorem}{Theorem}[section]
\newtheorem{Proposition}[Theorem]{Proposition}
\newtheorem{Remark}[Theorem]{Remark}
\newtheorem{Lemma}[Theorem]{Lemma}
\newtheorem{Assumption}[Theorem]{Assumption}
\newtheorem{Corollary}[Theorem]{Corollary}

\Crefname{Remark}{Remark}{Remarks}
\Crefname{Lemma}{Lemma}{Lemmas}
\crefname{Assumption}{assumption}{assumptions}
\Crefname{Assumption}{Assumption}{Assumptions}
\crefname{Corollary}{corollary}{corollaries}
\Crefname{Corollary}{Corollary}{Corollaries}

\renewcommand {\theequation}{\arabic{section}.\arabic{equation}}
\renewcommand{\theequation}{\arabic{section}.\arabic{equation}}

\algnewcommand{\algorithmicresult}{\textbf{Result:}}
\algnewcommand{\Result}{\item[\algorithmicresult]}

\newcommand{\E}{\mathbb{E}}

\newcommand{\Ptar}{P^{\mathrm{tar}}}
\newcommand{\Ppre}{P^{\mathrm{pre}}}

\newcommand{\Lip}{\mathrm{Lip}}
\newcommand{\CVaR}{\mathrm{CVaR}}
\newcommand{\VaR}{\mathrm{VaR}}
\newcommand{\VaRbar}{\overline{{\mathrm{VaR}}}}

\newcommand{\Qg}{\mathcal{Q}^g}
\newcommand{\R}{\mathbb{R}}
\newcommand{\LipKL}{D_{\mathrm{KL}}^{L}}
\newcommand{\QPtar}{(Q\|\Ptar)}
\newcommand{\Fg}{\mathcal{F}^{\CVaR}}
\newcommand{\PR}{\mathcal{P}(\R^d)}
\newcommand{\ind}{\mathds{1}}

\definecolor{BlueViolet}{HTML}{8A2BE2}

\definecolor{wildstrawberry}{rgb}{1.0, 0.26, 0.64}

\def\limsup{\mathop{\overline{\mathrm{lim}}}}
\def\liminf{\mathop{\underline{\mathrm {lim}}}}
\usepackage{authblk}

\title{Fine-Tuning Generative Models for Extreme Events via CVaR-Penalized Wasserstein Gradient Flows
}

\author[1]{Thejani Gamage\thanks{ Email: \texttt{tgamage@umass.edu}}}
\author[1]{Hyemin Gu}
\author[1]{Zhizhen Zhang}
\author[2]{Ziyu Chen}
\author[1]{Markos Katsoulakis}
\author[1]{Luc Rey-Bellet}
\affil[1]{Department of Mathematics and Statistics, University of Massachusetts Amherst, Amherst, MA}
\affil[2]{School of Data and Information Sciences (SDIS) and Department of Mathematics, University of North Carolina at Chapel Hill, Chapel Hill, NC}

\date{}
\begin{document}
\maketitle
\begin{abstract}
We propose CVaR-penalized Generative Particle Algorithm (CVaR-GPA), a robust, tail-agnostic algorithm for fine-tuning generative models to learn heavy-tailed distributions and capture extreme events, requiring no prior knowledge or estimation of the target's tail characteristics. The method is the Wasserstein gradient flow of the Lipschitz-regularized Kullback-Leibler (KL) divergence penalized by a Conditional Value-at-Risk (CVaR) discrepancy term: the Lipschitz-regularized KL divergence enables robust learning under minimal assumptions on the target distribution, while the CVaR penalty restores the velocity that otherwise vanishes prematurely in the under-sampled tails. The penalized flow admits a bounded but non-Lipschitz velocity field. This departs from the Lipschitz transport maps of standard generators, which preserve the tail behavior of a light-tailed source, and enables transport toward heavier-tailed targets. To define this flow on empirical measures, we derive the first-variation subgradients of CVaR from its Rockafellar-Uryasev representation, valid precisely where the classical density-based formula fails. The particle algorithm CVaR-GPA fine-tunes the output samples of any pre-trained model, without access to its architecture, and runs on an adaptive time horizon set by a kinetic-energy stopping criterion rather than a preset depth. On synthetic isotropic and anisotropic Student-$t$ target distributions, Neal's funnel distribution, and the real-world high-dimensional Fama-French 25 portfolio dataset, CVaR-GPA dramatically improves global and tail accuracy on heavy-tailed targets over the pre-trained baseline.
\end{abstract}

\textbf{Keywords:} Extreme events, Wasserstein gradient flows, heavy-tailed distributions, 
Conditional Value-at-Risk, Lipschitz-regularized divergences, particle neural algorithms, fine-tuning.

\section{Introduction}
\label{Sec:Intro}
Heavy-tailed distributions arise across several high-stakes domains, including finance and insurance 
\cite{albrecher2006ruin,embrechts1982estimates}, catastrophic event forecasting 
\cite{grossi2005catastrophe}, and medicine \cite{cirillo2020tail}. In these settings, extreme events can have 
severe consequences, making accurate simulation particularly important. Yet learning heavy-tailed distributions
from finite samples remains a longstanding challenge. Although modern generative models have achieved 
remarkable success in mapping simple source distributions to complex, high-dimensional targets, 
learning heavy-tailed distributions remains challenging for two main reasons. The first is a fundamental 
mathematical obstruction: transport maps in generative models are typically Lipschitz continuous, and since 
these models are usually initialized from a light-tailed source distribution (e.g., a Gaussian), a Lipschitz
transport map necessarily produces a light-tailed output distribution as well \cite{jaini2020tails}. The second is statistical: the inherent scarcity of observations in the tail region can cause 
training to terminate before the extreme regions are adequately captured, a phenomenon we term 
\textit{premature saturation} (or the \textit{premature vanishing velocity} for flow-based models in 
particular). 

In this work, we propose a novel fine-tuning methodology that addresses the premature saturation exhibited by 
existing models, enabling them to accurately capture tail behavior without requiring prior knowledge or 
estimations of the tail decay rates. Let $\mathcal{P}(\mathbb{R}^d)$  be the space of probability measures on 
$\mathbb{R}^d$, $\Ptar \in \mathcal{P}(\mathbb{R}^d)$ be the target distribution, and $\Ppre \in \mathcal{P}
(\mathbb{R}^d)$ denote the output distribution of a pre-trained generative model that is available to sample 
from. We formulate the fine-tuning of a pre-trained model as the optimization problem
\begin{align}
\label{Eq:General:Minimization}
\inf_{\theta \in \Theta} \mathcal{F}(T^{\theta}_{\#}\Ppre ; \Ptar),  
\end{align} where $\mathcal{F} (\cdot; \Ptar): \mathcal{P}(\mathbb{R}^d) \to [0,\infty)$ is a loss functional whose unique global minimizer is $\Ptar$,
$T^{\theta}_{\#} \Ppre$
is the push-forward measure of the pre-trained measure by a suitably parameterized transport map
$T^{\theta}:\R^d \rightarrow \R^d$, where $\Theta$
is the parameter space over which \eqref{Eq:General:Minimization} is optimized. We design a loss functional 
$\mathcal{F}$ and a transport map $T^{\theta}$ that together fine-tune a pre-trained model to learn a 
heavy-tailed target distribution $\Ptar$ more accurately. 

We construct the transport map $T^{\theta}$ in  \eqref{Eq:General:Minimization} as the discretization of the 
Wasserstein gradient flow \cite{jordan1998variational,otto2001geometry} of $\mathcal{F}$. If the variational 
derivative of $\mathcal{F}$ with respect to the generated distribution $Q$, denoted by $\frac{\delta \mathcal{F}
(Q; P^{\text{tar}})}{\delta Q}$, exists, then the resulting Wasserstein gradient flow can be formulated as
\begin{align}
\label{Eq:Gradient:Flow_1}
\begin{aligned}
&    \partial_t Q_t - \nabla\cdot\Big(Q_t \nabla_x\Big(
    \frac{\delta \mathcal{F}(Q;P^{\text{tar}})}{\delta Q}(x)
    \Big)\Big|_{Q=Q_t}\Big) = 0,  \quad t>0, \quad Q_0 = \Ppre, 
\end{aligned}
\end{align}
where $Q_t$ denotes the evolving distribution at time $t$. In practice, generative models approximate 
probability measures via their empirical distributions over finite samples. Thus, both the chosen functional 
$\mathcal{F}(Q; P^{\text{tar}})$ and its variational derivative $\frac{\delta \mathcal{F}(Q; P^{\text{tar}})}
{\delta Q}$ must be well-defined when $Q$ and $\Ptar$ are replaced by their empirical distributions. Such a 
functional allows us to initialize the fine-tuning \eqref{Eq:General:Minimization} from any pre-trained model
$\Ppre$, whose output samples we have access to. Our use of the Wasserstein gradient flows to construct the 
transport map $T^{\theta}$ is primarily motivated by their ability to initialize the learning directly from 
samples of $\Ppre$, without access to the internal architecture of the pre-trained model. Beyond this, 
Wasserstein gradient flows carry several properties that make it a well-suited fine-tuning framework for 
robust, tail-agnostic learning of heavy-tailed targets. These properties, closely tied to the choice of the 
loss functional $\mathcal{F}$, are discussed in detail in \Cref{Sec:Grad:Flow}.

While the KL divergence and the Wasserstein metrics are among the most widely used loss functionals in 
generative modeling, neither is well suited on its own as the loss functional $\mathcal{F}$ in 
\eqref{Eq:Gradient:Flow_1}. The KL divergence is not well-defined when $Q$ and $\Ptar$ are approximated by 
their empirical measures, while the Wasserstein metrics are not differentiable in $Q$. On the other hand, the
Lipschitz-regularized KL divergence \cite{dupuis2022formulation} circumvents both these limitations. As shown
in \cite{chen2025robust}, the Lipschitz-regularized KL divergence and its first variational derivative are well
defined both at the population level and the finite-sample level. At the population level, the Lipschitz-
regularized KL divergence and its first variational derivative are well-defined whenever $Q$ has a finite first 
moment, with no assumption whatsoever on $\Ptar$. Thus, the Lipschitz-regularized KL divergence is a suitable 
choice for tail-agnostic learning of heavy-tailed targets. However, Wasserstein gradient flows of 
Lipschitz-regularized divergences, which are referred to as \textit{Lipschitz-regularized Wasserstein gradient
flows} in what follows, exhibit the premature vanishing velocity issue due to the scarcity of data in the tail 
region (see \Cref{Fig:Gradient:Correction} for illustrations).

We propose the following loss functional that addresses the premature vanishing velocity issue of the
Lipschitz-regularized Wasserstein gradient flow by penalizing the Lipschitz-regularized KL divergence with a 
weighted, squared Conditional Value-at-Risk (CVaR) discrepancy term
\begin{align}
\label{Eq:Loss:F}
\Fg(Q; P^{\text{tar}})= \LipKL  \QPtar + \lambda \Big(\mathrm{CVaR}_\alpha^{ \Ptar,g} -\mathrm{CVaR}_\alpha^{ Q,g} \Big)^2,
\end{align}
where $\LipKL$ denotes the Lipschitz-regularized KL divergence in \eqref{Eq:Lip:KL:Def},  $\mathrm{CVaR}_\alpha^{ Q,g}$ denotes the CVaR at the $\alpha^{\mathrm{th}}$ quantile of a non-negative 
function $g$ under the distribution $Q$ (similarly for $\mathrm{CVaR}_\alpha^{\Ptar,g}$), and $\lambda > 0$ is
a hyperparameter that controls the relative weight of the squared CVaR difference and the Lipschitz-regularized
KL divergence. The loss functional \eqref{Eq:Loss:F} is referred to as the \textit{CVaR-penalized loss 
functional}.  But,  $\mathrm{CVaR}_\alpha^{ \Ptar,g}$, and by extension $\Fg(Q; P^{\text{tar}})$, are not 
differentiable in $Q$ in general, in particular for empirical measures. Thus, in our work, we consider the 
\textit{first variational subgradients} of $\mathrm{CVaR}_\alpha^{ Q,g}$ and $\Fg(Q; P^{\text{tar}})$  defined 
using Clarke's generalized gradients \cite{clarke1990optimization} (see Appendix \ref{app:clarke} for the formal 
definition). We note that, while first variational subgradients of $\mathrm{CVaR}_\alpha^{ Q,g}$ can also be 
referred to as supergradients due to the concavity of CVaR, in our work we adopt the general terminology first
variational subgradient, regardless of the functional being convex, concave, or neither.

The Wasserstein gradient flow of the functional \eqref{Eq:Loss:F}, referred to as the \textit{CVaR-penalized 
Wasserstein gradient flow}, can be viewed as a transport-based variational PDE. The CVaR penalization 
introduces an additional velocity component to the Lipschitz-regularized Wasserstein gradient flow in the tail
region (see \Cref{Fig:Gradient:Correction}). Its velocity field is bounded, enabling stable learning, and not
Lipschitz continuous. Consequently, the induced transport map is not Lipschitz continuous as well, making the 
CVaR-penalized Wasserstein gradient flow a suitable framework for heavy-tailed targets. The associated particle
algorithm, CVaR-GPA, extends the Lipschitz-regularized Generative Particle Algorithm (Lip-KL-GPA) of
\cite{gu2024lipschitz}. \Cref{fig:ff25_combined_intro} previews the results obtained by fine-tuning the Lip-KL-
GPA pre-trained model on the Fama-French 25 monthly portfolio dataset \cite{FAMA19933}; a stringent test case
demonstrating CVaR-GPA performs well on high-dimensional anisotropic targets. We note that CVaR-GPA operates in
a tail-agnostic setting and does not require a priori knowledge or estimation of the target distribution's tail decay rate, nor architectural modifications tailored to it.

\begin{figure}[h]
    \centering
\includegraphics[width=\linewidth]{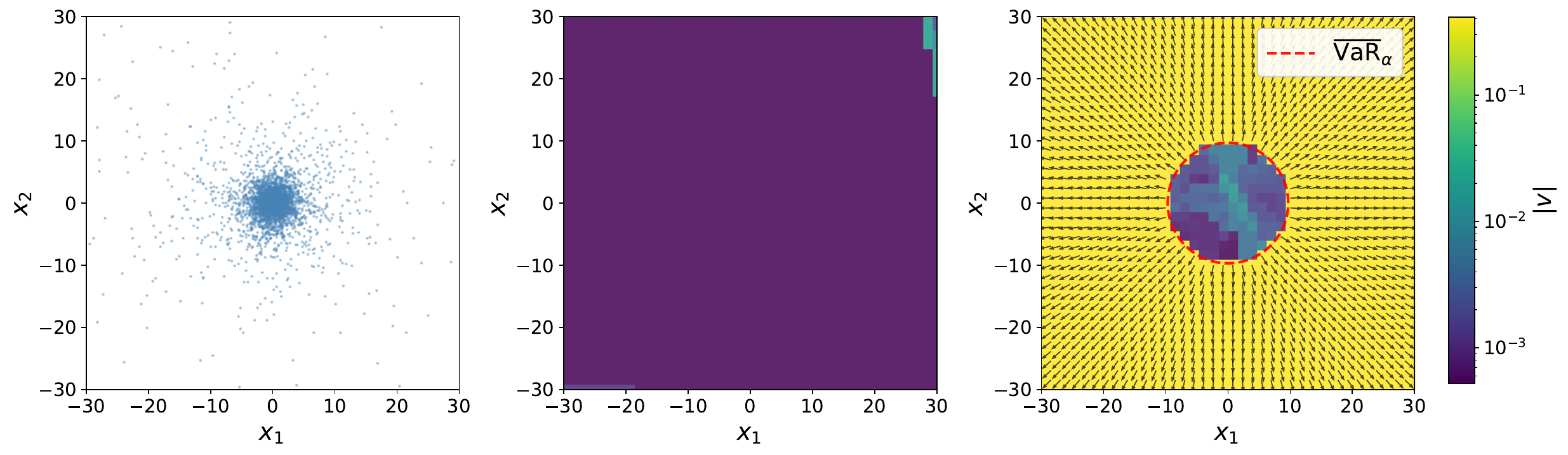}
\caption{CVaR-penalization restores Lip-KL-GPA's premature vanishing velocity.
\textbf{Left:} Lip-KL-GPA particles ($L=1.0$, $N=5{,}000$, 20,000 iterations) on an isotropic $2$-$d$ Student-$t$ target ($\nu=1.0$, no finite moments); converges but misses the tails.
\textbf{Middle:} the velocity $\|v\|$ vanishes almost everywhere; nonzero corners are NN extrapolation artifacts outside particle support.
\textbf{Right:} CVaR-penalized fine-tuning (illustrative hyperparameters: $\alpha=0.9$, $\lambda=\frac{1}{32}$, $L=0.125$) restores velocity, pushing particles outward beyond the radius $\overline{\mathrm{VaR}}_\alpha$ (given by \eqref{Eq:VaR:Bar}) towards the tails.}
    \label{Fig:Gradient:Correction}
\end{figure}

\begin{figure}[tbhp]
    \centering
    \includegraphics[width=\linewidth]{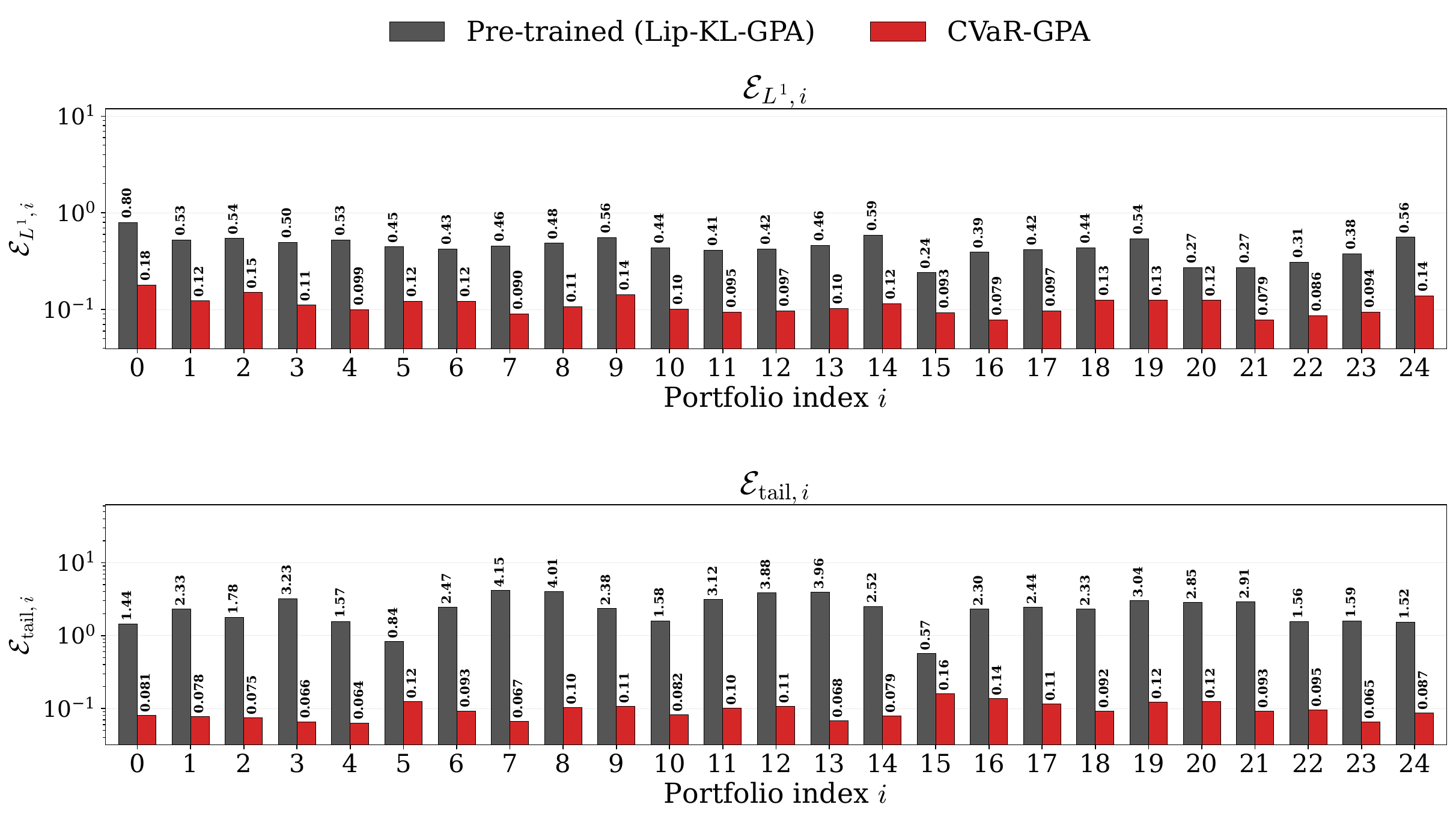}
  \caption{Comparison of Lip-KL-GPA vs. CVaR-GPA on the Fama-French 25 monthly portfolios dataset via the global $L^1$ error \eqref{Eq:Global:Error}, denoted by $\mathcal{E}_{L^1, i}$ for each marginal $i$, and the tail error \eqref{Eq:Tail:Error}, denoted by $\mathcal{E}_{\mathrm{tail}, i}$ for each marginal $i$; both errors are displayed on a log scale. CVaR-GPA fine-tunes Lip-KL-GPA and decreases the global $L^1$ error and the tail error for each marginal distribution.}
\label{fig:ff25_combined_intro}
\end{figure}

The rest of the paper is organized as follows. \Cref{Sec:CVaR:Loss} analyzes the proposed loss functional 
$\Fg$, defined in \eqref{Eq:Loss:F}, and its divergence property and computational tractability that make it a 
well-posed objective for learning heavy-tailed targets. \Cref{Sec:Grad:Flow} derives an explicit formula for 
the first variational subgradients of $\Fg$, and defines the CVaR-penalized  Wasserstein gradient flow.  
Furthermore, \Cref{Sec:Grad:Flow} rigorously demonstrates how the induced velocity field of this flow remedies
the premature vanishing velocity issue. We present the particle algorithm CVaR-GPA in \Cref{Sec:CVaR:GPA}, and
evaluate its performance on a 2-dimensional isotropic Student-$t$ distribution, a 5-dimensional anisotropic
Student-$t$ distribution, Neal's funnel distribution, and the Fama-French 25 dataset in \Cref{Sec:Results}.
Finally, we conclude this paper and discuss future directions in \Cref{Sec:Conclusions}. 

\subsection{Related work}

\paragraph{Generative models tailored for learning heavy-tailed targets}
There are many generative model designs tailored for heavy-tailed distributions \cite{Allouche2022EV-GAN,allouche2026exceedgan, bhatia2021exgan, guan2025mirrorflowmatchingheavytailed,hickling2024flexible, 
Chiang2021ParetoGAN,liu2024_heavytaileddiffusion,pandey2025_heavytaildiffusion}. Despite reasonable empirical 
performance, these generative models still exhibit several limitations. On one hand, certain architectures 
(including mirror flow matching \cite{guan2025mirrorflowmatchingheavytailed}, Pareto GAN 
\cite{Chiang2021ParetoGAN}, Score-based Heavy-tailed Diffusion \cite{liu2024_heavytaileddiffusion}, t-EDM 
\cite{pandey2025_heavytaildiffusion}, and t-Flow \cite{pandey2025_heavytaildiffusion})  require prior knowledge 
or accurate estimates of the tail decay rate of the target distribution. Accurate estimation of the tail decay 
rate is itself a computationally challenging problem. On the other hand, many models, including EV-GAN 
\cite{Allouche2022EV-GAN}, ExceedGAN \cite{allouche2026exceedgan}, Exgan \cite{bhatia2021exgan}, and Tail 
Transform Flow (TTF) \cite{hickling2024flexible}, operate over a fixed time horizon. As a result, the effective 
depth of the neural architecture is treated as a static hyperparameter determined a priori; this potentially 
limits the model's capacity to transport light-tailed sources to heavy-tailed targets. Lip-KL-GPA, a particle algorithm introduced in \cite{gu2024lipschitz} based on Lipschitz-regularized 
Wasserstein gradient flows, is both tail-agnostic and operates in an adaptive time horizon. As shown in 
\cite{chen2025robust}, Lip-KL-GPA and other generative models optimizing Lipschitz-regularized divergences 
outperform existing generative models on learning heavy-tailed targets, including $f$-GANs, optimal transport 
(OT) flow, continuous normalizing flows (CNFs), and score-based generative models (SGMs). However, the 
generative models optimizing Lipschitz-regularized divergences empirically exhibit premature saturation when 
learning heavy-tailed targets, the primary drawback that we address by introducing the CVaR penalization. 

\paragraph{CVaR for tail-sensitive modeling}
CVaR is one instance of a broader family of spectral risk measures \cite{acerbi2002portfolio} that assign 
greater weight to the tail region. CVaR, also known as the Expected Shortfall, is a canonical choice in tail-
sensitive modeling, with an extensive literature
\cite{acerbi2001expected, cont2026tail, kishida2023risk}, owing largely to its 
tractable variational representation via the Rockafellar–Uryasev formula \cite{Rock2002CVaR}.
Incorporating CVaR into the generative objective to improve the accuracy of learning heavy-tailed targets has 
also been explored by Tail-GAN \cite{cont2026tail}, an adversarial generator for multi-asset financial returns 
that augments the GAN discriminator loss with Value-at-Risk (VaR) and CVaR, so that the generator learns to 
reproduce correct tail-risk statistics for benchmark financial portfolios. Unlike our method, however, its generator is a fixed-depth Lipschitz continuous transport map, which may structurally limit its ability to learn a 
heavy-tailed target distribution. A separate line of work incorporates CVaR into fine-tuning objectives for tail-sensitive reward maximization 
\cite{chaudhary2024risk, NEURIPS2025_10715deb,wang2026efficient}. This line of research pursues a fundamentally 
different goal from ours: it reweighs the generated distribution to favor extreme rewards, whereas we aim to 
learn heavy-tailed distributions and capture extreme events. The framework of \cite{wang2026efficient} is 
closest to ours, where CVaR, its Rockafellar-Uryasev representation, and its first variational derivative are 
employed for tail-sensitive reward maximization. However, the variational derivative of CVaR utilized in 
\cite{NEURIPS2025_10715deb, wang2026efficient} is computed under density assumptions that do not hold for 
general distributions such as empirical measures supported on finite samples. This necessitates our derivation 
of the first variational subgradients of $\mathrm{CVaR}_\alpha^{ Q,g}$. 

\section{CVaR-penalized loss functional and its properties}\label{Sec:CVaR:Loss}
For efficient and stable fine-tuning tailored to learning heavy-tailed distributions, we propose the following 
loss functional $\Fg$ 
\begin{align}
\label{Eq:penalized:loss:functional}
\Fg(Q; P^{\text{tar}})= \LipKL  \QPtar + \lambda \, \left(\Delta C (Q; \Ptar)\right)^2,  
\end{align} 
where $\Delta C (Q; \Ptar):=\mathrm{CVaR}_\alpha^{ P^{\text{tar}},g} -\mathrm{CVaR}_\alpha^{ Q,g}$ denotes the 
CVaR discrepancy of $g(X)$ for $X \sim Q$ and $X \sim \Ptar$, for a given risk function $g:\R^d \rightarrow [0,\infty)$. The risk measure $\mathrm{CVaR}_\alpha^{ Q,g}$ is defined under the 
assumption $\E_Q[g]< \infty$, and the Lipschitz-regularized KL divergence is defined for $Q$ with finite first 
moment. Throughout this paper, we have the following standing assumption:
\begin{Assumption}
\label{Assump}
For the loss functional $\Fg$ to be well-defined, we assume the following. 
\begin{enumerate}
    \item Let $\mathcal{Q}^g:=\{Q \in \PR; \E_Q[g] < \infty\}$. we assume that $Q, \Ptar \in \mathcal{Q}^g$, so 
    that $\Delta C (Q; \Ptar)$ is well-defined.
    \item We assume that $Q$ has finite first moment so that the Lipschitz-regularized KL divergence (and its 
    variational derivative) is well-defined.
\end{enumerate}
\end{Assumption}
We note that if one were to choose the radial risk function $g(x)=\|x\|$, which is the risk function we use to 
define the CVaR-penalized Wasserstein gradient flow, upon which our numerical algorithm is built on $Q \in 
\Qg$ implies that $Q$ has finite first moment, and in that case, Assumption \ref{Assump}(2) is redundant. We also note that Assumption \ref{Assump} is a density-level condition ensuring that $\Fg$ and its variational subgradients are well-defined in \Cref{Sec:Grad:Flow}; it imposes no restriction on the algorithm discussed in \Cref{Sec:CVaR:GPA}. The empirical measures
$\widehat{Q}$ and $\widehat{\Ptar}$ on which CVaR-GPA operates (\Cref{Sec:CVaR:GPA}) are supported
on finitely many samples, so both parts of Assumption \ref{Assump} hold automatically, for any target
$\Ptar$, including heavy-tailed targets without finite moments, such as the Cauchy ($\nu=1$) target distribution
in \Cref{Sec:Results}.

With Assumption \ref{Assump}, the fine-tuning is formulated as the optimization problem
\begin{equation*}
 \inf_{Q \in \mathcal{Q}^g} \Fg (Q; \Ptar).   
\end{equation*}

In this section, we establish some structural properties of $\Fg$ that make it a well-posed and tractable 
objective for optimization. The loss functional $\Fg(Q; P^{\text{tar}})$ should satisfy the divergence 
property,  so that the target distribution $\Ptar$ is its unique global minimizer; a necessary property which 
makes minimizing the loss functional $\Fg(Q; P^{\text{tar}})$ equivalent to learning the target distribution 
$\Ptar$. Furthermore, we need both the functional $\Fg$ and its variational subgradients to be estimable from 
finite samples. These properties are inherited from the two components comprising $\Fg$: the Lipschitz-
regularized KL divergence and the squared CVaR discrepancy. We first discuss the properties of each component 
that make $\Fg$ a suitable objective, and then present a detailed analysis.

\subsection{Mathematical preliminaries on components of \texorpdfstring{$\Fg$}{F\string^{CVaR}}}
\label{Sec:Properties:Components}

\paragraph{Lipschitz-regularized KL divergence}
The Lipschitz-regularized KL divergence was first introduced in \cite{dupuis2022formulation} and subsequently 
developed in \cite{Birrell2022fGamma} as a computational tool for generative modeling and other applications 
involving heavy-tailed and singular data. Lipschitz-regularized KL divergence is defined via the infimal 
convolution
\begin{align}
\label{Eq:Lip:KL:Def}
\LipKL  \QPtar = \inf_{\gamma\in \mathcal{P}(\mathbb{R}^d)} \left\{ D_{KL}(\gamma \| \Ptar) + L \cdot \mathcal{W}_1(Q, \gamma) \right\},
\end{align}
where $D_{KL}$ and $\mathcal{W}_1$ denote the KL divergence and the Wasserstein-$1$ metric, respectively. By 
definition, the minimizer $\gamma^*$ of \eqref{Eq:Lip:KL:Def} can be viewed as a reweighting of the target 
measure $\Ptar$ through the KL divergence, while the Wasserstein-$1$ metric measures the cost of transporting 
mass from the intermediate measure $\gamma^*$ to $Q$. The transportation of mass removes the need for an 
absolute continuity assumption between $Q$ and $\Ptar$ and the redistribution makes the Lipschitz-regularized 
KL divergence suitable for handling heavy-tailed distributions by redistributing mass from the bulk region to 
the tail region (see \cite[Section 3]{Birrell2022fGamma} for a detailed discussion of mass 
redistribution/transport interpretation).  Furthermore, we recall from \cite{dupuis2022formulation} that the  
Lipschitz-regularized KL divergence  defined by \eqref{Eq:Lip:KL:Def} has a variational representation given by
\begin{align}
\label{equation:Variational_KL_Lip}
 D_{\mathrm{KL}}^L (Q \| P^{\mathrm{tar}})   = \sup_{\phi \in \Gamma_L} \{\mathbb{E}_Q[\phi] - \log(\mathbb{E}_{\Ptar}[e^{\phi}]) \} ,
\end{align}
where $\Gamma_L$ is the space of $L$-Lipschitz functions. 

As proven in \cite[Theorem 3]{chen2025robust}, under the Assumption \ref{Assump}(2), the optimizer of 
\eqref{equation:Variational_KL_Lip}, denoted by $\phi^*$, exists and is unique on $\mathrm{supp}(Q) 
\cup\mathrm{supp} (\Ptar)$, the support of $Q$ and $\Ptar$. It holds that 
\begin{align}
  \frac{\delta \LipKL(Q\|\Ptar)}{\delta Q}(x)
=\phi^*(x),
  \label{eq:first-variation}
\end{align}
for $x \in \mathrm{supp}(Q) \cup\mathrm{supp} (\Ptar)$.

If $Q$ has a finite first moment (i.e. $Q$ satisfies Assumption \ref{Assump}(2)), then both $D^{L}_{\mathrm{KL}}
(Q\|\Ptar)$ and its first variational derivative exist finitely, without any further assumptions on $\Ptar$, 
such as absolute continuity between $Q$ and $\Ptar$,  or any finite moment assumptions on $\Ptar$ (see 
\cite[Theorem 2, Theorem 3]{chen2025robust}). This property makes the Lipschitz-regularized KL divergence a 
proper choice over other well-known objectives such as the KL divergence or the Wasserstein metrics for stably 
learning a broad class of target distributions, including heavy-tailed distributions. However, due to the 
scarcity of samples in the tail region, algorithms based on Lipschitz-regularized KL divergence exhibit 
premature saturation. This motivates penalizing the Lipschitz-regularized KL divergence with a tail-sensitive 
term that mitigates the impact of the sample scarcity in the tail region and remedies the premature vanishing 
velocity issue by introducing an additional velocity component in the tail region, as we establish in 
\Cref{thm:vel}. 

\paragraph{Conditional Value-at-Risk (CVaR)}  
Using a spectral risk measure that assigns a greater weight to the tail region can be used to design a fine-
tuning loss functional that overcomes the limitations due to sample scarcity. While many such measures are 
admissible for this purpose \cite{acerbi2002portfolio}, we adopt the CVaR, which is both tail-sensitive and 
computationally tractable (due to its variational representation via the Rockafellar–Uryasev formula 
\cite{Rock2002CVaR}).

Let $g: \R^d  \rightarrow [0,\infty)$ be a risk function, and $Q \in \mathcal{Q}^g$. Denote the cumulative 
distribution function of $g$ under $Q$ by $\Psi^{Q,g}$. Given a parameter $0 < \alpha < 1$, the Value-at-Risk 
(VaR) of $g$, at level $\alpha$, i.e., the $\alpha^{\mathrm{th}}$ quantile, under $Q$ is given by  
\begin{align}
\label{equation:ValueAtRisk}
\mathrm{VaR}_{\alpha}^{Q, g}  := \inf_{c \in \mathbb{R}}\{ c : \Psi^{Q,g}(c) \geq \alpha \}.
\end{align}
Since $\Psi^{Q,g}(c) $ is a non-decreasing and right-continuous function of $c$, the infimum can be attained. 
We denote the Conditional Value-at-Risk (CVaR) of the risk function $g$ under $Q$ at level $\alpha \in (0,1)$ 
by $\mathrm{CVaR}^{Q,g}_{\alpha} $.
For a random variable $g(X)$ under a general distribution, including an empirical distribution supported on 
finite samples, we have
\begin{align}
\mathrm{CVaR}^{Q,g}_{\alpha}
:= \beta \VaR_\alpha^{Q,g}
+(1-\beta)
\overline{\mathrm{CVaR}}^{Q,g}_{\alpha},
\label{eq:cvar_convex_combination}
\end{align}
where $\beta = \frac{\Psi^{Q,g}(\VaR_\alpha^{Q,g}) - \alpha}{1 - \alpha}$ and $\overline{\mathrm{CVaR}}^{Q,g}_{\alpha}
:= \mathbb{E}_Q[g(X) \mid g(X) > \VaR_\alpha^{Q,g}]$.
If the random variable $g(X)$ is continuous, then we have
\begin{align}
\mathrm{CVaR}^{Q,g}_{\alpha}= \mathbb{E}_Q[g(X) \mid g(X) \geq \VaR_\alpha^{Q,g}] .
\end{align}
Thus, CVaR, by definition, focuses on the tail region of the distribution. 

\paragraph{Rockafellar-Uryasev formulation}
CVaR has the following equivalent formulation for a random variable $g(X)$ under a general distribution $Q$, 
known as the Rockafellar-Uryasev formulation \cite{Rock2002CVaR}  
\begin{align}
\mathrm{CVaR}^{Q,g}_{\alpha}
&= \inf_{y\in \mathbb{R}} 
F^{Q,g}_{\alpha}(y),
\label{eq:cvar_definition}
\end{align}
where $F^{Q,g}_{\alpha}(y) $ is defined as
\begin{align}
\label{Eq:CVaR:Min:F}
F^{Q,g}_{\alpha}(y) =  y + \frac{1}{1-\alpha}  \mathbb{E}_{Q} [(g(\cdot)-y)^+].
\end{align}
Consider the quantity
\begin{align}
\label{Eq:VaR:Bar}
\overline{\mathrm{VaR}}^{Q,g}_{\alpha} := \inf\{ c : \Psi^{Q,g}(c) > \alpha \},
\end{align}
then the set of minimizers of \eqref{eq:cvar_definition}, denoted by $T(Q)$, is $[ \mathrm{VaR}^{Q,g}_{\alpha}, 
\overline{\mathrm{VaR}}^{Q,g}_{\alpha}]$ (see \cite[Theorem 10]{Rock2002CVaR} for a proof).

\subsection{Properties of the loss functional \texorpdfstring{$\Fg$}{F\string^{CVaR}}}
\label{Sec:CVaR:Loss:Sub}
The loss functional $\Fg$ has several properties that make it suitable for learning heavy-tailed distributions. 
The following proposition establishes that the functional $\Fg(Q; P^{\text{tar}})$ is a divergence. Hence, the 
target distribution $\Ptar$ is its unique global minimizer. 
\begin{Proposition}[Divergence property]
\label{Prop:Div:Property}
Let Assumption \ref{Assump} hold. Then, the loss functional $\Fg$ defined in \eqref{Eq:penalized:loss:functional} 
satisfies $\Fg(Q; \Ptar) \geq 0$ for all $Q \in \mathcal{Q}^g$ and we have $\Fg(Q; \Ptar) = 0$ if and only if 
$Q=\Ptar$. That is, $\Ptar$ is the unique global minimizer of the optimization problem 
$$\inf_{Q \in \mathcal{Q}^g}\Fg (Q; \Ptar).$$ 
\end{Proposition}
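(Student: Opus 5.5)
The argument splits $\Fg$ into its two summands and uses the divergence property of the Lipschitz-regularized KL divergence. The squared CVaR term is nonnegative and vanishes whenever $Q=\Ptar$, so it plays no role in identifying the minimizer.

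\emph{Nonnegativity.} For the first summand, take $\phi\equiv 0\in\Gamma_L$ in the variational representation \eqref{equation:Variational_KL_Lip}. This gives $\LipKL\QPtar \ge \E_Q[0]-\log \E_{\Ptar}[e^0]=0$. Equivalently, in the infimal convolution \eqref{Eq:Lip:KL:Def}, both $D_{KL}(\gamma\|\Ptar)$ and $L\,\mathcal W_1(Q,\gamma)$ are nonnegative. Under Assumption \ref{Assump}(1), both CVaR values are finite, so $\lambda(\Delta C)^2\ge 0$ is well defined. Hence $\Fg\ge0$.

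\emph{Zero at the target.} If $Q=\Ptar$, choose $\gamma=\Ptar$ in \eqref{Eq:Lip:KL:Def}. This shows $\LipKL(\Ptar\|\Ptar)\le 0$, and combined with nonnegativity it equals $0$. Trivially $\Delta C(\Ptar;\Ptar)=0$, so $\Fg(\Ptar;\Ptar)=0$.

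\emph{Zero only at the target.} Conversely, suppose $\Fg(Q;\Ptar)=0$. Since both summands are nonnegative, each vanishes; in particular $\LipKL\QPtar=0$. I would then invoke the divergence property of the Lipschitz-regularized KL divergence established in \cite{dupuis2022formulation,Birrell2022fGamma}: for $Q$ with finite first moment, $\LipKL\QPtar=0$ iff $Q=\Ptar$.

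If a self-contained argument is wanted, this is the step I expect to be the main obstacle. The infimum in \eqref{Eq:Lip:KL:Def} must be attained, which uses lower semicontinuity and tightness of the KL sublevel sets $\{\gamma: D_{KL}(\gamma\|\Ptar)\le C\}$. It also needs lower semicontinuity of $\mathcal W_1(Q,\cdot)$, which is delicate since $\Ptar$ may lack a first moment. Alternatively, attainment follows from the existence of the optimizer $\gamma^*$ discussed above \cite{chen2025robust}.

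Given a minimizer $\gamma^*$ with $D_{KL}(\gamma^*\|\Ptar)+L\,\mathcal W_1(Q,\gamma^*)=0$, both terms vanish. Then $D_{KL}(\gamma^*\|\Ptar)=0$ gives $\gamma^*=\Ptar$, and $\mathcal W_1(Q,\gamma^*)=0$ gives $Q=\gamma^*$, so $Q=\Ptar$.

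A cleaner route avoids attainment altogether by using the variational formula directly. If $Q\neq\Ptar$, there is a bounded $1$-Lipschitz $h$ with $\E_Q h\ne\E_{\Ptar}h$, since such functions determine measures. Taking $\phi=\epsilon h\in\Gamma_L$ for small $\epsilon$ with the appropriate sign and expanding
\[
\epsilon\E_Q h-\log\E_{\Ptar}e^{\epsilon h}=\epsilon(\E_Q h-\E_{\Ptar}h)+O(\epsilon^2)
\]
gives a strictly positive value, so $\LipKL\QPtar>0$. This contradiction yields $Q=\Ptar$.

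I would present this second route as the main proof, since it needs only Assumption \ref{Assump} and boundedness of $h$, and no moments of $\Ptar$. Uniqueness of the global minimizer over $\mathcal Q^g$ then follows immediately.
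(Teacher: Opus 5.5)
Your proof is correct. Its top-level structure matches the paper's: split $\Fg$ into $\LipKL\QPtar$ and the nonnegative term $\lambda(\Delta C)^2$, so that $\Fg=0$ forces $\LipKL\QPtar=0$, and then apply the divergence property of the Lipschitz-regularized KL divergence.

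The difference is in that last step. The paper simply cites \cite[Theorem 8]{Birrell2022fGamma} for ``$\LipKL\QPtar\ge 0$ with equality iff $Q=\Ptar$.'' You instead prove it directly from the dual formula \eqref{equation:Variational_KL_Lip}. Taking $\phi\equiv 0$ gives nonnegativity. If $Q\neq\Ptar$, you take a bounded $1$-Lipschitz $h$ that separates $Q$ and $\Ptar$, and scale it to $\phi=\epsilon h$ with $|\epsilon|\le L$ and the right sign. This gives $\epsilon(\E_Q h-\E_{\Ptar}h)+O(\epsilon^2)>0$, where the remainder is uniform because $h$ is bounded.

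Your route buys self-containedness and clarity about hypotheses. It uses only the fact that bounded Lipschitz functions determine measures. It needs no moments of $\Ptar$, since every integral involved is of a bounded function. It also sidesteps the attainment and lower-semicontinuity issues in the infimal convolution \eqref{Eq:Lip:KL:Def}, which you correctly flag as delicate.

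The price is that it still rests on the Dupuis--Mao duality \eqref{equation:Variational_KL_Lip}, which the paper takes as given. So the cited content has shifted rather than disappeared. The paper's one-line citation is shorter.

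You can drop your first, attainment-based route entirely; the second route stands on its own.
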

\begin{proof}
The Lipschitz-regularized KL divergence $D_{KL}^L(Q \| P^{\text{tar}})$ satisfies the divergence property,  
i.e., $D_{KL}^L(Q \| \Ptar) \geq 0$ for all $Q \in \mathcal{P}(\R^d)$ and $D_{KL}^L(Q \| \Ptar) = 0$ if and 
only if $Q=\Ptar$ (see \cite[Theorem 8]{Birrell2022fGamma} for the proof).  This, combined with the fact that  
$ (\Delta C (Q; \Ptar))^2 \geq 0$ for all $Q \in \mathcal{Q}^g$, proves that  $\Fg(Q; \Ptar) \geq 0$  for all 
$Q \in \mathcal{Q}^g$ and $\Fg(Q; \Ptar) = 0$  if and only if $Q=\Ptar$. That is, $\Ptar$ is the unique global 
minimizer of the optimization problem $\inf_{Q \in \mathcal{Q}^g}\Fg (Q; \Ptar)$.
\end{proof}

We note that the term $(\Delta C(Q; \Ptar))^2$ alone does not satisfy the divergence property since infinitely 
many distributions $Q$ share the same CVaR as the target distribution $\Ptar$. It is the Lipschitz-regularized 
KL divergence, $\LipKL \QPtar$, that endows the loss functional \eqref{Eq:penalized:loss:functional} with the 
divergence property, and thereby ensures that the minimizer of $\Fg$ uniquely identifies the target 
distribution. Moreover, since $\LipKL \QPtar$ directly compares $Q$ and $\Ptar$ as a divergence, $\LipKL 
\QPtar$ is capable of capturing structural properties of the target distribution $\Ptar$ that may not be 
reflected by the scalar statistic $\CVaR_\alpha^{Q,g}$. 
Thus, the two components of the loss functional play distinct and complementary roles:
\begin{itemize}
    \item The Lipschitz-regularized KL divergence \textit{enforces the divergence property} and 
    \textit{encourages the capture of structural properties of the target distribution $\Ptar$}.  
    \item The CVaR component \textit{remedies the issue of premature vanishing velocity} by introducing an 
    additional velocity component in the tail region.
\end{itemize}
Hence, both the Lipschitz-regularized KL divergence and the CVaR component are necessary in the loss functional 
$\Fg$ for accurately learning heavy-tailed targets.

\section{CVaR-penalized  Wasserstein gradient flows}
\label{Sec:Grad:Flow}

For a given loss functional $\mathcal{F}$, if the variational 
derivative of $\mathcal{F}$ with respect to the generated distribution $Q$, denoted by $\frac{\delta \mathcal{F}
(Q; P^{\text{tar}})}{\delta Q}$, exists,
its Wasserstein gradient flow is formulated as
\begin{align}
\label{Eq:gen:flow}
\begin{aligned}
&    \partial_t Q_t - \nabla\cdot\Big(Q_t \nabla_x\Big(
    \frac{\delta \mathcal{F}(Q;P^{\text{tar}})}{\delta Q}(x)
    \Big)\Big|_{Q=Q_t}\Big) = 0,  \quad t>0, \quad Q_0 = Q^{\mathrm{ref}} 
\end{aligned}
\end{align}
where $Q_t$ denotes the evolving distribution at time $t$ and $Q^{\mathrm{ref}}$ is the initial distribution. In line with our fine-tuning framework, throughout this paper, we take $Q^{\mathrm{ref}}=\Ppre$. 

For $t>0$, denote the velocity field of 
\eqref{Eq:gen:flow} by $v_{Q_t}:= -\nabla_x\left( \frac{\delta \mathcal{F}(Q;P^{\text{tar}})}{\delta Q}\right)$. The flow \eqref{Eq:gen:flow} terminates if its velocity field dies out, i.e., when $v_{Q_{T^*}}=0$ for some $T^*>0$.  We note that, similar to the result in  \cite[Theorem 2.5]{gu2024lipschitz}, if the trajectory of distributions $\{Q_{t}\}_{t\ge 0}$ in \eqref{Eq:gen:flow} is sufficiently smooth,  we formally have the energy dissipation identity 
\begin{align}\label{eq:F-dissipation}
   \frac{\mathrm{d}}{d t} \mathcal{F}(Q;P^{\text{tar}})
=
   - \E_{Q_{t}}\bigl[\|v_{Q_t}\|^{2}\bigr]
 =
   -2\mathcal{K}(t),
\end{align}
where $ \mathcal{K}(t):= \frac{1}{2}\int \|v_{Q_t}\|^{2}\mathrm{d}Q_{t}$ denotes the kinetic energy. At the terminal time $T^*$ with $v_{Q_{T^*}}=0$, we have $\mathcal{K}(T^*)=0$, and by the energy dissipation identity \eqref{eq:F-dissipation} $\frac{\mathrm{d}}{d t} \mathcal{F}(Q_{T^*};\Ptar)=0$. That is, $Q_{T^*}$ is a critical point of $\Fg(\cdot;\Ptar)$, motivating the use of Wasserstein gradient flows for minimizing the loss functional $\mathcal{F}$. Wasserstein gradient flows possess several properties, closely tied to the choice of the loss functional $\mathcal{F}$, that make them a well-suited fine-
tuning framework for learning heavy-tailed targets:

\begin{itemize}
\item Wasserstein gradient flows can be initialized from any reference distribution, unlike frameworks such as 
CNFs or SGMs that require a specific source distribution class (e.g., Gaussian); this lets us fine-tune pre-
trained models directly, utilizing the samples of $\Ppre$ alone, without access to its internal architecture.
\item The velocity field of the Wasserstein gradient flow of a loss functional $\mathcal{F}$ is determined by 
the variational derivative (or subgradients) of $\mathcal{F}$. Thus, the loss functional $\mathcal{F}$ can be 
designed to embed crucial properties into the velocity function, and thereby into the transport map. In our 
work, we use this property to design transport maps that are not necessarily Lipschitz continuous, which is 
crucial for learning heavy-tailed targets.
\item The learning time horizon of a Wasserstein gradient flow is not pre-determined. Rather, the 
time horizon of a Wasserstein gradient flow depends on both the choice of loss functional $\mathcal{F}$ and the 
target distribution $\Ptar$, since the  velocity field of the 
Wasserstein gradient flow is determined by the variational derivative of the loss functional $\mathcal{F}(Q; P^{\text{tar}}$).
\end{itemize}

One can leverage these favorable properties of Wasserstein gradient flows to design a fine-tuning framework for 
learning heavy-tailed targets with a suitable loss functional $\mathcal{F}$. As discussed in 
\Cref{Sec:CVaR:Loss}, we adopt the loss functional $\Fg$ in \eqref{Eq:penalized:loss:functional}, resulting in 
the CVaR-penalized  Wasserstein gradient flow. CVaR-penalized  Wasserstein gradient flow mitigates the drawback 
of Lipschitz-regularized Wasserstein
gradient flows by introducing an additional velocity component in the tail region that revives the otherwise 
vanishing velocity field (see \Cref{Sec:Velocity} for the explicit derivation of the corresponding velocity 
field).  
\subsection{Variational subgradients}
The velocity field of the Wasserstein gradient flow of a loss functional $\mathcal{F}$ is defined using its 
variational derivative, when it exists. However, as we prove in \Cref{Theorem:VD:Loss}(3), the variational derivative of $\Fg(Q; P^{\text{tar}})$ does 
not exist, in general. This is an extension of the fact that the variational derivative of $\CVaR^{Q,g}_{\alpha}$ exists if and only if $\mathrm{VaR}^{Q,g}_{\alpha} = 
\overline{\mathrm{VaR}}^{Q,g}_{\alpha}$ (see \Cref{Thm:VD:CVaR}(4) for details). This condition fails for a given $\alpha$ whenever $\Psi^{Q,g}(c)=\alpha$ on a nonempty interval of $c$; for empirical measures, whose CDF $\Psi^{Q,g}$ is a step function, such $\alpha$ always exist. Therefore, we instead consider the 
variational subgradients of both $\CVaR^{Q,g}_{\alpha}$ and therefore $\Fg(Q; P^{\text{tar}})$, defined via 
Clarke's generalized gradients \cite{clarke1990optimization}. To derive the variational subgradients of 
$\CVaR^{Q,g}_{\alpha}$ and  $\Fg(Q; P^{\text{tar}})$, we consider perturbations of the probability measure $Q$. 

Let $\rho$ be a signed measure and denote the probability measure $Q$ perturbed by $\epsilon\rho$ as $Q^\epsilon:=Q+\epsilon\rho\in\PR$, for $\epsilon \neq 0$.
A signed measure, $\rho$ is said to be a \emph{right-admissible} perturbation at $Q$ if  $\int_{\R^d} d \rho=0$,
$\int_{\R^d} d|\rho|, \int \|x\| \,d |\rho|<\infty$, and $Q^\epsilon \in\PR$ for 
small $\epsilon>0$. Similarly,  $\rho$ is  a \emph{left-admissible} perturbation at $Q$ if $-\rho$ is right-admissible. We call $\rho$ an admissible perturbation at $Q$ if $\rho$ is both right and left-admissible at 
$Q$. 

Let $Q \mapsto J(Q)$ be a locally Lipschitz continuous functional (with respect to the Wasserstein-$1$ metric) 
on $\PR$. Then, the Clarke generalized directional derivative  \cite{clarke1990optimization} of $J$ at $Q$ for 
an admissible perturbation $\rho$ is defined by
\begin{equation*}
  J^\circ(Q;\rho):=\limsup_{Q'\to Q,\ t\downarrow0}\frac{J(Q'+t\rho)-J(Q')}{t}.  
\end{equation*}
The \emph{Clarke subdifferential} of $J$ at $Q$ is defined as
\begin{equation*}
\partial J(Q):=\Big\{\xi:\ J^\circ(Q;\rho)\ge\textstyle\int\xi\,d\rho\ \ \text{for all admissible perturbations }\rho\Big\},
\end{equation*}
and its elements are referred to as the first variational subgradients of $J$ at $Q$. We note that we adopt the 
general terminology, variational subgradients, for Clarke's generalized gradients, whereas for concave 
functionals such as CVaR, Clarke's generalized gradients are also referred to as supergradients. We provide 
definitions and relevant background on Clarke's generalized gradients in Appendix \ref{app:clarke}. 

\paragraph{Locally Lipschitz continuity with respect to the Wasserstein-$1$ metric}
To define Clarke's generalized gradients, the functional $J$ should be locally Lipschitz continuous with 
respect to some measure in $\PR$, such that $J^\circ(Q;\rho)$ is finite.  Since the perturbed probability 
measure $Q^\epsilon = Q+\epsilon\rho$ corresponds to transporting probability mass, Wasserstein metrics that 
quantify the cost of such transportation are the natural metrics for this purpose (as opposed to metrics 
insensitive to spatial displacement, such as the total variation norm). The Wasserstein gradient flow 
\eqref{Eq:gen:flow} is classically formulated in the Wasserstein-$2$ space $(\mathcal{P}_2(\R^d), 
\mathcal{W}_2)$ \cite{jordan1998variational, otto2001geometry}, so one may establish the Lipschitz continuity 
with respect to the Wasserstein-$2$ metric. However, since $\mathcal{W}_1 \le \mathcal{W}_2$ whenever both are 
defined, Lipschitz continuity with respect to the Wasserstein-$1$ metric implies the Lipschitz continuity with 
respect to the Wasserstein-$2$ metric. In our work, we adopt the stronger Lipschitz continuity with respect to 
the Wasserstein-$1$ metric. 
\begin{Lemma}
    Let $J(Q)$ be locally Lipschitz continuous with respect to the Wasserstein-$1$ metric on $\PR$. Then for any admissible perturbation $\rho$, the Clarke generalized directional derivative $J^\circ(Q;\rho)$ exists finitely. 
\end{Lemma}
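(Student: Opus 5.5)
The argument is a direct transcription of the classical finite-dimensional fact that Clarke's directional derivative is bounded by the Lipschitz constant times the norm of the direction. Fix $Q$ and an admissible perturbation $\rho$. By local Lipschitz continuity there are a radius $r>0$ and a constant $K<\infty$ such that
\begin{equation*}
|J(Q_1)-J(Q_2)|\le K\,\mathcal{W}_1(Q_1,Q_2)\quad\text{whenever } \mathcal{W}_1(Q_i,Q)<r,\ i=1,2.
\end{equation*}
The plan is to show that for all $Q'$ sufficiently close to $Q$ and all sufficiently small $t>0$, the difference quotient $\bigl(J(Q'+t\rho)-J(Q')\bigr)/t$ is bounded in absolute value by a constant independent of $Q'$ and $t$. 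The $\limsup$ defining $J^\circ(Q;\rho)$ is then a limsup of a bounded family, so it is a finite real number.

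\textbf{Step 1: a Wasserstein bound for $t\rho$.} I will bound $\mathcal{W}_1(Q'+t\rho,Q')$ linearly in $t$. Use the Kantorovich--Rubinstein duality: for any $1$-Lipschitz $f$, normalized by $f(0)=0$ (allowed since $\int d\rho=0$), we have $|f(x)|\le\|x\|$. Hence
\begin{equation*}
\Bigl|\int f\,d(Q'+t\rho)-\int f\,dQ'\Bigr| = t\Bigl|\int f\,d\rho\Bigr|\le t\int\|x\|\,d|\rho| =: t\,M_\rho .
\end{equation*}
Here $M_\rho<\infty$ by admissibility. Taking the supremum over $f$ gives $\mathcal{W}_1(Q'+t\rho,Q')\le tM_\rho$. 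This uses only the zero-mass and first-moment conditions on $\rho$, not the particular $Q'$.

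\textbf{Step 2: staying in the Lipschitz ball.} For $\mathcal{W}_1(Q',Q)<r/2$ and $t<r/(2M_\rho)$, the triangle inequality places both $Q'$ and $Q'+t\rho$ within distance $r$ of $Q$. On that ball the Lipschitz estimate yields
\begin{equation*}
\bigl|J(Q'+t\rho)-J(Q')\bigr|/t\le KM_\rho .
\end{equation*}
Consequently $|J^\circ(Q;\rho)|\le KM_\rho<\infty$.

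\textbf{Main obstacle: well-definedness of $Q'+t\rho$.} The difficulty is that $Q'+t\rho$ must be a probability measure. Admissibility guarantees this only at $Q$ itself, and the negative part of $\rho$ need not be dominated by an arbitrary nearby $Q'$. I will handle this by interpreting the $\limsup$, as is implicit in the definition, over those pairs $(Q',t)$ for which $Q'+t\rho\in\PR$. This set is nonempty near $(Q,0)$, since it contains $Q'=Q$ and small $t>0$ by right-admissibility, so the limsup is not $-\infty$. The uniform bound $KM_\rho$ holds on the whole admissible set. Hence the limsup is finite, and in fact $J^\circ(Q;\rho)\ge -KM_\rho$ as well.
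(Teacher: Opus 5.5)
Your proof is correct and follows the same route as the paper: bound the difference quotient by the local Lipschitz constant times $\mathcal{W}_1(Q'+t\rho,Q')/t$, then use Kantorovich--Rubinstein duality with the normalization $\psi(0)=0$ (permitted by $\int d\rho=0$) to obtain $|J^\circ(Q;\rho)|\le \Lip(J,Q)\int\|x\|\,d|\rho|$. Your additional checks are points the paper's proof passes over silently: that $Q'$ and $Q'+t\rho$ both stay in the Lipschitz neighborhood, and that the $\limsup$ ranges over the nonempty set of pairs with $Q'+t\rho\in\PR$.
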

\begin{proof}
Let $J(Q)$ be locally Lipschitz continuous with respect to the Wasserstein-$1$ metric on $\PR$. Let $\Lip(J,Q)$ denote the Lipschitz constant of the functional $J$ in a neighborhood of $Q$, Then for an admissible perturbation $\rho$, we have 
    \begin{align*}
  |J^\circ(Q;\rho)|&=\left|\limsup_{Q'\to Q,\ t\downarrow0}\frac{J(Q'+t\rho)-J(Q')}{t}\right| \\
  &\leq \limsup_{Q'\to Q,\ t\downarrow0} \Lip(J,Q) \frac{\mathcal{W}_1(Q'+t\rho,Q')}{t} \\
  &= \limsup_{Q'\to Q,\ t\downarrow0} \Lip(J,Q) \frac{\sup_{\psi \in \Gamma_1} \int \psi\, d(Q'+t\rho-Q') }{t} 
  \\
  &= \Lip(J,Q)\sup_{\psi \in \Gamma_1} \int \psi\, d\rho  
    \\
  &= \Lip(J,Q)\sup_{\psi \in \Gamma_1} \int [\psi - \psi(0)] d\rho  \quad (\text{since} \int d \rho = 0) 
     \\
  &\leq  \Lip(J,Q)\sup_{\psi \in \Gamma_1} \int [\psi - \psi(0)] d|\rho|      \\
  &\leq  \Lip(J,Q) \int \|x\| d|\rho| <\infty,
\end{align*}
where the second  equality follows from the Kantorovich-Rubinstein duality \cite{kantorovich1958space},
$$\sup_{\psi \in \Gamma_1} \int \psi\, d(Q_1-Q_2) = \mathcal{W}_1(Q_1,Q_2).$$
\end{proof}

We need $\CVaR_\alpha^{Q,g}$ and $Q 
\mapsto \Fg(Q;\Ptar)$ to be locally Lipschitz continuous with respect to the Wasserstein-$1$ metric,
for their variational subgradients to be well-defined. To this end, we have the following proposition, whose 
proof is given in Appendix \ref{App:Loc:Lip}.

\begin{Proposition}[Local Lipschitz continuity]
\label{lem:W1-lipschitz}
Assume that $g$ is $\Lip(g)$-Lipschitz continuous and Assumption \ref{Assump} holds. Then we have:
\begin{enumerate}
\item The risk measure $Q \mapsto \CVaR_\alpha^{Q,g}$ is Lipschitz continuous with respect to the 
Wasserstein-$1$ metric, i.e., 
\begin{align}
    \label{Eq:Loc:Lip:CVaR}
   \big|\CVaR_\alpha^{Q_1,g} - \CVaR_\alpha^{Q_2,g}\big| \le \dfrac{\Lip(g)}{1-\alpha}\, \mathcal{W}_1(Q_1,Q_2),
\end{align}
for all $Q_1, Q_2 \in \mathcal{Q}^g$.
\item The functional $Q \mapsto \Fg(Q;\Ptar)$ is locally Lipschitz continuous on $\mathcal{Q}^g$ with respect to $\mathcal{W}_1$.
\end{enumerate}
\end{Proposition}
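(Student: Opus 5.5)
For part (1), the plan is to work directly from the Rockafellar--Uryasev representation \eqref{eq:cvar_definition}. Fix $Q_1,Q_2\in\mathcal{Q}^g$ and $y\in\R$. The map $u\mapsto (u-y)^+$ is $1$-Lipschitz, so $x\mapsto (g(x)-y)^+$ is $\Lip(g)$-Lipschitz. If $\mathcal{W}_1(Q_1,Q_2)<\infty$, Kantorovich--Rubinstein duality then gives
\begin{align*}
\big|F^{Q_1,g}_\alpha(y)-F^{Q_2,g}_\alpha(y)\big| = \frac{1}{1-\alpha}\Big|\E_{Q_1}[(g-y)^+]-\E_{Q_2}[(g-y)^+]\Big| \le \frac{\Lip(g)}{1-\alpha}\,\mathcal{W}_1(Q_1,Q_2)
\end{align*}
uniformly in $y$. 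Both expectations are finite because $\E_{Q_i}[g]<\infty$, so the difference is legitimate. If instead $\mathcal{W}_1(Q_1,Q_2)=\infty$, the bound \eqref{Eq:Loc:Lip:CVaR} holds trivially.

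Since an infimum of a family of functions is stable under uniform perturbation, $|\inf_y F^{Q_1,g}_\alpha-\inf_y F^{Q_2,g}_\alpha|\le \sup_y|F^{Q_1,g}_\alpha-F^{Q_2,g}_\alpha|$. This yields \eqref{Eq:Loc:Lip:CVaR}. Both infima are finite: $F^{Q,g}_\alpha(y)\ge y+\frac{1}{1-\alpha}(\E_Q[g]-y)\ge \E_Q[g]\ge 0$ for $y\le 0$, so the infimum is attained on $[\VaR,\VaRbar]$ and no unboundedness issue arises.

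For part (2), the loss splits as $\Fg=\LipKL(\cdot\|\Ptar)+\lambda(\Delta C)^2$, and I would treat the two terms separately.

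\emph{First term.} From the variational representation \eqref{equation:Variational_KL_Lip}, for each $\phi\in\Gamma_L$ the map $Q\mapsto \E_Q[\phi]-\log\E_{\Ptar}[e^\phi]$ is $L$-Lipschitz in $\mathcal{W}_1$, again by Kantorovich--Rubinstein. Taking the supremum preserves this, provided the supremum is finite, so $|\LipKL(Q_1\|\Ptar)-\LipKL(Q_2\|\Ptar)|\le L\,\mathcal{W}_1(Q_1,Q_2)$. Finiteness follows from Assumption \ref{Assump}(2) and \cite[Theorem 2]{chen2025robust}. The one delicate point, and the step I would check most carefully, is that a supremum of $L$-Lipschitz maps is only Lipschitz if the values are finite and the comparison does not produce $\infty-\infty$. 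I would handle this by noting that the $\phi$ achieving near-optimality for $Q_1$ gives a lower bound for $Q_2$ shifted by $L\mathcal{W}_1$, and then symmetrizing. As an alternative route, the infimal convolution \eqref{Eq:Lip:KL:Def} together with the triangle inequality for $\mathcal{W}_1$ gives the same bound directly.

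\emph{Second term.} $\Delta C(\cdot;\Ptar)$ is globally $\frac{\Lip(g)}{1-\alpha}$-Lipschitz by part (1), since $\CVaR_\alpha^{\Ptar,g}$ is a constant. Using $a^2-b^2=(a+b)(a-b)$, for $Q_1,Q_2$ in a $\mathcal{W}_1$-ball of radius $r$ around $Q$,
\begin{align*}
\big|(\Delta C(Q_1))^2-(\Delta C(Q_2))^2\big|\le \Big(2|\Delta C(Q)|+\tfrac{2r\Lip(g)}{1-\alpha}\Big)\frac{\Lip(g)}{1-\alpha}\,\mathcal{W}_1(Q_1,Q_2).
\end{align*}
This is where only local (not global) Lipschitz continuity appears, with a constant depending on $|\Delta C(Q)|$. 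Adding the two bounds gives local Lipschitz continuity of $\Fg$ on $\mathcal{Q}^g$.
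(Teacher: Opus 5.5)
Your proposal is correct and follows essentially the same route as the paper. For part (1) you use a uniform-in-$y$ Kantorovich--Rubinstein bound on $F^{Q,g}_\alpha(y)$ together with stability of the infimum under uniform perturbation. For part (2) you use the $L\,\mathcal{W}_1$ bound on $\LipKL$ from the variational representation, plus the factorization $a^2-b^2=(a+b)(a-b)$ with $|\Delta C|$ bounded on a $\mathcal{W}_1$-ball; your constant matches the paper's $2M_R\Lip(g)/(1-\alpha)$. Your near-optimal-$\phi$ argument for the supremum is a slightly more careful version of the paper's step $\sup A^{Q_1}-\sup A^{Q_2}\le\sup(A^{Q_1}-A^{Q_2})$: it avoids the undefined expression $-\infty-(-\infty)$ that arises for $\phi$ with $\E_{\Ptar}[e^\phi]=\infty$, which can happen for heavy-tailed targets.
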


We restrict the choice of the risk function $g$ to Lipschitz continuous functions in what follows. To derive the variational subgradients of $\CVaR^{Q,g}_\alpha$ and $\Fg(Q; \Ptar)$, we need the 
following lemma, whose proof is given in Appendix \ref{App:Lemma:Interval}.

\begin{Lemma}
\label{Lem:interval:subgradient}
 Let $J(Q)$ be locally Lipschitz continuous with respect to the Wasserstein-$1$ metric on $\PR$. Define the  
 right/left one-sided directional derivatives of $J$ as:
\begin{align}
    D^\pm_\rho  J:=\lim_{\epsilon\to0^\pm} \frac{1}{\epsilon}\big(J(Q^\epsilon)-J(Q)\big).
\end{align}
Assume that for every admissible perturbation $\rho$ at $Q$, the one-sided directional derivatives 
$D^{\pm}_\rho J$ exist and $\int \xi\, d\rho$ lies in the closed interval with endpoints $D^-_\rho J$ and 
$D^+_\rho J$. Then \ $\xi$ is a variational subgradient of $J$ at $Q$.
\end{Lemma}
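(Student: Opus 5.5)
To prove \Cref{Lem:interval:subgradient}, the plan is to check the defining inequality of the Clarke subdifferential, $J^\circ(Q;\rho)\ge\int\xi\,d\rho$, for every admissible perturbation $\rho$ at $Q$. The key observation is that the Clarke generalized directional derivative dominates both one-sided directional derivatives, up to the sign conventions dictated by the $\limsup$. The interval hypothesis on $\int\xi\,d\rho$ then gives the inequality directly.

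\textbf{Step 1: dominating the right derivative.} Fix an admissible $\rho$. By \Cref{lem:W1-lipschitz}-type local Lipschitz continuity and the preceding lemma, $J^\circ(Q;\rho)$ is finite. In the $\limsup$ defining $J^\circ(Q;\rho)$, restrict to the particular choice $Q'=Q$ and $t=\epsilon\downarrow0$. Since $\rho$ is right-admissible, $Q+\epsilon\rho\in\PR$ for small $\epsilon>0$, so this is a legitimate sequence. A $\limsup$ over a larger family dominates the limit along any sub-family, hence
\begin{equation*}
J^\circ(Q;\rho)\ \ge\ \lim_{\epsilon\downarrow0}\frac{J(Q+\epsilon\rho)-J(Q)}{\epsilon}=D^+_\rho J.
\end{equation*}

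\textbf{Step 2: dominating the left derivative.} Now take $Q'=Q-t\rho$, which is a probability measure for small $t>0$ because $\rho$ is left-admissible, and which converges to $Q$ in $\mathcal{W}_1$ as $t\downarrow0$. This convergence holds because $\mathcal{W}_1(Q-t\rho,Q)\le t\int\|x\|\,d|\rho|$, by the same Kantorovich--Rubinstein computation as in the preceding lemma. Then $Q'+t\rho=Q$, and
\begin{equation*}
\frac{J(Q)-J(Q-t\rho)}{t}=\frac{J(Q^{-t})-J(Q)}{-t}\ \longrightarrow\ D^-_\rho J \quad (t\downarrow0),
\end{equation*}
so also $J^\circ(Q;\rho)\ge D^-_\rho J$.

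\textbf{Step 3: conclusion.} Combining Steps 1 and 2 gives $J^\circ(Q;\rho)\ge\max\{D^-_\rho J,D^+_\rho J\}$. By hypothesis, $\int\xi\,d\rho$ lies between $D^-_\rho J$ and $D^+_\rho J$, so $\int\xi\,d\rho\le\max\{D^-_\rho J,D^+_\rho J\}\le J^\circ(Q;\rho)$. Since $\rho$ was an arbitrary admissible perturbation, $\xi\in\partial J(Q)$.

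\textbf{Main obstacle.} The only delicate point is Step 2. There we must confirm that the shifted base points $Q-t\rho$ are admissible in the $\limsup$: they must lie in $\PR$, and they must converge to $Q$ in the topology used for ``$Q'\to Q$'' (namely $\mathcal{W}_1$). This needs both left-admissibility of $\rho$ and the first-moment bound $\int\|x\|\,d|\rho|<\infty$. It is also worth noting that $\int\xi\,d\rho$ must be well defined, which is implicit in the hypothesis.
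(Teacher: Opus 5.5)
Your proof is correct and follows the paper's argument. Restricting the Clarke $\limsup$ to the constant base point $Q'=Q$ gives $J^\circ(Q;\rho)\ge D^+_\rho J$, the shifted base points $Q'_t=Q-t\rho$ give $J^\circ(Q;\rho)\ge D^-_\rho J$, and the interval hypothesis then yields $\int\xi\,d\rho\le\max\{D^-_\rho J,D^+_\rho J\}\le J^\circ(Q;\rho)$. Your verification that $Q-t\rho\to Q$ in $\mathcal{W}_1$, via $\mathcal{W}_1(Q-t\rho,Q)\le t\int\|x\|\,d|\rho|$, is a small improvement: the paper only asserts convergence in total variation, whereas the $\limsup$ is naturally taken in the $\mathcal{W}_1$ topology.
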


\paragraph{Variational subgradients of \texorpdfstring{\boldmath$\CVaR^{Q,g}_\alpha$}{CVaR\string^{Q,g}_alpha}}
To derive the variational subgradients of the loss functional $\Fg(Q;\Ptar)$, we first derive those of
$\CVaR^{Q,g}_\alpha$. This requires $Q^\epsilon \in \Qg$, so that $\CVaR_\alpha^{Q^\epsilon,g}$ is
well-defined, for all small $\epsilon$ of the relevant sign. Since $g$ is non-negative and
Lipschitz continuous, $0 \le g(x) \le g(0)+\Lip(g)\|x\|$, so for any admissible (or merely
right- or left-admissible) $\rho$ at $Q$,
\begin{align}
\label{Eq:g:integrability}
\int g \, d|\rho| \;\le\; g(0)\int d|\rho| \;+\; \Lip(g)\int \|x\| \, d|\rho| \;<\; \infty,
\end{align}
and $Q^\epsilon \in \Qg$ for all small $\epsilon$ such that $Q^\epsilon\in\PR$. Hence no integrability condition on $\rho$ beyond admissibility is needed for $\CVaR_\alpha^{Q^\epsilon,g}$ to be well-defined. 
We begin with the following lemma, whose proof is provided in Appendix 
\ref{app:proof-lemma1}. 

\begin{Lemma}
\label{lemma1}
Assume that $g$ is Lipschitz continuous and $Q \in \Qg$. Let $\rho$ be right-admissible at 
$Q$. Then there exists 
$\epsilon_0>0$ such that 
\begin{enumerate}
    \item The function $(\epsilon,y) \mapsto F^{Q^{\epsilon},g}_{\alpha}(y)$ is jointly continuous in $[0, 
    \epsilon_0] \times  \R$.
    \item The set $ T := \bigcup_{\epsilon \in [0, \epsilon_0]} T(Q^{\epsilon})$ is compact.  
\end{enumerate}
\end{Lemma}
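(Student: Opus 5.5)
Both parts rest on the explicit form of the Rockafellar--Uryasev function
\[
F^{Q^\epsilon,g}_\alpha(y)=y+\tfrac{1}{1-\alpha}\Big(\E_Q[(g-y)^+]+\epsilon\int (g-y)^+\,d\rho\Big),
\]
which is affine in $\epsilon$ for each fixed $y$. Fix any $\epsilon_0>0$ small enough that $Q^\epsilon\in\PR$ for all $\epsilon\in[0,\epsilon_0]$; this is possible by right-admissibility, since $Q^{\epsilon}=(1-\epsilon/\epsilon_1)Q+(\epsilon/\epsilon_1)Q^{\epsilon_1}$ is a convex combination whenever $0\le\epsilon\le\epsilon_1$. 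By \eqref{Eq:g:integrability} and $Q\in\Qg$, both $\E_Q[g]$ and $\int g\,d|\rho|$ are finite.

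For part 1, use that $y\mapsto (g(x)-y)^+$ is $1$-Lipschitz and bounded by $g(x)+|y|$. Then
\[
|F^{Q^{\epsilon},g}_\alpha(y)-F^{Q^{\epsilon'},g}_\alpha(y')|\le |y-y'|\Big(1+\tfrac{1+\epsilon_0\int d|\rho|}{1-\alpha}\Big)+\tfrac{|\epsilon-\epsilon'|}{1-\alpha}\Big(\int g\,d|\rho|+|y|\int d|\rho|\Big).
\]
This bound gives joint continuity on $[0,\epsilon_0]\times\R$; in fact the map is locally Lipschitz.

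For part 2, I would first bound $T(Q^\epsilon)=[\VaR^{Q^\epsilon,g}_\alpha,\VaRbar^{Q^\epsilon,g}_\alpha]$ uniformly in $\epsilon$.
\begin{itemize}
\item \emph{Lower bound.} Since $g\ge0$, we have $\Psi^{Q^\epsilon,g}(c)=0<\alpha$ for $c<0$, so $\VaR^{Q^\epsilon,g}_\alpha\ge0$.
\item \emph{Upper bound.} By Markov's inequality, $1-\Psi^{Q^\epsilon,g}(c)\le \E_{Q^\epsilon}[g]/c\le (\E_Q[g]+\epsilon_0\int g\,d|\rho|)/c=:M/c$. Hence $\Psi^{Q^\epsilon,g}(c)>\alpha$ once $c>M/(1-\alpha)$, which gives $\VaRbar^{Q^\epsilon,g}_\alpha\le M/(1-\alpha)$.
\end{itemize}
So $T\subset[0,M/(1-\alpha)]$ is bounded.

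Closedness is the main obstacle, since a union of infinitely many intervals need not be closed. I would argue by a Berge-type argument from part 1. Take $y_n\in T(Q^{\epsilon_n})$ with $y_n\to y$, and pass to a subsequence with $\epsilon_n\to\epsilon\in[0,\epsilon_0]$. For any $z\in\R$ we have $F^{Q^{\epsilon_n},g}_\alpha(y_n)\le F^{Q^{\epsilon_n},g}_\alpha(z)$. Letting $n\to\infty$ and using joint continuity gives $F^{Q^\epsilon,g}_\alpha(y)\le F^{Q^\epsilon,g}_\alpha(z)$. Thus $y\in T(Q^\epsilon)\subset T$, so $T$ is closed.

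Being closed and bounded, $T$ is compact. Along the way I would note that each $T(Q^\epsilon)$ is nonempty by \cite[Theorem 10]{Rock2002CVaR}, which is not needed for compactness.
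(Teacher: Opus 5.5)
Your proof is correct. Part 1 is essentially the paper's estimate, and your closedness argument in part 2 (extract $\epsilon_n\to\epsilon$, pass to the limit in $F^{Q^{\epsilon_n},g}_\alpha(y_n)\le F^{Q^{\epsilon_n},g}_\alpha(z)$ using joint continuity) is exactly the paper's. The genuine difference is how you bound $T$.

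\textbf{The paper's boundedness argument.} It uses the total-variation perturbation estimate $|\Psi^{Q^\epsilon,g}(c)-\Psi^{Q,g}(c)|\le\epsilon\int d|\rho|$. It fixes $a<b$ with $\Psi^{Q,g}(a)\le\alpha-2\zeta$ and $\Psi^{Q,g}(b)\ge\alpha+2\zeta$. It then shrinks $\epsilon_0$ to $\min\{\zeta/\int d|\rho|,\bar\epsilon\}$, treating $\int d|\rho|=0$ separately, to get $T(Q^\epsilon)\subseteq[a,b]$.

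\textbf{Your boundedness argument.} You use $g\ge0$ for the lower bound. For the upper bound you use Markov's inequality together with the uniform moment bound $\E_{Q^\epsilon}[g]\le\E_Q[g]+\epsilon_0\int g\,d|\rho|$.

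\textbf{What each route buys.} With your route, $\epsilon_0$ can be any admissibility threshold, with no dependence on $\int d|\rho|$ and no special case. It also gives the explicit enclosure $T\subseteq[0,M/(1-\alpha)]$. It relies on nonnegativity of $g$ and on $\int g\,d|\rho|<\infty$, which you correctly take from \eqref{Eq:g:integrability}.

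The paper's route uses only the total mass of $|\rho|$ in this step. It also carries more information: by taking $a=\VaR^{Q,g}_\alpha-\eta$, $b=\VaRbar^{Q,g}_\alpha+\eta$ and $\zeta$ small, it shows that $T(Q^\epsilon)$ stays in any neighborhood of $T(Q)$ for small $\epsilon$. That is an upper-semicontinuity property of the minimizer set, stronger than boundedness. The proof of Theorem~\ref{Thm:VD:CVaR} does not need it, though, since it re-derives $\bar y\in T(Q)$ directly.

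Your convex-combination remark, $Q^\epsilon=(1-\epsilon/\epsilon_1)Q+(\epsilon/\epsilon_1)Q^{\epsilon_1}$, also justifies a point the paper takes for granted: that $Q^\epsilon\in\PR$ on the whole interval $[0,\epsilon_0]$.
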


The following theorem derives the variational subgradients of $\CVaR_\alpha^{Q,g}$.

\begin{Theorem}[Directional derivatives and variational subgradients of $\CVaR$]
\label{Thm:VD:CVaR}
Assume that $g$ is Lipschitz continuous and $Q$ has finite first moment (hence $Q \in \Qg$). For $y\in\R$, define 
$h^y(x):=\frac{(g(x)-y)^+}{1-\alpha}$.
Then we have the following:
\begin{enumerate}
\item \emph{(Right derivative)} For every right-admissible perturbation $\rho$, the right directional 
derivative exists and is given by
\begin{align}\label{eq:dir}
D^+_\rho \CVaR_\alpha^{Q,g}=\lim_{\epsilon\to0^+}\frac{\CVaR_\alpha^{Q^\epsilon,g}-\CVaR_\alpha^{Q,g}}
{\epsilon}=\min_{y\in T(Q)}\int h^y\,d\rho.  
\end{align}
\item \emph{(Left derivative)}  For every left-admissible perturbation $\rho$, the left directional derivative 
exists and is given by
\begin{align}\label{eq:left}
D^-_\rho \CVaR_\alpha^{Q,g} =\lim_{\epsilon\to0^-}\frac{\CVaR_\alpha^{Q^\epsilon,g}-\CVaR_\alpha^{Q,g}}
{\epsilon}=\max_{y\in T(Q)}\int h^y\,d\rho.
\end{align}
\item \emph{(Variational subgradients)} Let $y \in T(Q)$. Then, for any admissible $\rho$, $\int h^y\,d\rho$ 
lies in $[D^+_\rho\CVaR_\alpha^{Q,g},D^-_\rho\CVaR_\alpha^{Q,g}]$. Consequently, for each $y\in T(Q)$, $h^y$ is 
a variational subgradient of $\CVaR_\alpha^{Q,g}$.
\item \emph{(Two-sided derivative)} The two-sided (Gateaux) derivative for an admissible $\rho$ exists if and 
only if $y\mapsto\int h^y\,d\rho$ is constant on $T(Q)$; this holds for every admissible $\rho$ if and only if 
$T(Q)$ is a singleton set, i.e.\ $\VaR_\alpha^{Q,g}=\VaRbar_\alpha^{Q,g}$, in which case $\CVaR_\alpha^{Q,g}$ 
is Gateaux differentiable at $Q$ with the first variation
\begin{equation*}
\frac{\delta\CVaR_\alpha^{Q,g}}{\delta Q}=h^{\VaR_\alpha^{Q,g}}=\frac{(g-\VaR_\alpha^{Q,g})^+}{1-\alpha}.
\end{equation*}
\end{enumerate}
\end{Theorem}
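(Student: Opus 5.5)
The approach is a Danskin-type envelope argument applied to the Rockafellar--Uryasev representation \eqref{eq:cvar_definition}. The key observation is that $F^{Q^\epsilon,g}_\alpha(y)$ is affine in $\epsilon$:
\[
F^{Q^\epsilon,g}_\alpha(y)=F^{Q,g}_\alpha(y)+\epsilon\int h^y\,d\rho,
\]
and the integral is finite by \eqref{Eq:g:integrability}. Thus $\CVaR_\alpha^{Q^\epsilon,g}=\inf_y\{F^{Q,g}_\alpha(y)+\epsilon\int h^y\,d\rho\}$ is an infimum of affine functions of $\epsilon$, hence concave in $\epsilon$. Moreover, $y\mapsto\int h^y\,d\rho$ is continuous in $y$, in fact $\frac{1}{1-\alpha}\int d|\rho|$-Lipschitz.

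\emph{Step 1 (right derivative).} Let $\rho$ be right-admissible and take $\epsilon_0$ from Lemma~\ref{lemma1}. Then every minimizer set $T(Q^\epsilon)$, $\epsilon\in[0,\epsilon_0]$, lies in the compact set $T$.

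For the upper bound, fix $y\in T(Q)$. Then
\[
\CVaR^{Q^\epsilon}\le F^{Q,g}_\alpha(y)+\epsilon\int h^y\,d\rho=\CVaR^Q+\epsilon\int h^y\,d\rho,
\]
so $\limsup_{\epsilon\downarrow0}\frac{1}{\epsilon}(\CVaR^{Q^\epsilon}-\CVaR^Q)\le\min_{y\in T(Q)}\int h^y\,d\rho$. The minimum is attained because $T(Q)=[\VaR,\VaRbar]$ is compact and the integrand is continuous in $y$.

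For the lower bound, pick $y_\epsilon\in T(Q^\epsilon)\subset T$. Then
\[
\CVaR^{Q^\epsilon}=F^{Q,g}_\alpha(y_\epsilon)+\epsilon\int h^{y_\epsilon}\,d\rho\ge\CVaR^Q+\epsilon\int h^{y_\epsilon}\,d\rho.
\]
Take a sequence $\epsilon_n\downarrow0$ realizing the liminf, and extract a convergent subsequence $y_{\epsilon_n}\to\bar y$ using compactness of $T$. By the joint continuity in Lemma~\ref{lemma1}(1), $F^{Q,g}_\alpha(\bar y)=\lim F^{Q^{\epsilon_n},g}_\alpha(y_{\epsilon_n})=\lim\CVaR^{Q^{\epsilon_n}}=\CVaR^Q$; the last equality follows from the joint continuity together with the bound $\CVaR^{Q^\epsilon}\le\CVaR^Q+O(\epsilon)$ and the inequality just obtained. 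Hence $\bar y\in T(Q)$. Continuity of $y\mapsto\int h^y\,d\rho$ then gives $\liminf\ge\int h^{\bar y}\,d\rho\ge\min_{T(Q)}\int h^y\,d\rho$, which proves \eqref{eq:dir}.

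I expect this compactness and limit-identification step to be the main obstacle. It is exactly where Lemma~\ref{lemma1} is needed, and one must check that the subsequence argument correctly handles the liminf.

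\emph{Step 2 (left derivative).} Apply Step 1 to the right-admissible perturbation $-\rho$ with $\epsilon'=-\epsilon>0$. This gives $D^-_\rho=-D^+_{-\rho}=-\min_{T(Q)}\int h^y\,d(-\rho)=\max_{T(Q)}\int h^y\,d\rho$.

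\emph{Step 3 (subgradients).} For $y\in T(Q)$, $\int h^y\,d\rho$ lies between the min and the max, i.e. in $[D^+_\rho,D^-_\rho]$. Proposition~\ref{lem:W1-lipschitz}(1) supplies local Lipschitz continuity in $\mathcal W_1$, so Lemma~\ref{Lem:interval:subgradient} applies and shows that $h^y$ is a variational subgradient.

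\emph{Step 4 (Gateaux derivative).} The two-sided derivative exists iff $D^+_\rho=D^-_\rho$, i.e. iff the min and max of $y\mapsto\int h^y\,d\rho$ over $T(Q)$ coincide. That is exactly the condition that this map is constant on $T(Q)$.

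If $T(Q)$ is a singleton, this holds for every $\rho$, and the derivative is $\int h^{\VaR}\,d\rho$, giving the stated first variation.

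Conversely, suppose $\VaR<\VaRbar$. Then $\Psi^{Q,g}\equiv\alpha$ on $[\VaR,\VaRbar)$, so $Q$ puts no mass on $g^{-1}((\VaR,\VaRbar))$ but positive mass on $\{g\ge\VaRbar\}$, since $\alpha<1$. Choose $\rho=\delta_{x_1}-\delta_{x_0}$ with $g(x_1)\ge\VaRbar$, $x_1$ in the support of $Q$ so that $-\rho$ is admissible, and $x_0$ with $g(x_0)\le\VaR$ also in the support. Then $\int h^y\,d\rho=(g(x_1)-y)^+/(1-\alpha)$, which is strictly decreasing in $y$ on $T(Q)$.

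Obtaining two-sided admissibility of this $\rho$ needs care. If the support does not contain atoms, I would instead use $\rho=Q|_{A_1}/Q(A_1)-Q|_{A_0}/Q(A_0)$ with $A_1=\{g\ge\VaRbar\}$ and $A_0=\{g\le\VaR\}$, both of which have positive mass. This $\rho$ is admissible for small $|\epsilon|$, and the same monotonicity argument shows the map is not constant on $T(Q)$.
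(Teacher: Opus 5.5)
Your proposal is correct and follows the paper's proof essentially step for step. In the converse of part (4), your perturbation $\rho=Q|_{A_1}/Q(A_1)-Q|_{A_0}/Q(A_0)$ is exactly a positive multiple of the paper's $\varphi_k Q$, because $k=(\VaRbar_\alpha^{Q,g}-\VaR_\alpha^{Q,g})\ind_{A_1}$ holds $Q$-a.e.; your direct evaluation $\int h^y\,d\rho=\frac{1}{1-\alpha}\big(\E_Q[g\mid A_1]-y\big)$ on $T(Q)$ simply replaces the paper's variance identity. Drop the Dirac variant altogether, since $Q\pm\epsilon(\delta_{x_1}-\delta_{x_0})\ge 0$ requires $x_0$ and $x_1$ to be atoms of $Q$ rather than mere support points, and use the $A_0,A_1$ construction unconditionally.
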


\begin{proof}
For any perturbation $\rho$, we define $ \ell_{\rho}(y):=\tfrac{1}{1-\alpha}\int(g-y)^+d\rho=\int h^y\,d\rho$.

(1)  Let $\rho$ be right-admissible and let $\epsilon_0>0$ be given by Lemma \ref{lemma1}. From $Q^\epsilon=Q+\epsilon\rho$, we have
\begin{align}\label{eq:split}
F^{Q^\epsilon,g}_\alpha(y)=F^{Q,g}_\alpha(y)+\epsilon\,\ell_{\rho}(y). 
\end{align}
Thus, it follows that 
\begin{align}
\CVaR_\alpha^{Q^\epsilon,g} &\le \inf_{y\in T(Q)}F^{Q^\epsilon,g}_\alpha(y) \nonumber \\
&= \inf_{y\in T(Q)}\{F^{Q,g}_\alpha(y)+\epsilon\ell_{\rho}(y)\}\nonumber \\
&= \CVaR_\alpha^{Q,g} + \epsilon \min_{y\in T(Q)}\ell_{\rho}(y).
\end{align}
Hence, we have
\begin{align}
\label{equation_VD_lim_sup}
    \limsup_{\epsilon\to0^+} \frac{(\CVaR_\alpha^{Q^\epsilon,g}-\CVaR_\alpha^{Q,g})}{\epsilon} \le\min_{y\in T(Q)}\ell_{\rho}(y)=\min_{y\in T(Q)}\int h^y\,d\rho.
\end{align}
Pick $y_\epsilon\in T(Q^\epsilon)$. Using \eqref{eq:split} and 
$F^{Q,g}_\alpha(y_\epsilon)\ge\CVaR_\alpha^{Q,g}$, we have
\begin{align}\label{eq:lb}
\frac{\CVaR_\alpha^{Q^\epsilon,g}-\CVaR_\alpha^{Q,g}}{\epsilon}=\frac{F^{Q,g}_\alpha(y_\epsilon)-
\CVaR_\alpha^{Q,g}}{\epsilon}+\ell_{\rho}(y_\epsilon)\ge\ell_{\rho}(y_\epsilon).
\end{align}
Fix $\epsilon_n\downarrow0$; by compactness of $T$ pass to a subsequence with $y_{\epsilon_n}\to\bar y\in T$. 
Since $y\mapsto\ell_{\rho}(y)$ is continuous on $\R$, there exists $M>0$ such that  $\sup_{y \in T}|\ell_{\rho}
(y)|\le M$. Thus, for any $y^*\in\R$, we have
\begin{align*}
F^{Q,g}_\alpha(y_{\epsilon_n})
&=F^{Q^{\epsilon_n},g}_\alpha(y_{\epsilon_n})-\epsilon_n\ell_{\rho}(y_{\epsilon_n}) \\
&\le F^{Q^{\epsilon_n},g}_\alpha(y^*)-\epsilon_n\ell_{\rho}(y_{\epsilon_n})\\
&=F^{Q,g}_\alpha(y^*)+\epsilon_n \ell_{\rho}(y^*)-\epsilon_n\ell_{\rho}(y_{\epsilon_n})\\
&\le F^{Q,g}_\alpha(y^*) + M \epsilon_n +  | \ell_{\rho}(y^*)|\,  \epsilon_n.
\end{align*}
Letting $\epsilon_n\to0$, and using the continuity of $F^{Q,g}_\alpha(\cdot)$, we obtain that $\lim_{n \to 
\infty }F^{Q,g}_\alpha(y_{\epsilon_n}) = F^{Q,g}_\alpha(\bar y) \le F^{Q,g}_\alpha(y^*) $ for all $y^* \in \R$. 
Thus $\bar y \in T(Q)$. 

Thus, by \eqref{eq:lb}, and the continuity of $\ell_{\rho}(\cdot)$, we have
     \begin{align}
       \label{equation_VD_lim_inf}
 \liminf_{n \to \infty }
\frac{\mathrm{CVaR}_{\alpha}^{Q^{\epsilon_n},g} -\mathrm{CVaR}_{\alpha}^{Q,g}}{\epsilon_n}  
\geq  \lim _{n \to \infty }  \ell_{\rho}(y_{\epsilon_n})  =
\ell_{\rho}(\bar y) \geq \min_{y\in T(Q)}\ell_{\rho}(y)=\min_{y\in T(Q)}\int h^y\,d\rho.       
     \end{align}
 Thus, we can conclude using \eqref{equation_VD_lim_sup} and \eqref{equation_VD_lim_inf} that for any sequence 
 $\epsilon_k \to 0^+$ in the interval  $ [0,\epsilon_0]$, there exists a subsequence $ \epsilon_{n_k}$ such 
 that
\begin{align}
\lim   _{ \epsilon_{n_k} \to 0^+} \frac{\mathrm{CVaR}_{\alpha}^{Q^{ \epsilon_{n_k} },g} -
\mathrm{CVaR}_{\alpha}^{Q,g}}{ \epsilon_{n_k} }   =\min_{y\in T(Q)}\int h^y\,d\rho.    
\end{align}
Hence,
 \begin{align}
\lim_{ \epsilon \to 0^+} \frac{\mathrm{CVaR}_{\alpha}^{Q^\epsilon,g} -\mathrm{CVaR}_{\alpha}^{Q,g}}{\epsilon}  
=\min_{y\in T(Q)}\int h^y\,d\rho,
\end{align}
proving \eqref{eq:dir}. 

(2) Let $\rho$ be left-admissible. let $\epsilon=-\delta$, $\delta>0$. Thus $Q^{\epsilon}= Q+\epsilon \rho = Q-
\delta \rho$. We have 
 \begin{align}
D^-_\rho&= \lim_{ \epsilon \to 0^-} \frac{\mathrm{CVaR}_{\alpha}^{Q^\epsilon,g} -\mathrm{CVaR}_{\alpha}^{Q,g}}
{\epsilon} \nonumber \\
&= - \lim_{ \delta \to 0^+} \frac{\mathrm{CVaR}_{\alpha}^{Q-\delta \rho,g} -\mathrm{CVaR}_{\alpha}^{Q,g}}
{\delta} \nonumber \\
&=-\min_{y\in T(Q)}\int h^y\,d(-\rho)
\nonumber \\
&=\max_{y\in T(Q)}\int h^y\,d\rho.
\end{align}

(3) Since $y \mapsto \ell_\rho(y)$ is continuous and $T(Q)$ is a compact interval,  $\int h^y 
\rho=\ell_\rho(y)$ ranges over the interval between $D^+_\rho \mathrm{CVaR}_{\alpha}^{Q,g}$ and $D^-_\rho 
\mathrm{CVaR}_{\alpha}^{Q,g}$, attaining both extremes. Thus, by Lemma \ref{Lem:interval:subgradient}, it 
follows that the family of functions $\{h^y; y \in T(Q) \}$ are variational subgradients of 
$\mathrm{CVaR}_{\alpha}^{Q,g}$.

(4) Let $\rho$ be a perturbation such that $\rho$ is admissible at $Q$. By parts (1) and (2), both one-sided 
directional derivatives exist and are given by
\begin{equation*}
D^+_\rho \CVaR_\alpha^{Q,g}=\min_{y\in T(Q)}\ell_{\rho}(y),
\qquad
D^-_\rho \CVaR_\alpha^{Q,g}=\max_{y\in T(Q)}\ell_{\rho}(y).
\end{equation*}
The two-sided (Gateaux) derivative in direction $\rho$ exists if and only if these one-sided derivatives 
coincide, that is, if and only if
\begin{equation*}
\min_{y\in T(Q)}\ell_{\rho}(y)=\max_{y\in T(Q)}\ell_{\rho}(y).
\end{equation*}

That is, the two-sided (Gateaux) derivative in direction $\rho$ exists if and only if  $y \mapsto \ell_{\rho}
(y)$ is constant on $T(Q)$. 

Assume that $T(Q)$ is a singleton set, i.e.\ $\VaR_\alpha^{Q,g}=\VaRbar_\alpha^{Q,g}$, so that 
$T(Q)=\{\VaR_\alpha^{Q,g}\}$. Then $y \mapsto \ell_{\rho}(y)$ is constant on $T(Q)$ for every admissible $\rho$, and
\begin{equation*}
D^+_\rho \CVaR_\alpha^{Q,g}=D^-_\rho \CVaR_\alpha^{Q,g}=\ell_{\rho}\big(\VaR_\alpha^{Q,g}\big)=\int 
h^{\VaR_\alpha^{Q,g}}\,d\rho.
\end{equation*}
The two-sided derivative therefore exists for every admissible perturbation; that is, $\CVaR_\alpha^{Q,g}$ is 
Gateaux differentiable at $Q$ with first variation
\begin{equation*}
\frac{\delta\CVaR_\alpha^{Q,g}}{\delta Q}=h^{\VaR_\alpha^{Q,g}}=\frac{(g-\VaR_\alpha^{Q,g})^+}{1-\alpha}.
\end{equation*}
Conversely, assume that $\VaR_\alpha^{Q,g}<\VaRbar_\alpha^{Q,g}$. Define  $k(x)=\big(g(x)-
\VaR_\alpha^{Q,g}\big)^+-\big(g(x)-\VaRbar_\alpha^{Q,g}\big)^+$. Define $\varphi_k:=k-\E_Q[k]$. Since $0\le 
k\le\VaRbar_\alpha^{Q,g}-\VaR_\alpha^{Q,g}$, both $k$ and $\varphi_k$ are bounded. Define the perturbation 
measure $\rho:=\varphi_k\, Q$.  We verify that $\rho$ is admissible at $Q$. First, 
$\int d \rho=\int\varphi_k\,dQ=\E_Q[k]-\E_Q[k]=0$ and $\int d|\rho|=\int|\varphi_k|\,dQ\le\|\varphi_k\|_{\infty}<\infty$. Second, for $|\epsilon|\le\|\varphi_k\|_{\infty}^{-1}$ we 
have $1+\epsilon\varphi_k\ge0$ point-wise, so $Q^{\epsilon}=Q+\epsilon\rho=(1+\epsilon\varphi_k)\,Q$
is a non-negative measure of total mass $1$, i.e.\ $Q^{\epsilon}\in\PR$ for all 
$|\epsilon|\le\|\varphi_k\|_{\infty}^{-1}$.
Third, we have 
\begin{align}
\int \|x\|\,d|\rho|
= \int \|x\|\,|\varphi_k|\,dQ \leq \|\varphi_k\|_{\infty} \int \|x\|\,dQ(x),
\end{align}
and since $\varphi_k$ is bounded and $Q$ has a finite first moment, we have $\int \|x\|\,d|\rho|  < \infty$.
Hence,  $\rho$ is admissible. 
 Since $\rho=\varphi_k Q$
\begin{align*}
\ell_{\rho}\big(\VaR_\alpha^{Q,g}\big)-\ell_{\rho}\big(\VaRbar_\alpha^{Q,g}\big)
&=\frac{1}{1-\alpha}\int\Big[\big(g-\VaR_\alpha^{Q,g}\big)^+-\big(g-\VaRbar_\alpha^{Q,g}\big)^+\Big]\,d\rho \\
&=\frac{1}{1-\alpha}\int k\,\varphi_k\,dQ\\
&=\frac{1}{1-\alpha}\,\E_Q\big[k\,(k-\E_Q[k])\big]\\
&=\frac{\operatorname{Var}_Q(k)}{1-\alpha}.
\end{align*}
where  $\operatorname{Var}_Q(k)$ denotes the variance of $k$ under the probability measure $Q$. 
Since $\VaR_\alpha^{Q,g}<\VaRbar_\alpha^{Q,g}$, the CDF $\Psi^{Q,g}$ satisfies $\Psi^{Q,g}(c)=\alpha$ for all 
$c\in[\VaR_\alpha^{Q,g},\VaRbar_\alpha^{Q,g})$. Consequently, $Q(\{k=0\})= Q(g\le\VaR_\alpha^{Q,g})=\Psi^{Q,g}
(\VaR_\alpha^{Q,g})=\alpha$ and $Q(\{k>0\})=1-\alpha>0$. Since the probability of $k>0$ and the probability of 
$k=0$ are strictly positive under the probability measure $Q$, $\operatorname{Var}_Q(k) >0$. Thus we have, 
\begin{align}
\ell_{\rho}\big(\VaR_\alpha^{Q,g}\big)-\ell_{\rho}\big(\VaRbar_\alpha^{Q,g}\big)
=\;\frac{\operatorname{Var}_Q(k)}{1-\alpha}>0.
\end{align}
Thus $\ell_{\rho}$ is non-constant on $T(Q)$, and hence
\begin{equation*}
D^+_\rho \CVaR_\alpha^{Q,g}=\min_{y\in T(Q)}\ell_{\rho}(y)<\max_{y\in T(Q)}\ell_{\rho}(y)=D^-_\rho \CVaR_\alpha^{Q,g}.
\end{equation*}

The two-sided derivative fails to exist for the perturbation $\rho$. We conclude that the two-sided derivative 
exists for every admissible perturbation if and only if $T(Q)$ is a singleton set.
\end{proof}

\begin{Remark}
We stress that for the radial risk function $g(x)=\|x\|$,  \Cref{Thm:VD:CVaR} holds under the minimal assumption $Q \in \Qg$ (which implies that $Q$ has finite first moment), 
which is precisely what is needed for $\CVaR_\alpha^{Q,g}$ to be well-defined. In 
our work, we do not enforce any additional hypothesis on the set $T(Q)$.

Part (4) of \Cref{Thm:VD:CVaR} recovers the classical formula $\frac{\delta \CVaR_\alpha^{Q,g}}{\delta Q} = 
\frac{(g -\VaR_\alpha^{Q,g})^+}{1-\alpha}$ employed in \cite{NEURIPS2025_10715deb, wang2026efficient}: it is 
valid when $T(Q)$ is a singleton set. The classical formula $\frac{\delta \CVaR_\alpha^{Q,g}}{\delta Q} = 
\frac{(g -\VaR_\alpha^{Q,g})^+}{1-\alpha}$ for the variational derivative of $\CVaR_\alpha^{Q,g}$ is derived in 
\cite{NEURIPS2025_10715deb, wang2026efficient} under the density assumption on the distribution of $g(X), X 
\sim Q$. Such a density assumption, under which $T(Q)$ is a singleton set, is violated when the probability 
measures are replaced by their empirical measures. Hence, the subgradient framework of \Cref{Thm:VD:CVaR} is 
necessary when the risk measure CVaR and its variational derivative are utilized. 
\end{Remark} 
We now consider the variational subgradients of the loss functional $\Fg$ itself. 

\begin{Theorem}[Directional derivatives and variational subgradients of $\Fg$]
\label{Theorem:VD:Loss}
Assume that $g$ is Lipschitz continuous and Assumption \ref{Assump} holds. Let $\phi^*$ denote the maximizer of 
\eqref{equation:Variational_KL_Lip}. For each $y\in T(Q)$, define the potential function
\begin{align}\label{eq:Phi-family}
\Phi_Q^{y}:=\phi^*-2\lambda\,\Delta C(Q; \Ptar)\,h^y
=\phi^*-\frac{2\lambda}{1-\alpha}\,\Delta C(Q; \Ptar)\,(g-y)^+ .
\end{align}
Then the following hold.
\begin{enumerate}
\item  For every right-admissible perturbation $\rho$, the right directional derivative exists and is given by
\begin{align}\label{eq:F-right}
D^+_\rho\Fg(Q;\Ptar)=\int\phi^*\,d\rho-2\lambda\,\Delta C(Q; \Ptar)\,\min_{y\in T(Q)} \int h^y d \rho.
\end{align}
For every left-admissible perturbation $\rho$, the left directional derivative exists and is given by
\begin{align}\label{eq:F-left}
D^-_\rho\Fg(Q;\Ptar)=\int\phi^*\,d\rho-2\lambda\,\Delta C(Q; \Ptar)\,\max_{y\in T(Q)}\int h^y d \rho.
\end{align}
\item For any admissible $\rho$, the range of $\int\Phi_Q^{y} d\rho$ for $y \in T(Q)$,  is the compact interval 
between $D^-_\rho\Fg(Q;\Ptar)$ and $D^+_\rho\Fg(Q;\Ptar)$. Consequently, the family of potential functions 
$\{\Phi_Q^{y}; y \in T(Q) \}$ are subgradients of $\Fg(Q;\Ptar)$.
\item Let $\rho$ be admissible. The two-sided (Gateaux) derivative of $\Fg$ for the perturbation $\rho$ exists 
if and only if $\Delta C(Q; \Ptar)=0$ or $y\mapsto\int h^y\,d\rho$ is constant on $T(Q)$. The two-sided 
(Gateaux) derivative of $\Fg$ exists for every admissible perturbation if and only if $\Delta C(Q; \Ptar)=0$ or 
$T(Q)$ is a singleton set; in that case, $\Fg(\cdot;\Ptar)$ is Gateaux differentiable at $Q$ with first 
variational derivative
\begin{equation*}
\frac{\delta\Fg(Q;\Ptar)}{\delta Q}=\Phi_Q^{\VaR_\alpha^{Q,g}}.
\end{equation*}
\end{enumerate}
\end{Theorem}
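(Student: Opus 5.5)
The plan is to reduce everything to \Cref{Thm:VD:CVaR} combined with the Gateaux differentiability of the Lipschitz-regularized KL term, via a one-dimensional chain rule applied to the square.

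\textbf{Right and left derivatives of each term.} Fix a right-admissible $\rho$. For the first term, by \eqref{eq:first-variation} (i.e.\ \cite[Theorem 3]{chen2025robust}), $D^L_{\mathrm{KL}}$ is differentiable along $\rho$, so
\begin{equation*}
\epsilon^{-1}\big(D^L_{\mathrm{KL}}(Q^\epsilon\|\Ptar)-D^L_{\mathrm{KL}}(Q\|\Ptar)\big)\to\int\phi^*\,d\rho .
\end{equation*}
Here the admissibility conditions $\int\|x\|\,d|\rho|<\infty$ ensure that $Q^\epsilon$ keeps a finite first moment and that $\phi^*$, which is $L$-Lipschitz hence of linear growth, is $|\rho|$-integrable. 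For the second term, write $u(\epsilon):=\Delta C(Q^\epsilon;\Ptar)=\CVaR_\alpha^{\Ptar,g}-\CVaR_\alpha^{Q^\epsilon,g}$. By \Cref{Thm:VD:CVaR}(1), $u$ has a right derivative at $0$ equal to $-\min_{y\in T(Q)}\int h^y d\rho$, and $u$ is continuous at $0^+$ by \Cref{lem:W1-lipschitz}. Then
\begin{equation*}
\frac{u(\epsilon)^2-u(0)^2}{\epsilon}=\big(u(\epsilon)+u(0)\big)\,\frac{u(\epsilon)-u(0)}{\epsilon}\to 2u(0)\,u'(0^+).
\end{equation*}
This gives \eqref{eq:F-right}. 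The left case \eqref{eq:F-left} follows identically from \Cref{Thm:VD:CVaR}(2), with $\min$ replaced by $\max$.

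\textbf{Part (2).} For admissible $\rho$, linearity gives $\int\Phi_Q^y\,d\rho=\int\phi^*d\rho-2\lambda\Delta C\,\ell_\rho(y)$. Here $\ell_\rho$ is continuous on the compact interval $T(Q)$, so its range is exactly $[\min\ell_\rho,\max\ell_\rho]$. Multiplying by $-2\lambda\Delta C$ maps this interval to the closed interval with endpoints $D^+_\rho\Fg$ and $D^-_\rho\Fg$. The sign of $\Delta C$ determines which endpoint is which, so I will phrase the statement as ``interval between''. Local Lipschitz continuity of $\Fg$ is \Cref{lem:W1-lipschitz}(2), so \Cref{Lem:interval:subgradient} applies and every $\Phi_Q^y$ is a subgradient.

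\textbf{Part (3).} The two-sided derivative along admissible $\rho$ exists iff $D^+_\rho=D^-_\rho$, i.e.\ iff $2\lambda\Delta C(\max\ell_\rho-\min\ell_\rho)=0$. Since $\lambda>0$, this holds iff $\Delta C=0$ or $\ell_\rho$ is constant on $T(Q)$. For the ``every $\rho$'' statement, the ``if'' direction is immediate. For the converse, suppose $\Delta C\neq0$ and $T(Q)$ is not a singleton; then I reuse the perturbation $\rho=\varphi_k Q$ built in the proof of \Cref{Thm:VD:CVaR}(4), for which $\ell_\rho$ is nonconstant. In the differentiable case, the common value is $\int\Phi_Q^{\VaR_\alpha^{Q,g}}d\rho$. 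This is immediate when $T(Q)$ is a singleton, and also when $\Delta C=0$, since then $\Phi_Q^y=\phi^*$ for every $y$.

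\textbf{Main obstacle.} The only nonroutine point is the Lipschitz-KL directional derivative: one must confirm that \cite[Theorem 3]{chen2025robust} yields a genuine two-sided derivative along signed perturbations $\rho$ with $\int\|x\|d|\rho|<\infty$. This includes mass outside $\mathrm{supp}(Q)\cup\mathrm{supp}(\Ptar)$, where $\phi^*$ is not unique. I would handle this by citing that theorem's admissible perturbation class, and by noting that any $L$-Lipschitz optimizer gives the same derivative value, via an envelope/Danskin-type argument on \eqref{equation:Variational_KL_Lip}.
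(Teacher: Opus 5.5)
Your argument for parts (1)--(3) is the same as the paper's proof. It uses the factorization $\bigl(u(\epsilon)+u(0)\bigr)\bigl(u(\epsilon)-u(0)\bigr)/\epsilon$ with \Cref{Thm:VD:CVaR}(1)--(2), the cited first variation $\phi^*$ of $\LipKL$, continuity of $\ell_\rho$ on the compact interval $T(Q)$ together with \Cref{Lem:interval:subgradient}, and the perturbation $\varphi_k Q$ for the converse in part (3). Your explicit check of the case $\Delta C(Q;\Ptar)=0$ in the Gateaux formula is a small improvement, since the paper only verifies the singleton case.

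The fix you propose for your ``main obstacle'' is wrong, however. An envelope/Danskin argument does not make $\int\phi^*\,d\rho$ independent of the chosen maximizer. For \emph{every} maximizer $\phi$ at $Q$ one has $\LipKL(Q^\epsilon\|\Ptar)\ge\LipKL(Q\|\Ptar)+\epsilon\int\phi\,d\rho$. A compactness argument on normalized maximizers at $Q^\epsilon$ gives the reverse inequality. What you actually get is $D^+_\rho\LipKL(Q\|\Ptar)=\max_\phi\int\phi\,d\rho$, with the maximum taken over the set of maximizers.

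Concretely, in $d=1$ take $Q=\Ptar=\delta_0$ and the right-admissible $\rho=\delta_1-\delta_0$. Every $\phi\in\Gamma_L$ is then a maximizer, and $\int\phi\,d\rho=\phi(1)-\phi(0)$ ranges over $[-L,L]$. On the other hand $\LipKL(Q^\epsilon\|\Ptar)=\epsilon L$, so $D^+_\rho\LipKL=L$.

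Consequently, for one-sided perturbations that put mass outside $S:=\mathrm{supp}(Q)\cup\mathrm{supp}(\Ptar)$, part (1) holds only if $\phi^*$ is chosen as the largest $L$-Lipschitz extension $x\mapsto\inf_{z\in S}\{\phi^*(z)+L\|x-z\|\}$ of its restriction to $S$. This is the correct choice for both $D^+_\rho$ and $D^-_\rho$, because in each case the off-support mass is mass being added to $Q$. The paper's proof has the same blind spot: it simply invokes \eqref{eq:first-variation}, which is stated only on $S$.

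Parts (2) and (3) are unaffected. An admissible $\rho$ satisfies $|\rho|\le Q/\epsilon$ for small $\epsilon>0$, so $|\rho|\ll Q$. Hence $\int\phi^*\,d\rho$ only sees $\phi^*$ on $\mathrm{supp}(Q)$, where $\phi^*$ is unique up to an additive constant, and that constant is removed by $\int d\rho=0$.
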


\begin{proof}

(1) For a right-admissible perturbation $\rho$, we have that
\begin{align}
    \label{eq:RD:CVAR}
D^+_\rho (\Delta C(Q; \Ptar))^2 &=\lim_{\epsilon\to0^+}\frac{(\Delta C(Q^\epsilon; \Ptar ))^2-(\Delta C(Q; 
\Ptar))^2}{\epsilon} \nonumber \\
 &=\lim_{\epsilon\to0^+}   \left[\CVaR_\alpha^{Q^\epsilon,g}+\CVaR_\alpha^{Q,g}-2 
 \CVaR_\alpha^{\Ptar,g}\right] \frac{\CVaR_\alpha^{Q^\epsilon,g}-\CVaR_\alpha^{Q,g}}{\epsilon} \nonumber \\
 &= -2 \Delta C(Q; \Ptar) \,D^+_\rho \CVaR_\alpha^{Q,g}  , \nonumber \\
 &= - 2 \Delta C(Q; \Ptar) \min_{y\in T(Q)}\int h^y\,d\rho.
\end{align}

We recall that the variational derivative of the divergence $\LipKL(Q\|\Ptar)$ is given by
\begin{align}
  \frac{\delta \LipKL(Q\|\Ptar)}{\delta Q}(x)
=\phi^*(x).
  \label{eq:first-variation_dup}
\end{align}
where $\phi^*$ is the maximizer of \eqref{equation:Variational_KL_Lip}. That is, for any right-admissible 
$\rho$, $D^+_\rho \LipKL(Q\|\Ptar) = \int \phi^* d \rho$. Since $D^+_\rho\Fg(Q;\Ptar)= D^+_\rho 
\LipKL(Q\|\Ptar) + \lambda D^+_\rho (\Delta C(Q; \Ptar))^2$, we readily obtain \eqref{eq:F-right}. 
 
Now let $\rho$ be a left-admissible perturbation. Then we have $D^-_\rho \LipKL(Q\|\Ptar) = \int \phi^* d \rho$ 
and $D^-_\rho (\Delta C(Q; \Ptar))^2 = -2 \Delta C(Q; \Ptar) \, \max_{y\in T(Q)} \int h^y\,d\rho$ (similar to 
the derivation of \eqref{eq:RD:CVAR}). Since $D^-_\rho\Fg(Q;\Ptar)= D^-_\rho \LipKL(Q\|\Ptar) + \lambda D^-
_\rho (\Delta C(Q; \Ptar))^2$, we readily obtain \eqref{eq:F-left}. 

(2) Since $y \mapsto \int \Phi^y_Q \,d\rho$ is continuous and $T(Q)$ is a compact interval, 
$\int\Phi_Q^y\,d\rho$ ranges over the interval between $D^+_\rho \Fg (Q; \Ptar)$ and $D^-_\rho \Fg (Q; \Ptar)$, 
attaining both extremes. Thus, by Lemma \ref{Lem:interval:subgradient}, it follows that the family of potential 
functions $\{\Phi_Q^{y}; y \in T(Q) \}$ are subgradients of $\Fg(Q;\Ptar)$.

(3) For any admissible perturbation $\rho$, we have
\begin{equation*}
D^-_\rho\Fg(Q;\Ptar)-D^+_\rho\Fg(Q;\Ptar)
=2\lambda\,\Delta C(Q; \Ptar)\Big(\min_{y\in T(Q)}\int h^y\,d\rho-\max_{y\in T(Q)}\int h^y\,d\rho\Big).
\end{equation*}

The two-sided derivative for perturbation $\rho$ exists if and only if $\Delta C(Q; \Ptar)=0$ or $\int h^y 
d\rho$ is constant on $T(Q)$. As proven in \Cref{Thm:VD:CVaR}, $\int h^y d\rho$ is constant on $T(Q)$  for 
every admissible $\rho$, if and only if $T(Q)=\{\VaR_\alpha^{Q,g}\}$ is a singleton set. If $T(Q)$ is the 
singleton set $\{\VaR_\alpha^{Q,g}\}$, we have 
\begin{equation*}
D^-_\rho\Fg(Q;\Ptar)=D^+_\rho\Fg(Q;\Ptar)= \int \phi^* d \rho -2\lambda\,\Delta C(Q; \Ptar)\,\int h^{\VaR_\alpha^{Q,g}} d\rho,
\end{equation*}
for every admissible $\rho$.
That is, $\Fg(\cdot;\Ptar)$ is Gateaux differentiable at $Q$ with first variational derivative
$$\frac{\delta\Fg(Q;\Ptar)}{\delta Q}=\phi^* -2\lambda\,\Delta C(Q; \Ptar)\, 
h^{\VaR_\alpha^{Q,g}}=\Phi_Q^{\VaR_\alpha^{Q,g}}.$$
\end{proof}

\subsection{Velocity field of the CVaR-penalized  Wasserstein gradient flow}
\label{Sec:Velocity}
The velocity field of the Wasserstein gradient flow of a loss functional $\mathcal{F}$ is defined via the 
variational derivative of $\mathcal{F}$. However, by part 3 of \Cref{Theorem:VD:Loss},  the variational 
derivative of $\Fg(Q; \Ptar)$ does not exist unless $T(Q)$ is a singleton set or $\Delta C (Q; \Ptar)=0$. 
Hence, one may use a variational subgradient of $\Fg (Q; \Ptar)$ as the potential function to define the 
velocity field of the CVaR-penalized Wasserstein gradient flow. The choice of this subgradient is a genuine 
degree of freedom in the design of the CVaR-penalized Wasserstein gradient flow. We recall from part 2 of 
\Cref{Theorem:VD:Loss} that for all $y\in T(Q)$,  $\Phi_Q^{y}=\phi^*-2\lambda\,\Delta C(Q; \Ptar)\,h^y$ is a 
variational subgradient of $\Fg (Q; \Ptar)$. Choosing the family of functions $\{\Phi_Q^{y}; y \in T(Q)\}$ as 
the potential functions to define the velocity field of the CVaR-penalized Wasserstein gradient flow, we obtain 
the following result. 

\begin{Theorem}[The velocity field of $\Fg$]\label{thm:vel}
Assume that $g$ is Lipschitz continuous, $g\in C^1$ for $g(x) \neq 0$, and Assumption \ref{Assump} holds. For 
each $y\in T(Q)$, choosing the potential function $\Phi_Q^{y}$ induces the velocity field 
$v^{y}_Q=-\nabla_x\Phi_Q^{y}$, and we have
\begin{align}\label{eq:vel-cases}
v^{y}_Q(x)=
\begin{cases}
\displaystyle \underbrace{-\nabla_x\phi^*(x)}_{\text{from }\LipKL(Q\|\Ptar)}
+\underbrace{\frac{2\lambda}{1-\alpha}\,\Delta C(Q; \Ptar)\,\nabla_x g(x)}_{\text{from CVaR-penalization}}, & 
g(x)>y,\\
\displaystyle \underbrace{-\nabla_x\phi^*(x)}_{\text{from }\LipKL(Q\|\Ptar)}, & g(x)\le y.
\end{cases}
\end{align}
If $T(Q)$ is a singleton set, the velocity field degenerates to $v^{\VaR_\alpha^{Q,g}}_Q$.
\end{Theorem}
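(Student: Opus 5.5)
The plan is to compute $-\nabla_x\Phi_Q^y$ directly from the definition \eqref{eq:Phi-family} established in \Cref{Theorem:VD:Loss}(2), splitting $\R^d$ according to the sign of $g(x)-y$. By linearity of the gradient,
\begin{equation*}
v_Q^y=-\nabla_x\phi^*+\frac{2\lambda}{1-\alpha}\,\Delta C(Q;\Ptar)\,\nabla_x (g-y)^+ ,
\end{equation*}
since $\Delta C(Q;\Ptar)$, $\lambda$, $\alpha$ and $y$ do not depend on $x$. The term $\nabla_x\phi^*$ exists almost everywhere by Rademacher's theorem, because $\phi^*\in\Gamma_L$ is $L$-Lipschitz; this gives the common first term in both cases of \eqref{eq:vel-cases}, and the theorem simply records it as the contribution of $\LipKL(Q\|\Ptar)$.

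For the CVaR term, there are two regions. On the open set $\{g>y\}$, $(g-y)^+=g-y$. Since $y\in T(Q)=[\VaR_\alpha^{Q,g},\VaRbar_\alpha^{Q,g}]$ and $g\ge 0$, we have $y\ge 0$ (VaR of a nonnegative variable is nonnegative). Hence $g(x)>y$ forces $g(x)>0$, where $g\in C^1$ by hypothesis, so $\nabla_x(g-y)^+=\nabla_x g$. This yields the first case. On the open set $\{g<y\}$, $(g-y)^+\equiv 0$ locally, so the gradient vanishes and only $-\nabla_x\phi^*$ remains.

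The only delicate point is the level set $\{g=y\}$, where $(g-y)^+$ need not be differentiable; this is exactly why $v_Q^y$ is bounded but not Lipschitz. The plan is to handle it by convention: interpret $\nabla_x(g-y)^+$ as the one-sided (Clarke) choice $\ind_{\{g>y\}}\nabla_x g$, which is a valid element of the Clarke gradient of $(g-y)^+$ on $\{g=y\}$ when $g$ is $C^1$ there (or, when $y=0$, of the Lipschitz function $g^+$). Assigning the boundary to the case $g(x)\le y$ is then consistent with \eqref{eq:vel-cases}. I expect this boundary bookkeeping, and justifying $y\ge 0$ so that $C^1$-regularity applies on $\{g>y\}$, to be the main, if mild, obstacle. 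Boundedness follows from $\|\nabla\phi^*\|\le L$ and $\|\nabla g\|\le\Lip(g)$.

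Finally, if $T(Q)$ is a singleton, then $T(Q)=\{\VaR_\alpha^{Q,g}\}$ by \cite[Theorem 10]{Rock2002CVaR}. The family $\{\Phi_Q^y\}$ then reduces to $\Phi_Q^{\VaR_\alpha^{Q,g}}$, which by \Cref{Theorem:VD:Loss}(3) is the Gateaux first variation. The velocity is therefore $v_Q^{\VaR_\alpha^{Q,g}}$.
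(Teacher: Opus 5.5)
Your proposal is correct and follows essentially the same route as the paper. Like the paper, you differentiate $\Phi_Q^y$ directly on $\{g>y\}$ (using $y\ge 0$, so that $g\in C^1$ there) and on $\{g<y\}$, and you resolve the non-differentiability on $\{g=y\}$ by the same convention. The paper notes that any $-\nabla_x\phi^*+s\,\frac{2\lambda}{1-\alpha}\Delta C(Q;\Ptar)\nabla_x g$ with $s\in[0,1]$ could be used on that set and selects $s=0$, exactly your choice; your Clarke-gradient justification of $s=0$ and your explicit treatment of the singleton case are small additions the paper leaves implicit.
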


\begin{proof}
The result readily follows from taking the gradient of $\Phi_Q^{y}(x)$ with respect to $x$ for $g(x) > y \geq 0$. Since  
$(g(x)-  y)^+$ is not differentiable at $g(x)=y$, the velocity field is not defined at $g(x)=y$. One can adopt the 
convention that $v^y(x) \in \{-\nabla_x \phi^*(x)+ s \left( \frac{2 \lambda }{(1-\alpha) } \Delta C (Q; \Ptar)  \nabla_x 
g(x) \right) ; s\in [0,1]\}$ for $g(x)=y$. In our work, we define $v^y(x) = -\nabla_x \phi^*(x)$ for $g(x)=y$, yielding  
\eqref{eq:vel-cases}.
\end{proof}

\begin{Corollary}[Boundedness of the velocity field]
\label{cor:vel-bound}
Assume that $g$ is Lipschitz continuous, $g\in C^1$ for $g(x) \neq 0$, and Assumption \ref{Assump} holds. For every $y\in 
T(Q)$, the velocity field $v^{y}_Q$ in \eqref{eq:vel-cases} satisfies, for almost every $x\in\R^d$ we have
\begin{align}\label{eq:vel-bound}
\big\|v^{y}_Q(x)\big\|\;\le\; L+\frac{2\lambda}{1-\alpha}\,\Lip(g)\,\big|\Delta C(Q;\Ptar)\big|.
\end{align}
Moreover, the CVaR discrepancy is controlled by the loss $\Fg(Q;\Ptar)$, so we have 
\begin{align}\label{eq:vel-bound-energy}
\big\|v^{y}_Q(x)\big\|\;\le\; L+\frac{2\,\Lip(g)}{1-\alpha}\,\sqrt{\lambda\,\Fg(Q;\Ptar)},
\end{align}
for almost every $x\in\R^d$. In particular, the velocity field is bounded, and the CVaR-penalization component vanishes as
$\Fg(Q;\Ptar)\to 0$, i.e.\ as $Q\to\Ptar$.
\end{Corollary}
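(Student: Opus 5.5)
The plan is to bound the two pieces of the velocity field in \eqref{eq:vel-cases} separately and then use the triangle inequality. For the first piece, I will argue that $\phi^*$, the maximizer of \eqref{equation:Variational_KL_Lip}, lies in $\Gamma_L$ and is therefore $L$-Lipschitz. Rademacher's theorem then gives that $\phi^*$ is differentiable almost everywhere with $\|\nabla_x\phi^*(x)\|\le L$ at every point of differentiability. Here I will take $\phi^*$ to be an $L$-Lipschitz extension from $\mathrm{supp}(Q)\cup\mathrm{supp}(\Ptar)$ to all of $\R^d$, for instance the McShane extension; this is the interpretation under which $\nabla_x\phi^*$ is meaningful on $\R^d$ at all. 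For the second piece, $g\in C^1$ on $\{g\neq 0\}$ and $g$ is $\Lip(g)$-Lipschitz, so $\|\nabla_x g(x)\|\le \Lip(g)$ wherever the gradient exists. Consequently, on the region $\{g(x)>y\}$ the CVaR term has norm at most $\frac{2\lambda}{1-\alpha}\Lip(g)|\Delta C(Q;\Ptar)|$.

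Next comes the case split dictated by \eqref{eq:vel-cases}. On $\{g(x)\le y\}$ the velocity is $-\nabla_x\phi^*$, so its norm is at most $L$ almost everywhere. On $\{g(x)>y\}$ we have $g(x)>y\ge 0$, because $T(Q)=[\VaR_\alpha^{Q,g},\VaRbar_\alpha^{Q,g}]\subset[0,\infty)$ when $g\ge 0$. Hence $g(x)\neq0$, so $\nabla g$ exists, and the triangle inequality gives \eqref{eq:vel-bound}. The exceptional set is the union of the non-differentiability set of $\phi^*$ and the level set $\{g=y\}$ (where the convention in the proof of \Cref{thm:vel} applies and the bound still holds). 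The first set is null, so the bound holds almost everywhere.

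For \eqref{eq:vel-bound-energy}, I will use \Cref{Prop:Div:Property}, specifically $\LipKL(Q\|\Ptar)\ge0$. This gives $\lambda(\Delta C)^2\le \Fg(Q;\Ptar)$, hence $|\Delta C|\le\sqrt{\Fg/\lambda}$. Substituting this into \eqref{eq:vel-bound} turns $\frac{2\lambda}{1-\alpha}\Lip(g)|\Delta C|$ into $\frac{2\Lip(g)}{1-\alpha}\sqrt{\lambda\Fg}$. The statement that the penalization component vanishes as $\Fg\to0$ then follows immediately, and $Q\to\Ptar$ is understood through the divergence property.

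The main obstacle is the first step: making rigorous that $\nabla_x\phi^*$ exists almost everywhere with norm at most $L$ on all of $\R^d$, rather than only on the support where $\phi^*$ is uniquely determined. I would handle this by fixing the Lipschitz extension convention explicitly and then invoking Rademacher's theorem. Everything after that step is elementary.
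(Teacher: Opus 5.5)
Your proposal is correct and follows the paper's proof essentially verbatim: the triangle inequality applied to \eqref{eq:vel-cases}, with $\|\nabla_x\phi^*\|\le L$ a.e.\ and $\|\nabla_x g\|\le\Lip(g)$, followed by $\LipKL(Q\|\Ptar)\ge0$ to obtain $|\Delta C(Q;\Ptar)|\le\sqrt{\Fg(Q;\Ptar)/\lambda}$. Your extra details (Rademacher's theorem, and $T(Q)\subset[0,\infty)$ so that $\nabla_x g$ exists on $\{g>y\}$) only make explicit what the paper leaves implicit; the extension step is not needed, since any maximizer in $\Gamma_L$ is already $L$-Lipschitz on all of $\R^d$, and ``$Q\to\Ptar$'' is better justified by the $\mathcal{W}_1$-Lipschitz bound of Proposition~\ref{lem:W1-lipschitz} than by the divergence property.
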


\begin{proof}
By \eqref{eq:vel-cases}, for almost every $x$, we have
\begin{equation*}
  \big\|v^{y}_Q(x)\big\|\le\big\|\nabla_x\phi^*(x)\big\|
+\frac{2\lambda}{1-\alpha}\,\big|\Delta C(Q;\Ptar)\big|\,\big\|\nabla_x g(x)\big\|.  
\end{equation*}
Since $\phi^*\in\Gamma_L$ is $L$-Lipschitz, $\|\nabla_x\phi^*\|\le L$ almost everywhere; since $g$ is
$\Lip(g)$-Lipschitz, $\|\nabla_x g\|\le\Lip(g)$. This proves \eqref{eq:vel-bound}.

Since $\LipKL(Q\|\Ptar)\ge 0$ we have
$$\lambda\,(\Delta C(Q;\Ptar))^{2}\le \LipKL(Q\|\Ptar)+\lambda\,(\Delta C(Q;\Ptar))^{2}=\Fg(Q;\Ptar)\, .$$
That is $|\Delta C(Q;\Ptar)|\le\sqrt{\Fg(Q;\Ptar)/\lambda}$; this combined with  \eqref{eq:vel-bound}, proves \eqref{eq:vel-bound-energy}.
\end{proof}

\begin{Remark}[Bounded but non-Lipschitz continuous]
\label{rem:non-lip}
Although \Cref{cor:vel-bound} shows that the velocity field $v^{y}_Q$ in \eqref{eq:vel-cases} is bounded, it is not Lipschitz continuous in $x$,
for two reasons. First, $v^{y}_Q$ has a jump of magnitude
$\frac{2\lambda}{1-\alpha}\,|\Delta C(Q;\Ptar)|\,\|\nabla_x g\|$ at the threshold set $\{x:g(x)=y\}$.
Second, $\nabla_x\phi^*$ is bounded due to $\phi^*$ being Lipschitz continuous, but $\nabla_x\phi^*$ is not necessarily continuous. 

The bounded yet discontinuous velocity field $v^{y}_Q$ in \eqref{eq:vel-cases}, distinguishes the
CVaR-penalized flow from generative models built on Lipschitz transport maps: the latter preserve the tail
behavior of the light-tailed source distribution \cite{jaini2020tails}, whereas a non-Lipschitz transport map
is required to transport the pre-trained distribution $\Ppre$ toward the heavier-tailed target distribution $\Ptar$. We do not claim here a rigorous 
quantitative change of the tail behavior; the resulting gain in tail accuracy is demonstrated
empirically in \Cref{Sec:Results}.
\end{Remark}

While many risk functions $g$ could be considered in the CVaR-penalized Wasserstein gradient flows, throughout this paper we adopt the
natural choice of the radial risk function $g(x)=\|x\|$. The radial risk function $g(x)=\|x\|$ is Lipschitz continuous, and $g \in C^1$ for $x \neq 0$, satisfying the assumptions for $g$ in \Cref{thm:vel}. As discussed below, this choice yields
a velocity field with several favorable properties for learning heavy-tailed targets. The following corollary presents the resulting
velocity field for such a choice of $g$. 

\begin{Corollary}
Let Assumption \ref{Assump} holds and
let $g(x)=\|x\|$, and $\E_Q[g] < \infty$. For each $y\in T(Q)$, choosing the potential function $\Phi_Q^{y}$ induces the velocity field $v^{y}_Q=-\nabla_x\Phi_Q^{y}$, and we have
\begin{align} 
\label{Eq:Velocity}
v^{y}_Q(x)=
\begin{cases}
\displaystyle \underbrace{-\nabla_x\phi^*(x)}_{\text{from }\LipKL(Q\|\Ptar)}
+\underbrace{\frac{2\lambda}{1-\alpha}\,\Delta C(Q; \Ptar)\,\frac{x}{\|x\|}}_{\text{from CVaR-penalization}}, & \|x\|>y,\\
\displaystyle \underbrace{-\nabla_x\phi^*(x)}_{\text{from }\LipKL(Q\|\Ptar)}, & \|x\|\le y.
\end{cases}
\end{align}
\end{Corollary}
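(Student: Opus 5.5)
This corollary is a direct specialization of \Cref{thm:vel}. The plan is to check that $g(x)=\|x\|$ meets the hypotheses of that theorem, and then substitute the explicit gradient of $g$ into \eqref{eq:vel-cases}.

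\emph{Step 1: hypotheses.}
\begin{itemize}
\item $g$ is non-negative.
\item $g$ is $1$-Lipschitz, by the reverse triangle inequality $|\|x\|-\|x'\||\le\|x-x'\|$. So $\Lip(g)=1$.
\item On $\{x\neq 0\}$, $g(x)=(\sum_i x_i^2)^{1/2}$ is a composition of a smooth positive function with the square root, so $g\in C^1$ away from the origin, which is exactly the set $\{g\neq 0\}$.
\item $\E_Q[g]<\infty$ means $Q$ has finite first moment. Hence Assumption \ref{Assump}(2) follows from Assumption \ref{Assump}(1), as the paper already noted after Assumption \ref{Assump}.
\end{itemize}
With these, \Cref{Theorem:VD:Loss} and \Cref{thm:vel} apply. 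In particular, $\Phi_Q^y$ is a variational subgradient for each $y\in T(Q)$.

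\emph{Step 2: gradient computation.} For $x\neq0$, differentiating $\|x\|^2=\sum_i x_i^2$ gives $2\|x\|\,\partial_i\|x\|=2x_i$, so $\nabla_x g(x)=x/\|x\|$. Substituting this into the first branch of \eqref{eq:vel-cases}, with $\{g(x)>y\}=\{\|x\|>y\}$, gives the first case of \eqref{Eq:Velocity}. The second branch, $\{\|x\|\le y\}$, is unchanged, and the threshold convention $v^y(x)=-\nabla_x\phi^*(x)$ at $\|x\|=y$ is inherited from the proof of \Cref{thm:vel}.

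\emph{Step 3: the origin.} The only point needing care is $x=0$, where $x/\|x\|$ is undefined. This causes no difficulty because $y\in T(Q)\subset[0,\infty)$: indeed $\VaR_\alpha^{Q,g}\ge0$ since $g\ge0$. Hence $x=0$ lies in the region $\|x\|\le y$, where only $-\nabla_x\phi^*$ appears and no gradient of $g$ is needed. The normalized vector $x/\|x\|$ is therefore evaluated only where $\|x\|>y\ge0$, that is, where $x\neq0$ and $g$ is $C^1$.

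As in \Cref{thm:vel}, $\nabla_x\phi^*$ is understood almost everywhere, which is legitimate by Rademacher's theorem since $\phi^*\in\Gamma_L$.

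I expect no real obstacle. The only subtle point is the bookkeeping at $x=0$ and on the threshold sphere, and both are handled by the observations above.
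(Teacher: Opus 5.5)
Your proposal is correct and follows the paper's proof, which is simply the specialization of \Cref{thm:vel} to $g(x)=\|x\|$ using $\nabla_x\|x\|=x/\|x\|$ for $x\neq 0$. Your verification of the hypotheses and your observation that $y\ge 0$ keeps the origin out of the region where $x/\|x\|$ is needed are accurate details that the paper leaves implicit.
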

\begin{proof}
Immediate from \Cref{thm:vel} with $g(x)=\|x\|$ and $\nabla_x\|x\|=x/\|x\|$ for $x\neq0$.
\end{proof}

\paragraph{The velocity contribution from CVaR-penalization}
The CVaR-penalized  Wasserstein gradient flow is given by 
\begin{align}
\label{Eq:Evol:Eq}
 \partial_t Q_t + \nabla\cdot(Q_t v^y_{Q_t}) = 0,  \quad Q_0 = \Ppre,
\end{align}
where $Q_t$ denotes the distribution evolving at time $t$, and the velocity field $v^y_{Q_t}$ is given by \eqref{Eq:Velocity}. Consistent with our fine-tuning framework, the flow is initialized at $Q_0 = \Ppre$. 
In line with our fine-tuning goal, the CVaR-penalization results in an additional velocity component, which we denote by $ \hat{v}^y_Q$. $\hat{v}^y_Q(x)$ is 
activated only beyond the threshold $\|x\| > y$ for $y \in T(Q)$, and has magnitude $\frac{ 2 \lambda |  \Delta C (Q; \Ptar) | } { (1-\alpha)} $. Thus the 
choice $y \in T(Q)$ determines the region on which the velocity contribution from CVaR-penalization will be activated, not its magnitude. Two selections are 
distinguished: the \emph{outer} endpoint $y=\VaRbar_\alpha^{Q,g}$ and the \emph{inner} endpoint $y=\VaR_\alpha^{Q,g}$  result in the smallest and the largest activation regions for the velocity contribution from CVaR-penalization.

Furthermore, the magnitude of $ \hat{v}^y_Q$ is strictly nonzero whenever $|  \Delta C (Q; \Ptar) |>0$, regardless of the availability of samples in the tail region. That is, the CVaR-penalization contributes a velocity component that is guaranteed to remain non-vanishing in tail regions, where the velocity contribution from $\LipKL$ may vanish due to sample scarcity. Moreover, since $ \hat{v}^y_Q \propto \Delta C (Q; \Ptar) $, this results in a velocity field that self-regulates,  stronger when the tail is inadequately captured and naturally tapering off as the generated distribution converges to the target distribution. In essence, the CVaR-penalization acts as a fine-tuning component that contributes an additional velocity component to the  Lipschitz-regularized  Wasserstein gradient flow, resolving the premature vanishing velocity issue.

\section{CVaR-penalized Generative Particle Algorithm}
\label{Sec:CVaR:GPA}
The CVaR-penalized Wasserstein gradient flows \eqref{Eq:Evol:Eq} can be viewed as a family of transport-based variational PDEs. This family of PDEs \eqref{Eq:Evol:Eq} motivates a neural particle algorithm, analogous to the Generative Particle Algorithm (GPA) developed in \cite{gu2024lipschitz}, but with an additional CVaR penalty. We refer to such algorithms, built on the development of GPA in \cite{gu2024lipschitz}, as the CVaR-penalized Generative Particle Algorithms, or CVaR-GPA, outlined in \Cref{alg:gpa}. CVaR-GPA is built by discretizing \eqref{Eq:Evol:Eq} in time and replacing the relevant probability measures by their empirical counterparts. 

Let the target distribution $\Ptar$ be available through its i.i.d. samples
$\{X^{(j)}\}_{j=1}^N$. At the $k^{\mathrm{th}}$ iteration, we denote by $\widehat{Q}_k$ the empirical measure of the generated particles $\{Y_k^{(i)}\}_{i=1}^M$.
We consider the empirical measures $\widehat{\Ptar} = \frac{1}{N} \sum_{j=1}^N \delta _{{X}^{(j)}}$ and $\widehat{\Ppre} = \frac{1}{M} \sum_{i=1}^M \delta _{Y_0^{(i)}}$ where $\{Y_0^{(i)}\}_{i=1}^M$ 
are samples of the pre-trained distribution $\Ppre$. In contrast to other generative model frameworks such as CNFs, where the learning is typically initialized with a simple class of distributions, such as Gaussian, GPA allows us to initialize the learning with any pre-trained model $\Ppre$.

\paragraph{Activation region of the velocity contribution from CVaR-penalization}
The choice of the activation region of the velocity contribution from CVaR-penalization is determined by the choice of $y\in T(Q)$, and this is an algorithm design choice. The two endpoints of the  $T(\widehat{Q}_k)$  $\VaR_\alpha^{\widehat{Q}_k,g}$ and $ \VaRbar_\alpha^{\widehat{Q}_k,g}$ result in the largest and the smallest activation regions for the velocity contribution from CVaR-penalization, respectively. Furthermore, we can compute the two endpoints of $ T(\widehat{Q}_k)$ exactly via
\begin{align}\label{eq:orderstats}
\VaR_\alpha^{\widehat{Q}_k,g}=g_{(\lceil\alpha M\rceil)},\qquad\VaRbar_\alpha^{\widehat{Q}_k,g}=g_{(\lfloor\alpha M\rfloor+1)}, 
\end{align}
where $\{g_{(i)}\}_{i=1}^M$ are the order statistics of the risk values. Thus $\VaR_\alpha^{\widehat{Q}_k,g}, \VaRbar_\alpha^{\widehat{Q}_k,g}$ can easily be computed via a simple sorting algorithm, making them a natural choice for designing CVaR-GPA. When $\alpha M\notin\mathbb Z$ we obtain the unique empirical quantile; when
$\alpha M\in\mathbb Z$,
$\VaR_\alpha^{\widehat{Q}_k,g}\leq\VaRbar_\alpha^{\widehat{Q}_k,g}$, with
strict inequality precisely when $g_{(\alpha M)}<g_{(\alpha M+1)}$, i.e.\ when
the two adjacent order statistics are not tied.

\paragraph{The update scheme of CVaR-GPA}
 Because $\widehat{Q}_k$ evolves across iterations, at each step the function $\phi^{*}_k$ is re-obtained by solving the variational problem \eqref{equation:Variational_KL_Lip} over an approximation of the function space $\Gamma_L$.  
Thus, the  $k^{\mathrm{th}}$ iteration of the update scheme is 
\begin{align}
    & Y^{(i)}_{k+1} =Y^{(i)}_{k} +   v^{(i)}_{k}  \Delta t, \quad v^{(i)}_{k}:= -
\nabla_x\phi^{*}_k(Y^{(i)}_{k})+ \lambda  \,b^{(i)}_{k}, \quad Y_0^{(i)} \sim \Ppre, \quad i=1,\ldots,M,  \nonumber  \\
 &   \phi^{*}_k = \arg\max_{\phi\in\Gamma^{\mathrm{NN}}_L}
    \left\{\frac1M\sum_{i=1}^{M}\phi(Y^{(i)}_{k})
      -\log \left( \frac1N\sum_{j=1}^{N} e^{\phi(X^{(j)})} \right)\right\} 
\label{Eq:Phi:Estimate}  \\
 & b^{(i)}_{k}=
\frac{2\Delta_k}{(1-\alpha)} \frac{Y^{(i)}_{k}}{\|Y^{(i)}_{k}\|}
\mathds{1}_{\{\|Y^{(i)}_{k}\| >y_k\}} , \quad y_k \in \{ \VaR_{\alpha}^{\widehat{Q}_k,g}, \overline{\VaR}_{\alpha}^{\widehat{Q}_k,g}\} \nonumber \\
 &\Delta_k = \mathrm{CVaR}_\alpha^{ \widehat{\Ptar} ,g}- \mathrm{CVaR}_\alpha^{\widehat{Q}_k,g}, \nonumber
\end{align}

where $\Gamma^{\mathrm{NN}}_L$ is a neural network approximation of the function space $\Gamma_L$ of $L$-Lipschitz functions and  $b^{(i)}_{k}$ is the velocity component induced by the CVaR-penalization in the $k^{\mathrm{th}}$ iteration of the algorithm.  The necessary tail statistics for the $k^{\mathrm{th}}$  update, namely ${\VaR}_{\alpha}^{\widehat{Q}_k,g},\overline{\VaR}_{\alpha}^{\widehat{Q}_k,g}, \mathrm{CVaR}_\alpha^{ \widehat{\Ptar} ,g}$, and $ \mathrm{CVaR}_\alpha^{\widehat{Q}_k,g}$ are computed via \Cref{alg:tail}.

There are two crucial properties of the components utilized in designing the loss functional $\Fg$ that enable us to develop our numerical algorithm.
\begin{enumerate}
    \item  The variational representation \eqref{equation:Variational_KL_Lip} is used to approximate its optimizer $\phi^*$: the function space $\Gamma_L$ of $L$-Lipschitz functions is approximated by a class of neural networks $\Gamma_L^{\mathrm{NN}}$ with spectral normalization \cite{miyato2018spectral}, and $\phi_k^*$ is obtained at each iteration by solving the finite-dimensional maximization problem \eqref{Eq:Phi:Estimate} over the network parameters. The velocity contribution $-\nabla_x \phi_k^*(Y_k^{(i)})$ from the Lipschitz-regularized KL divergence is then evaluated by automatic differentiation of the network at the particle positions.
 \item In contrast to the function $\phi^*$, whose approximation requires neural networks, the tail statistics $ {\VaR}_{\alpha}^{\widehat{Q}_k,g}, \overline{\VaR}_{\alpha}^{\widehat{Q}_k,g}$ and $\mathrm{CVaR}_\alpha^{\widehat{Q}_k,g}$ for the velocity field induced by the CVaR-penalization can be directly computed: ${\VaR}_{\alpha}^{\widehat{Q}_k,g}$ and $ \overline{\VaR}_{\alpha}^{\widehat{Q}_k,g}$ can be exactly computed by sorting the samples, and $\mathrm{CVaR}_\alpha^{\widehat{Q}_k,g}$   can be computed using the empirical version of the  Rockafellar-Uryasev formula
\begin{align}
\label{equation:CVAR:optimizationFriendly_empirical}
\mathrm{CVaR}_\alpha^{\widehat{Q}_k,g}=
{\VaR}_{\alpha}^{\widehat{Q}_k,g} + \frac{1}{(1-\alpha)M}\sum_{i=1}^M (g(Y_k^{(i)})-{\VaR}_{\alpha}^{\widehat{Q}_k,g})^+
\end{align} 
which results in more accurate and efficient computations.  
\end{enumerate}
\paragraph{The stopping criterion of CVaR-GPA}
\label{sec:alg} 
For a given $y \in T(Q)$, the CVaR-penalized gradient flow \eqref{Eq:Evol:Eq} terminates when its velocity field dies out, i.e., when $v^y_{Q_{T^*}}=0$ for some $T^*>0$. Hence, one can use the empirical estimate of the velocity field, and by extension the empirical estimate of the kinetic energy (at the $k^{\mathrm{th}}$ iteration) given by 
\begin{equation*}
  \mathcal K_k:=\frac{1}{2M}\sum_{i=1}^{M}\|v^{(i)}_{k}\|_2^2,  
\end{equation*}
to define a stopping criterion for CVaR-GPA. 
We terminate the algorithm when $\mathcal{K}_k< \epsilon$ for a given threshold $\epsilon$, resulting in an algorithm whose architecture depth is implicitly determined by the target distribution $\Ptar$.

\begin{algorithm}[htbp]
\caption{Empirical tail statistics}
\label{alg:tail}
\begin{algorithmic}[1]
\Require samples $\{Z_i\}_{i=1}^{m}$, $g(\cdot)=\|\cdot\|$: the risk function, $\alpha$:  parameter of the quantile
\State $g_i\gets g(Z_i)$; sort ascending $g_{(1)}\le\cdots\le g_{(m)}$ 
\State $\VaR\gets g_{(\lceil\alpha m\rceil)}$, \quad $\VaRbar\gets g_{(\lfloor\alpha m\rfloor+1)}$
\State $\CVaR\gets \VaR+\dfrac{1}{(1-\alpha)\,m}\displaystyle\sum_{i=\lceil\alpha m\rceil+1}^{m}\big(g_{(i)}-\VaR\big)$ \Comment{from  \eqref{equation:CVAR:optimizationFriendly_empirical} }
\State \Return $(\VaR,\ \VaRbar,\ \CVaR)$
\end{algorithmic}
\end{algorithm}

\begin{algorithm}[htbp]
\caption{CVaR-GPA}
\label{alg:gpa}
\begin{algorithmic}[1]
\Require $g(\cdot)=\|\cdot\|$: the risk function, $\alpha$:  parameter of the quantile, $\lambda$: weight parameter, $L$: Lipschitz constant, $\Delta t$: step size,  $M$: number of pre-trained particles, $N$: number of target particles, $\epsilon$: a chosen error threshold.
\Require  $W=\{W^l\}_{l=1}^D$: parameters for the neural network $\phi(\cdot; W):\R^d \rightarrow \R$, $D$: depth of the NN,   $N_\phi$: number of updates for the NN. 
\Result Fine-tuned particles $Y_K=\{Y_K^{(i)}\}_{i=1}^{M}$.
\State Sample $\{X^{(j)}\}_{j=1}^{N} \sim \Ptar$, a batch of samples from the target distribution.
\State Sample $\{Y_0^{(i)}\}_{i=1}^{M} \sim \Ppre$, a batch of samples from the pre-trained distribution.
\State Compute the target tail statistic $\mathrm{CVaR}_\alpha^{ \widehat{\Ptar} ,g}$ via \Cref{alg:tail} using  $\{X^{(j)}\}_{j=1}^{N} \sim \Ptar$.
\State $k\gets 0$, \ $\mathcal K\gets 2\epsilon$
\While{$\mathcal K>\epsilon$}
  \State $\phi^*_k\gets\displaystyle\arg\max_{\phi\in\Gamma_L^{\mathrm{NN}}}\Big\{\tfrac1M\textstyle\sum_i\phi(Y_k^{(i)})-\log\big(\tfrac1N\sum_j e^{\phi(X^{(j)})}\big)\Big\}$ \Comment{$N_\phi$ steps}
  \State $(\VaR_\alpha^{ \widehat{Q} ,g},\VaRbar_\alpha^{ \widehat{Q} ,g},\mathrm{CVaR}_\alpha^{ \widehat{Q} ,g})$ via \Cref{alg:tail} using  $\{Y^{(i)}_k\}_{i=1}^{M}$.  
  \State $y_k\gets \VaR_\alpha^{ \widehat{Q} ,g}$ or $\VaRbar_\alpha^{ \widehat{Q} ,g}$
   \State $\Delta_k\gets \mathrm{CVaR}_\alpha^{ \widehat{\Ptar} ,g}-\mathrm{CVaR}_\alpha^{ \widehat{Q} ,g}$
  \For{$i=1,\dots,M$}
     \State  $v_k^{(i)}\gets -\nabla_x\phi^*_k(Y_k^{(i)})+\dfrac{2\lambda}{1-\alpha}\,\Delta_k\,\dfrac{Y_k^{(i)}}{\|Y_k^{(i)}\|}\,\ind\{\|Y_k^{(i)}\|>y_k\}$ 
  \EndFor
  \State $Y_{k+1}^{(i)}\gets Y_k^{(i)}+\Delta t\,v_k^{(i)}$;\quad $\mathcal K\gets\tfrac1{2M}\sum_i\|v_k^{(i)}\|^2$
  \State $k\gets k+1$ 
\EndWhile
\State \Return $\{Y_K^{(i)}\}_{i=1}^M$
\end{algorithmic}
\end{algorithm}

\paragraph{Tail agnosticism}
The only target-dependent inputs utilized in \Cref{alg:gpa} are the target samples $\{X^{(j)}\}$ feeding the Lipschitz-regularized KL divergence and the scalar value $\CVaR_\alpha^{\Ptar,g}$ computed using the target samples $\{X^{(j)}\}$. This contrasts with tail-specialized generators that require Hill/peaks-over-threshold estimation of the tail exponent.

\section{Numerical experiments}
\label{Sec:Results}

We evaluate the performance of CVaR-GPA across a range of synthetic and real-world benchmarks to characterize its ability to learn complex tail structures. 
We first demonstrate the tail-agnostic capability of CVaR-GPA by showing that the same fine-tuning procedure improves distributions with diverse tail indices without prior knowledge of the target tail behavior. Next, we assess its scalability and robustness on anisotropic multivariate distributions, including high-dimensional real-world data and synthetic targets with heterogeneous marginal tail behaviors.

\iffalse
This section validates, on multiple target datasets, the properties of CVaR-GPA: the algorithm is
\emph{tail-agnostic}, in that it requires no prior specification of the
target's tail index; it performs well on \emph{multivariate} target distributions with both \emph{isotropic} and \emph{anisotropic} distributions, whose marginal distributions can be independent of, or dependent on, each other; and it
works with finite samples of the pre-trained model, consistent with the fine-tuning setting of \Cref{Sec:CVaR:GPA}. 

We evaluate CVaR-GPA on four heavy-tailed target distributions, standard choices for benchmarking model performance on heavy-tailed data (e.g., \cite{chen2025robust, hickling2024flexible, pandey2025_heavytaildiffusion}): a 2-dimensional isotropic Student-$t$ distribution, testing tail-agnostic learning across various tail decay rates; a 5-dimensional anisotropic Student-$t$ distribution, with both heavy-tailed and light-tailed marginals; Neal's funnel distribution, with dependent marginals; and the Fama-French 25 monthly portfolios dataset, a real-world, anisotropic, high-dimensional benchmark. 
\fi

While CVaR-GPA can be used to fine-tune any pre-trained model, in this work we choose Lip-KL-GPA \cite{chen2025robust, gu2024lipschitz} as the pre-trained model, based on both its theoretical properties and its demonstrated empirical performance in learning heavy-tailed distributions \cite{chen2025robust}.

\paragraph{Error metrics for evaluating the accuracy of the learned target distributions}
Standard error metrics, which weigh all the regions of the domain equally, alone are not sufficient for evaluating the learned heavy-tailed distributions. The tail region contributes negligible mass to such an error metric. Thus, a learned distribution may have a low error value even when it is underestimating the tail region or missing extreme events entirely. This necessitates tail-sensitive metrics. We therefore use both a standard metric and a tail-sensitive metric to evaluate the learned distributions.

For $N$ empirical samples $\{Z^{(i)}_Q\}_{i=1}^{N}$ drawn from a distribution $Q \in \mathcal{P}(\R^d)$, we denote the empirical Cumulative Distribution Function (CDF) and the Complementary CDF (CCDF) of $\{g(Z^{(i)}_Q)\}_{i=1}^{N}$  by 
 $\widehat{\Psi}^{Q,g}$ and ${\bar \Psi}^{Q,g}$. We fix $g(x)=\|x\|$ and define
\begin{align}
\widehat{\Psi}^{Q,g}(r) = \frac{1}{N}\sum_{i=1}^{N} \mathds{1}_{\{\|Z^{(i)}_Q\| \le r\}},
\qquad
{\bar \Psi}^{Q,g}(r) = 1 - \widehat{\Psi}^{Q,g}(r).
\end{align}
 We use the following metrics.
\begin{enumerate}
    \item \textbf{Global $L^1$ error:} The $L^1$ distance between the CCDFs of the learned distribution $Q$ and the target distribution $\Ptar$ over the domain $\mathrm{supp}(Q) \cup\mathrm{supp} (\Ptar)$
\begin{align}
\label{Eq:Global:Error}
\mathcal{E}_{L^1}(Q; \Ptar) =
\int_{0}^{r_{\max}} \big| \bar{\Psi}^{Q,g}(r) -  \bar{\Psi}^{\Ptar,g}(r)\big|  dr.
\end{align}
where $r_{\max} = \max\{\max_i \|Z^{(i)}_Q\|,\ \max_j \|Z^{(j)}_{\Ptar}\|\}$ and the integral is
evaluated by the trapezoidal rule on a uniform $2000$-point grid of $[0, r_{\max}]$.

\item \textbf{Tail error:} In the tail region, the absolute differences in CCDF are not informative because $\bar{\Psi}^{Q,g}$ is small. Hence, we use the log-CCDF discrepancy
\begin{align}
\label{Eq:Tail:Error} 
\mathcal{E}_{\text{tail}}(Q; \Ptar)=
\frac{1}{|T_{\text{tail}}|}\sum_{t \in T_{\text{tail}}}
\big|\log(\bar{\Psi}^{Q,g}(t)+\varepsilon) - \log(\bar{\Psi}^{\Ptar,g}(t)+\varepsilon)\big|,
\end{align}
where $T_{\text{tail}} : =\{ t : t =  (\widehat{\Psi}^{\Ptar,g})^{-1}(p) $ for 20  equi-spaced points $p \in [0.95,0.999] \}$  and  $\varepsilon = 10^{-10}$ is added  for numerical stability. 
\end{enumerate}

Throughout this section, we fix the hyperparameters as $(\lambda, \alpha)= (0.02,\ 0.999)$, and choose the outer radius $\VaRbar$ for the activation region of the CVaR velocity component.

\subsection{Learning tail behavior without prior tail knowledge}
\label{subsec:numeric:tail:agnostic}

We first provide a calibrated example demonstrating the tail-agnostic learning capability of CVaR-GPA. 
Specifically, we consider two-dimensional isotropic Student-$t$ distributions with varying tail indices,
\(\nu \in \{1,1.2,1.5,1.8\}\), where decreasing \(\nu\) corresponds to increasingly heavy polynomial tails.
This family provides a controlled calibration of tail decay rates, as a smaller \(\nu\) corresponds to heavier polynomial tail decay, with \(\nu=1\) recovering the Cauchy distribution.
Although our theoretical analysis assumes \(\E_{\Ptar}[g]<\infty\), this condition is automatically satisfied for the empirical target measure \(\widehat{\Ptar}\), which has a finite support, allowing us to evaluate CVaR-GPA on the full range of considered tail indices.

CVaR-GPA is applied to all target distributions using the same hyperparameters,  without any target-specific tuning or prior knowledge of the tail index \(\nu\).
As shown in \Cref{fig:student_t_combined}, CVaR-GPA consistently reduces both the global \(L^1\) error \(\mathcal{E}_{L^1}\) and the tail error \(\mathcal{E}_{\mathrm{tail}}\) of the pre-trained model across all tail indices.
This demonstrates that CVaR-GPA can adaptively improve tail learning across a range of tail behaviors without requiring prior specification of the target tail decay rate.

 \begin{figure}[tbhp]
    \centering
    \includegraphics[width=\linewidth]{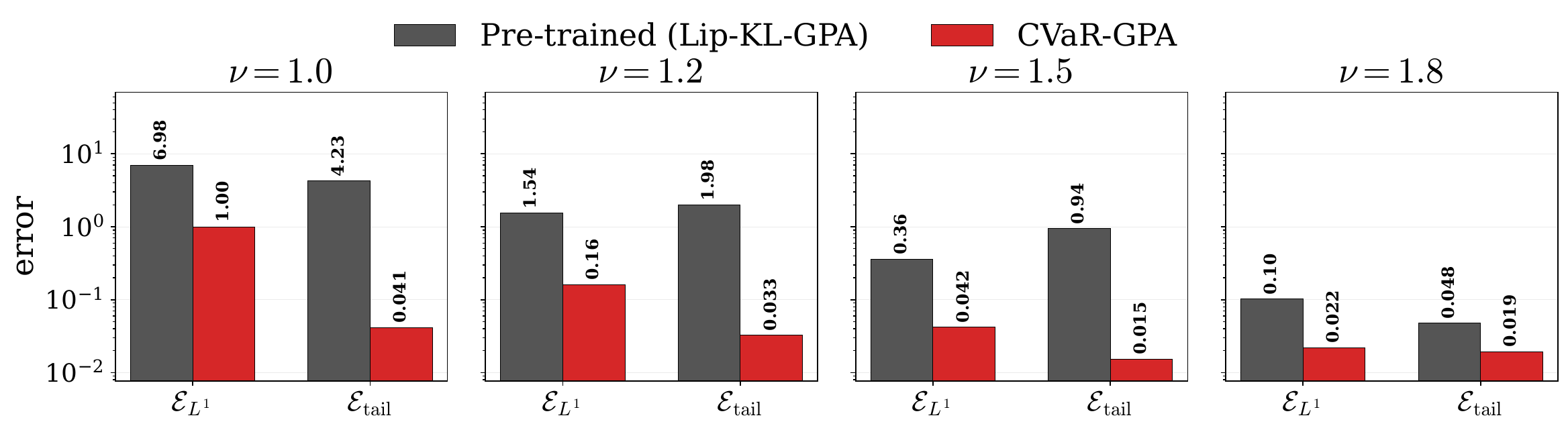}
    \caption{Comparison of the pre-trained model (Lip-KL-GPA) vs. the fine-tuned model (CVaR-GPA) on a 2-dimensional isotropic Student-$t$ distribution via the global $L^1$ error and the tail error (both shown on a log scale) for the joint distribution: CVaR-GPA fine-tunes the Lip-KL-GPA and improves the accuracy of learning the 2-dimensional isotropic Student-$t$ distribution for each tail index $\nu \in \{1, 1.2, 1.5, 1.8\}$  with no modifications curated to the tail index $\nu$.}
    \label{fig:student_t_combined}
\end{figure}

The same no-prior-tail-knowledge setting is maintained throughout the subsequent experiments, where we further investigate the ability of CVaR-GPA to learn multivariate distributions with anisotropic tails.

\subsection{Learning Multivariate Anisotropic Heavy-Tailed Distributions}
\label{subsec:numeric:multivariate:anisotropic}

Anisotropic multivariate distributions introduce two important challenges for tail-aware generative modeling: scalability with increasing dimension and robustness to heterogeneous tail behaviors across dimensions. 
We investigate these challenges through a series of examples, including high-dimensional real-world distributions and synthetic targets with varying degrees of tail heterogeneity. Across these experiments, CVaR-GPA improves tail learning without requiring prior knowledge of the target tail behavior. 

\paragraph{Scalability to high-dimensional anisotropic distributions.}

We first evaluate the scalability of CVaR-GPA on the \emph{Fama-French 25 monthly portfolios} \cite{FAMA19933}, a real-world 25-dimensional anisotropic dataset.
The marginal tail indices estimated using the Hill estimator \cite{hill1975} range from \(2.21\) to \(3.18\), representing a realistic heavy-tailed regime with moderate variation across dimensions.
As shown in \Cref{fig:ff25_combined_intro}, CVaR-GPA consistently decreases both the global \(L^1\) error and the tail error across all dimensions compared to the pre-trained model.
These results demonstrate the scalability of CVaR-GPA to high-dimensional anisotropic target distributions while effectively improving tail learning across dimensions.

\paragraph{Robustness to mixed heavy- and light-tailed marginals.}

We next examine whether CVaR-GPA remains effective when the target distribution contains both heavy- and light-tailed marginals.
For this purpose, we consider \emph{Neal's funnel distribution} \cite{neal2003slice}, a canonical two-dimensional benchmark with a Gaussian marginal and a heavy-tailed marginal:
\begin{align}
    x &\sim \mathcal{N}(0,9),\\
    y|x &\sim \mathcal{N}(0,e^x).
\end{align}
As shown in \Cref{fig:neal_combined}, CVaR-GPA improves both marginals by reducing the global \(L^1\) error and the tail error relative to the pre-trained model.
This demonstrates that the presence of both heavy- and light-tailed components alone does not prevent effective tail-aware fine-tuning.
In this setting, the difference in tail behavior between the two marginals is moderate, allowing the CVaR velocity correction to remain effective across both components.
\begin{figure}[tbhp]
    \centering
    \includegraphics[width=\linewidth]{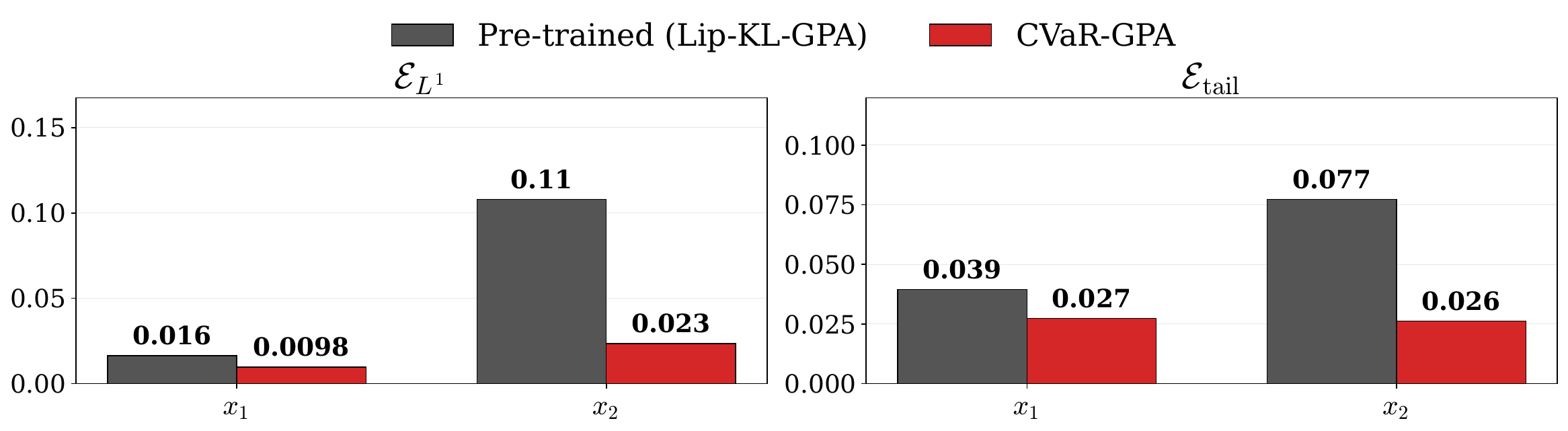}
    \caption{ Comparison of the pre-trained model (Lip-KL-GPA) vs. the fine-tuned model (CVaR-GPA) on Neal's funnel distribution via the global $L^1$ error and the tail error (both shown on a log scale) for the marginal distributions: CVaR-GPA fine-tunes the Lip-KL-GPA and improves the accuracy of learning 
    each of the marginal distributions.  }
    \label{fig:neal_combined}
\end{figure}

\paragraph{Learning under extreme tail heterogeneity.} 

Finally, we investigate a more challenging synthetic setting where marginal tail behaviors vary across a wide range.
We consider a \emph{5-dimensional anisotropic Student-$t$ distribution} with independent marginals and tail indices \(\nu=(1,1.5,3,10,30)\).
This construction provides a controlled evaluation of whether CVaR-GPA can distinguish heterogeneous tail behaviors across dimensions without prior knowledge of the target tail indices, ranging from extremely heavy-tailed (\(\nu=1\)) to nearly Gaussian (\(\nu=30\)) marginals.

\begin{figure}[tbhp]
    \centering
    \includegraphics[width=\linewidth]{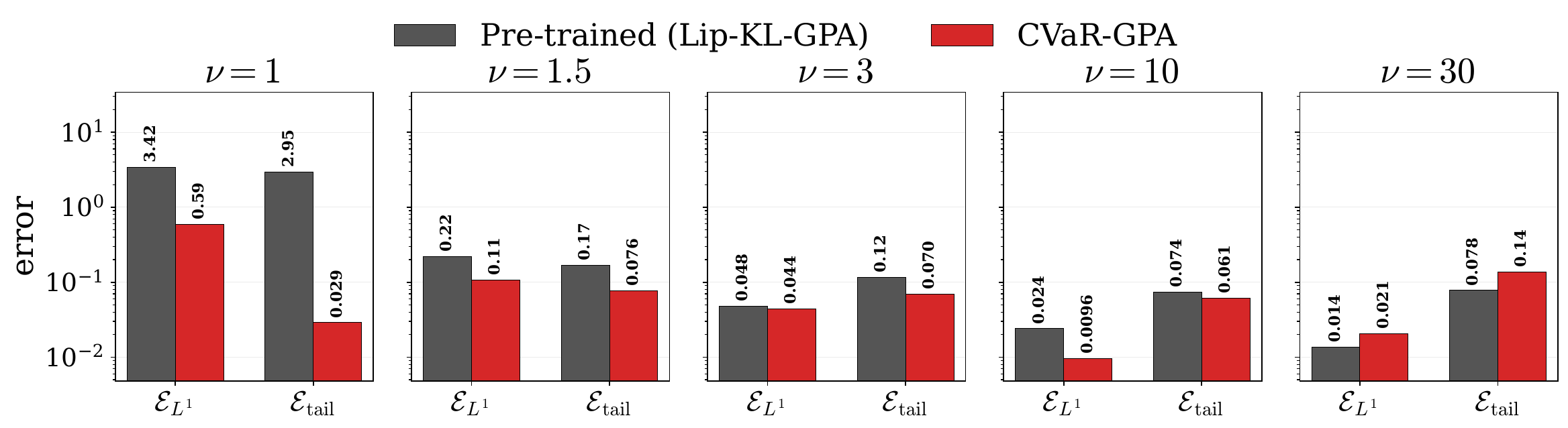}
    \caption{Comparison of the pre-trained model (Lip-KL-GPA) vs. the fine-tuned model (CVaR-GPA) on a 5-dimensional anisotropic Student-$t$ distribution ($\nu = (1, 1.5, 3, 10, 30)$), evaluated via the global $L^1$ error and the tail error (both shown on a log scale) for each marginal distribution.  CVaR-GPA actively decreases the global $L^1$ error and the tail error for heavy-tailed marginals (with $\nu = (1, 1.5, 3, 10)$), while increasing the global $L^1$ error and the tail error on the nearly Gaussian
     marginal with $\nu = 30$.}
    \label{fig:aniso_d5_perdim}
\end{figure}

As shown in \Cref{fig:aniso_d5_perdim}, CVaR-GPA consistently reduces both the global \(L^1\) error and the tail error for the heavy-tailed marginals with \(\nu=(1,1.5,3,10)\).
For the nearly Gaussian marginal with \(\nu=30\), however, both errors increase relative to the pre-trained model.
These results suggest that while CVaR-GPA effectively handles a broad range of heterogeneous heavy-tailed behaviors, an extreme disparity between heavy-tailed and nearly Gaussian marginals can introduce calibration challenges.
In such regimes, the global CVaR correction may become dominated by the heaviest-tailed directions, leaving the correction less calibrated for nearly Gaussian components.
Addressing this regime provides an opportunity for further refinement of the current formulation. We defer such a refinement to our future work, see \Cref{Sec:Conclusions}.

\section{Conclusions and discussions}

\label{Sec:Conclusions}

We propose a tail-agnostic algorithm, CVaR-GPA, to fine-tune pre-trained models to learn multivariate heavy-tailed distributions with both anisotropic and isotropic tails. The CVaR penalization mitigates the effects of data scarcity in the tail region, remedying the premature saturation issue exhibited by pre-trained models. We use the kinetic energy of the particle system as the stopping criterion for CVaR-GPA, thus creating a fine-tuning model architecture whose depth is implicitly determined by the target data distribution, as opposed to a pre-determined feature of the model architecture. In our work, we compare the performance of the proposed algorithm, CVaR-GPA, with the Lip-KL-GPA \cite{gu2024lipschitz} that has been shown to outperform many existing generative models, including $f$-GANs, OT flows, CNFs, and SGMs for learning heavy-tailed distributions \cite{chen2025robust}. The simulation results show that our proposed CVaR-GPA demonstrates even more accurate learning of heavy-tailed targets. 

\paragraph{Future directions}
Although we adopted the natural choice of the radial risk function in our work, the theoretical analysis was carried out for a general risk function $g$, thereby opening several natural directions for future work. For instance, one can replace the radial risk function with a per-coordinate risk function that adapts the velocity contribution from CVaR-penalization and its activation region to each marginal distribution. Such a risk function may potentially be more effective on anisotropic targets with tail indices that span a wide range that include both light-tailed and extremely heavy-tailed marginals. CVaR-GPA can also be modified by utilizing other spectral risk measures in place of CVaR to penalize the Lipschitz-regularized KL divergence. For instance, a spectral risk measure that weighs extreme regions with monotonically increasing weights may potentially capture the tail regions of extremely heavy-tailed targets better. One may also define an alternate, but related spectral risk measure to CVaR by smoothing the function $(g-y)^+$ at $g(\cdot)=y$, so that its first variational derivative is well-defined. We defer such extensions to our future work.
\section*{Funding}
This work was supported in part by the Air Force Office of Scientific
Research (AFOSR grant FA9550-21-1-0354) (T.G., H.G., Z.C., M.K., L.R.-B.) and by
the National Science Foundation (NSF grants DMS-2307115 and DMS-2606221 (M.K.,
L.R.-B.); NSF grant DMS-2606084 (Z.C.)).
\bibliographystyle{abbrv}
\bibliography{references}

@inproceedings{jaini2020tails,
  title={Tails of Lipschitz triangular flows},
  author={Jaini, Priyank and Kobyzev, Ivan and Yu, Yaoliang and Brubaker, Marcus},
  booktitle={International Conference on Machine Learning},
  pages={4673--4681},
  year={2020},
  organization={PMLR}
}

@article{Rock2002CVaR,
  title={Conditional value-at-risk for general loss distributions},
  author={Rockafellar, R Tyrrell and Uryasev, Stanislav},
  journal={Journal of banking \& finance},
  volume={26},
  number={7},
  pages={1443--1471},
  year={2002},
  publisher={Elsevier}
}

@article{chen2025robust,
  title={Robust generative learning with Lipschitz-regularized $\alpha$-divergences allows minimal assumptions on target distributions},
  author={Chen, Ziyu and Gu, Hyemin and Katsoulakis, Markos A and Rey-Bellet, Luc and Zhu, Wei},
  journal={Information and Inference: A Journal of the IMA},
  volume={14},
  number={4},
  pages={iaaf028},
  year={2025},
  publisher={Oxford University Press}
}

@article{Allouche2022EV-GAN,
  title={{EV}-{GAN}: Simulation of extreme events with {ReLU} neural networks},
  author={Allouche, Micha{\"e}l and Girard, St{\'e}phane and Gobet, Emmanuel},
  journal={Journal of Machine Learning Research},
  volume={23},
  number={150},
  pages={1--39},
  year={2022}
}

@article{pandey2025_heavytaildiffusion,
  title={Heavy-tailed diffusion models},
  author={Pandey, Kushagra and Pathak, Jaideep and Xu, Yilun and Mandt, Stephan and Pritchard, Michael and Vahdat, Arash and Mardani, Morteza},
  journal={arXiv preprint arXiv:2410.14171},
  year={2024}
}

@inproceedings{Chiang2021ParetoGAN,
  title={Pareto GAN: Extending the representational power of gans to heavy-tailed distributions},
  author={Huster, Todd and Cohen, Jeremy and Lin, Zinan and Chan, Kevin and Kamhoua, Charles and Leslie, Nandi O and Chiang, Cho-Yu Jason and Sekar, Vyas},
  booktitle={International Conference on Machine Learning},
  pages={4523--4532},
  year={2021},
  organization={PMLR}
}

@inproceedings{guan2025mirrorflowmatchingheavytailed,
  title={Mirror flow matching with heavy-tailed priors for generative modeling on convex domains},
  author={Guan, Yunrui and Balasubramanian, Krishna and Ma, Shiqian},
  booktitle={International Conference on Learning Representations},
  volume={2026},
  pages={130098--130124},
  year={2026}
}

@article{Birrell2022fGamma,
  author  = {Jeremiah Birrell and Paul Dupuis and Markos A. Katsoulakis and Yannis Pantazis and Luc Rey-Bellet},
  title   = {(f,Gamma)-Divergences: Interpolating between f-Divergences and Integral Probability Metrics},
  journal = {Journal of Machine Learning Research},
  year    = {2022},
  volume  = {23},
  number  = {39},
  pages   = {1--70},
  url     = {http://jmlr.org/papers/v23/21-0100.html}
}

@article{gu2024lipschitz,
  title={Lipschitz-regularized gradient flows and generative particle algorithms for high-dimensional scarce data},
  author={Gu, Hyemin and Birmpa, Panagiota and Pantazis, Yannis and Rey-Bellet, Luc and Katsoulakis, Markos A},
  journal={SIAM Journal on Mathematics of Data Science},
  volume={6},
  number={4},
  pages={1205--1235},
  year={2024},
  publisher={SIAM}
}

@article{albrecher2006ruin,
  title={Ruin probabilities and aggregrate claims distributions for shot noise Cox processes},
  author={Albrecher, Hansj{\"o}rg and Asmussen, S{\o}ren},
  journal={Scandinavian Actuarial Journal},
  volume={2006},
  number={2},
  pages={86--110},
  year={2006},
  publisher={Taylor \& Francis}
}

@article{embrechts1982estimates,
  title={Estimates for the probability of ruin with special emphasis on the possibility of large claims},
  author={Embrechts, Paul and Veraverbeke, No{\"e}l},
  journal={Insurance: Mathematics and Economics},
  volume={1},
  number={1},
  pages={55--72},
  year={1982},
  publisher={Elsevier}
}

@book{grossi2005catastrophe,
  title={Catastrophe modeling: a new approach to managing risk},
  author={Grossi, Patricia and Kunreuther, Howard and Patel, Chandu C},
  volume={25},
  year={2005},
  publisher={Springer Science \& Business Media}
}

@article{neal2003slice,
  title={Slice sampling},
  author={Neal, Radford M},
  journal={The Annals of Statistics},
  volume={31},
  number={3},
  pages={705--767},
  year={2003},
  publisher={Institute of Mathematical Statistics}
}

@article{cirillo2020tail,
  title={Tail risk of contagious diseases},
  author={Cirillo, Pasquale and Taleb, Nassim Nicholas},
  journal={Nature Physics},
  volume={16},
  number={6},
  pages={606--613},
  year={2020},
  publisher={Nature Publishing Group UK London}
}

@article{hickling2024flexible,
  title={Flexible tails for normalizing flows},
  author={Hickling, Tennessee and Prangle, Dennis},
  journal={arXiv preprint arXiv:2406.16971},
  year={2024}
}

@article{allouche2026exceedgan,
  title={ExceedGAN: simulation above extreme thresholds using Generative Adversarial Networks},
  author={Allouche, Micha{\"e}l and Girard, St{\'e}phane and Gobet, Emmanuel},
  journal={Extremes},
  pages={1--23},
  year={2026},
  publisher={Springer}
}

@inproceedings{bhatia2021exgan,
  title={Exgan: Adversarial generation of extreme samples},
  author={Bhatia, Siddharth and Jain, Arjit and Hooi, Bryan},
  booktitle={Proceedings of the AAAI Conference on Artificial Intelligence},
  volume={35},
  number={8},
  pages={6750--6758},
  year={2021}
}

@article{cont2026tail,
  title={Tail-gan: Learning to simulate tail risk scenarios},
  author={Cont, Rama and Cucuringu, Mihai and Xu, Renyuan and Zhang, Chao},
  journal={Management Science},
  volume={72},
  number={4},
  pages={2917--2936},
  year={2026},
  publisher={INFORMS}
}

@article{acerbi2001expected,
  title={Expected shortfall as a tool for financial risk management},
  author={Acerbi, Carlo and Nordio, Claudio and Sirtori, Carlo},
  journal={arXiv preprint cond-mat/0102304},
  year={2001}
}

@article{kishida2023risk,
  title={Risk-aware stability, ultimate boundedness, and positive invariance},
  author={Kishida, Masako},
  journal={IEEE Transactions on Automatic Control},
  volume={69},
  number={1},
  pages={681--688},
  year={2023},
  publisher={IEEE}
}

@article{acerbi2002portfolio,
  title={Portfolio optimization with spectral measures of risk},
  author={Acerbi, Carlo and Simonetti, Prospero},
  journal={arXiv preprint cond-mat/0203607},
  year={2002}
}

@article{dupuis2022formulation,
  title={Formulation and properties of a divergence used to compare probability measures without absolute continuity},
  author={Dupuis, Paul and Mao, Yixiang},
  journal={ESAIM: Control, Optimisation and Calculus of Variations},
  volume={28},
  pages={10},
  year={2022},
  publisher={EDP Sciences}
}

@article{wang2026efficient,
  title={Efficient Tail-Aware Generative Optimization via Flow Model Fine-Tuning},
  author={Wang, Zifan and De Santi, Riccardo and Mo, Xiaoyu and Zavlanos, Michael M and Krause, Andreas and Johansson, Karl H},
  journal={arXiv preprint arXiv:2602.16796},
  year={2026}
}

@article{NEURIPS2025_10715deb,
  title={Flow density control: Generative optimization beyond entropy-regularized fine-tuning},
  author={De Santi, Riccardo and Vlastelica, Marin and Hsieh, Ya-Ping and Shen, Zebang and He, Niao and Krause, Andreas},
  journal={Advances in neural information processing systems},
  volume={38},
  pages={11056--11088},
  year={2026}
}

@article{chaudhary2024risk,
  title={Risk-averse fine-tuning of large language models},
  author={Chaudhary, Sapana and Dinesha, Ujwal and Kalathil, Dileep and Shakkottai, Srinivas},
  journal={Advances in Neural Information Processing Systems},
  volume={37},
  pages={107003--107038},
  year={2024}
}

@article{FAMA19933,
title = {Common risk factors in the returns on stocks and bonds},
journal = {Journal of Financial Economics},
volume = {33},
number = {1},
pages = {3-56},
year = {1993},
issn = {0304-405X},
doi = {https://doi.org/10.1016/0304-405X(93)90023-5},
url = {https://www.sciencedirect.com/science/article/pii/0304405X93900235},
author = {Eugene F. Fama and Kenneth R. French}
}

@article{miyato2018spectral,
  title={Spectral normalization for generative adversarial networks},
  author={Miyato, Takeru and Kataoka, Toshiki and Koyama, Masanori and Yoshida, Yuichi},
  journal={arXiv preprint arXiv:1802.05957},
  year={2018}
}

@book{clarke1990optimization,
  title={Optimization and nonsmooth analysis},
  author={Clarke, Frank H},
  year={1990},
  publisher={SIAM}
}

@article{jordan1998variational,
  title={The variational formulation of the {F}okker--{P}lanck equation},
  author={Jordan, Richard and Kinderlehrer, David and Otto, Felix},
  journal={SIAM Journal on Mathematical Analysis},
  volume={29},
  number={1},
  pages={1--17},
  year={1998},
  publisher={SIAM}
}

@article{otto2001geometry,
author = {Felix Otto},
title = {THE GEOMETRY OF DISSIPATIVE EVOLUTION EQUATIONS: THE POROUS MEDIUM EQUATION},
journal = {Communications in Partial Differential Equations},
volume = {26},
number = {1-2},
pages = {101--174},
year = {2001},
publisher = {Taylor \& Francis},
doi = {10.1081/PDE-100002243},
URL = { https://doi.org/10.1081/PDE-100002243
},
eprint = {  https://doi.org/10.1081/PDE-100002243
}

}

@article{hill1975,
author = {Bruce M. Hill},
title = {{A Simple General Approach to Inference About the Tail of a Distribution}},
volume = {3},
journal = {The Annals of Statistics},
number = {5},
publisher = {Institute of Mathematical Statistics},
pages = {1163 -- 1174},
year = {1975},
doi = {10.1214/aos/1176343247},
URL = {https://doi.org/10.1214/aos/1176343247}
}

@article{kantorovich1958space,
  title={On a space of totally additive functions},
  author={Kantorovich, Leonid Vasilevich and Rubinshtein, SG},
  journal={Vestnik of the St. Petersburg University: Mathematics},
  volume={13},
  number={7},
  pages={52--59},
  year={1958},
  publisher={Allerton Press, Inc.}
}

@article{liu2024_heavytaileddiffusion,
  title={Learning to simulate from heavy-tailed distribution via diffusion model},
  author={Liu, Haoyu and Zhu, Tingyu and Jia, Nanshan and He, Jinghai and Zheng, Zeyu},
  journal={Available at SSRN 4975931},
  year={2024}
}
\appendix
\renewcommand{\theequation}{\thesection\arabic{equation}}
\setcounter{equation}{0}
\makeatletter
\renewcommand{\@seccntformat}[1]{Appendix \csname the#1\endcsname: }
\makeatother

\section{Clarke's generalized subgradients}
\label{app:clarke}

Let $Q\mapsto J(Q)$ be a locally Lipschitz continuous (with respect to the Wasserstein-$1$ metric $\mathcal{W}_1$ on $\PR$) real-valued functional,  defined on $\PR$. The Clarke generalized directional derivative  \cite{clarke1990optimization} of $J$ at $Q$ for an admissible perturbation $\rho$ is defined by
\begin{align}
J^\circ(Q;\rho):=\limsup_{Q'\to Q,\ t\downarrow0}\frac{J(Q'+t\rho)-J(Q')}{t}.
\end{align}
The \emph{Clarke subdifferential} of $J$ at $Q$ is defined as
\begin{align}
\partial J(Q):=\Big\{\xi:\ J^\circ(Q;\rho)\ge\textstyle\int\xi\,d\rho\ \ \text{for all admissible perturbations }\rho\Big\},
\end{align}
and its elements are referred to as the first variational subgradients of $J$ at $Q$.

\paragraph{Convex and concave cases}
When $J$ is \emph{convex}, its Clarke subdifferential coincides with the classical \emph{subdifferential} of convex analysis
\begin{align}
\partial J(Q):=\Big\{\xi:\ J(Q')\ge J(Q)+\int \xi\,d(Q'-Q)\ \ \forall Q'\in\PR\Big\}.
\end{align}
When $J$ is \emph{concave}, the Clarke subdifferential instead coincides with the \emph{superdifferential}
\begin{align}
\partial J(Q):=\Big\{\xi:\ J(Q')\le J(Q)+\int \xi\,d(Q'-Q)\ \ \forall Q'\in\PR\Big\},
\end{align}
and its elements are correspondingly called \emph{supergradients} rather than subgradients.  

The risk measure $\CVaR_\alpha^{Q,g}$ is an infimum of a family of functions affine in $Q$, by the Rockafellar-Uryasev formulation \eqref{eq:cvar_definition}-\eqref{Eq:CVaR:Min:F}. Thus $\CVaR_\alpha^{\cdot,g}$ is concave in $Q$. However, in this paper we use the single term \emph{variational subgradients} throughout to refer to the elements of $\partial J$ for any given functional $J$, regardless of whether $J$ is convex, concave, or neither.

\subsection{Proof of Lemma \ref{Lem:interval:subgradient}}
\label{App:Lemma:Interval}
\begin{Lemma} Let $J(Q)$ be locally Lipschitz continuous with respect to the Wasserstein-$1$ metric on $\PR$.
Assume that for every admissible perturbation $\rho$ at $Q$,
the one-sided derivatives $D^{\pm}_\rho J$ exist and $\int \xi\, d\rho$ lies
in the closed interval with endpoints $D^-_\rho J$ and $D^+_\rho J$. Then
 $\xi$ is a variational subgradient of $J$ at $Q$.
\end{Lemma}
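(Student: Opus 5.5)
The plan is to compare $\int\xi\,d\rho$ with the Clarke generalized directional derivative $J^\circ(Q;\rho)$ for an arbitrary admissible $\rho$. The key fact is that $J^\circ(Q;\rho)$ dominates both one-sided derivatives: it dominates $D^+_\rho J$, and (after a change of base point) also $D^-_\rho J$. Once $J^\circ(Q;\rho)\ge\max\{D^+_\rho J,D^-_\rho J\}$ is established, any number lying in the closed interval between $D^-_\rho J$ and $D^+_\rho J$ is at most this maximum. In particular $J^\circ(Q;\rho)\ge\int\xi\,d\rho$ for every admissible $\rho$, which is exactly the defining inequality of $\partial J(Q)$.

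\textbf{First inequality.} For $J^\circ(Q;\rho)\ge D^+_\rho J$, restrict the $\limsup$ in the definition of $J^\circ$ to the particular choice $Q'=Q$, $t=\epsilon\downarrow0$. Since $\rho$ is right-admissible, $Q+\epsilon\rho\in\PR$ for small $\epsilon>0$. A $\limsup$ over a larger family dominates the limit along this subfamily, so
\begin{equation*}
J^\circ(Q;\rho)\ge\lim_{\epsilon\to0^+}\frac{J(Q+\epsilon\rho)-J(Q)}{\epsilon}=D^+_\rho J .
\end{equation*}

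\textbf{Second inequality.} For $J^\circ(Q;\rho)\ge D^-_\rho J$, use moving base points $Q'_\delta:=Q-\delta\rho$ with $t=\delta\downarrow0$. These lie in $\PR$ because $\rho$ is left-admissible. They satisfy $\mathcal{W}_1(Q'_\delta,Q)\le\delta\int\|x\|\,d|\rho|\to0$, by the same Kantorovich–Rubinstein computation used in the lemma preceding Proposition~\ref{lem:W1-lipschitz}, so $Q'_\delta\to Q$. Then $Q'_\delta+\delta\rho=Q$, and
\begin{equation*}
\frac{J(Q'_\delta+\delta\rho)-J(Q'_\delta)}{\delta}=\frac{J(Q)-J(Q-\delta\rho)}{\delta}=\frac{J(Q^{-\delta})-J(Q)}{-\delta}\longrightarrow D^-_\rho J .
\end{equation*}
Since this is one admissible path in the $\limsup$, $J^\circ(Q;\rho)\ge D^-_\rho J$.

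\textbf{Main obstacle.} The delicate point is ensuring that the moving-base-point path in the second step is genuinely admissible in the definition of $J^\circ$. This requires $Q'\to Q$ in the $\mathcal{W}_1$ topology in which local Lipschitz continuity is assumed, and $Q'+t\rho\in\PR$, both of which were checked above. Local Lipschitz continuity also guarantees that $J^\circ(Q;\rho)$ is finite, by the earlier lemma, so the comparison is meaningful.
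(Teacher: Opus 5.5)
Your proposal is correct and follows the paper's proof essentially step for step: the constant base point $Q'=Q$ gives $J^\circ(Q;\rho)\ge D^+_\rho J$, the moving base point $Q-t\rho$ gives $J^\circ(Q;\rho)\ge D^-_\rho J$, and the interval hypothesis then yields $\int\xi\,d\rho\le\max\{D^+_\rho J,D^-_\rho J\}\le J^\circ(Q;\rho)$. Your check that $\mathcal{W}_1(Q-\delta\rho,Q)\le\delta\int\|x\|\,d|\rho|\to0$ is slightly more careful than the paper's appeal to total-variation convergence, because the $\limsup$ is naturally taken in the $\mathcal{W}_1$ topology, and total-variation convergence alone would not give that.
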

\begin{proof}
Let $\rho$ be an admissible perturbation at $Q$. The right/left one-sided derivatives of $J$ at $Q$ are defined by 
\begin{align}
D^\pm_\rho J:=\lim_{\epsilon\to0^\pm}\frac{1}{\epsilon}\big(J(Q^\epsilon)-J(Q)\big).
\end{align}
Restricting the limit
superior in the definition of
$J^\circ(Q;\rho)$ to the constant family $Q' = Q$ gives
\begin{equation*}
J^\circ(Q;\rho) \;\geq\; \lim_{t \downarrow 0}
\frac{J(Q + t\rho) - J(Q)}{t} \;=\; D^+_\rho J.
\end{equation*}
Similarly, we may instead take the family
$Q'_t := Q - t\rho$, which lies in $\PR$ for all small $t > 0$ by the
admissibility of $-\rho$ and converges to $Q$ in total variation as
$t \downarrow 0$. Along this family
\begin{equation*}
\frac{J(Q'_t + t\rho) - J(Q'_t)}{t}
\;=\; \frac{J(Q) - J(Q - t\rho)}{t}
\;=\; \frac{J(Q + s\rho) - J(Q)}{s}\bigg|_{s = -t}
\;\xrightarrow[t \downarrow 0]{}\; D^-_\rho J,
\end{equation*}
whence $J^\circ(Q;\rho) \geq D^-_\rho J$ as well. Therefore
$J^\circ(Q;\rho) \geq \max\{D^+_\rho J,\, D^-_\rho J\} \geq \int \xi\, d\rho$, and 
$\xi \in \partial J(Q)$, i.e. $\xi$ is a variational subgradient of $J(Q)$. 
\end{proof}

\section{Proof of Proposition \ref{lem:W1-lipschitz}}
\label{App:Loc:Lip}

The following proposition proves that both $\CVaR^{Q,g}_\alpha$ and $\Fg (Q;\Ptar)$ are locally Lipschitz continuous with respect to $\mathcal{W}_1$, which is the regularity hypothesis under which the Clarke subdifferential framework is classically defined \cite{clarke1990optimization}.

\begin{Proposition}[Local Lipschitz continuity]
\label{lem:W1-lipschitz-app}
Assume that $g$ is $\Lip(g)$-Lipschitz continuous and Assumption \ref{Assump} holds. Then we have the following:
\begin{enumerate}
\item The CVaR $Q \mapsto \CVaR_\alpha^{Q,g}$, is Lipschitz continuous with respect to the  Wasserstein-$1$ metric, i.e. 
\begin{align}
    \label{Eq:Loc:Lip:CVaR-app}
   \big|\CVaR_\alpha^{Q_1,g} - \CVaR_\alpha^{Q_2,g}\big| \le \dfrac{\Lip(g)}{1-\alpha}\, \mathcal{W}_1(Q_1,Q_2),
\end{align}
for all $Q_1, Q_2 \in \mathcal{Q}^g$.
\item The functional $Q \mapsto \Fg(Q;\Ptar)$ is locally Lipschitz continuous on $\mathcal{Q}^g$ with respect to $\mathcal{W}_1$.
\end{enumerate}
\end{Proposition}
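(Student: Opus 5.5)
For part (1) I will use the Rockafellar--Uryasev representation \eqref{eq:cvar_definition}--\eqref{Eq:CVaR:Min:F}. For a fixed $y\in\R$, the map $x\mapsto (g(x)-y)^+$ is $\Lip(g)$-Lipschitz, because $g$ is $\Lip(g)$-Lipschitz and $u\mapsto u^+$ is $1$-Lipschitz. Hence $\frac{1-\alpha}{\Lip(g)}\,(g-y)^+\in\Gamma_1$. The Kantorovich--Rubinstein duality then gives, for all $Q_1,Q_2\in\Qg$,
\begin{equation*}
\big|F^{Q_1,g}_\alpha(y)-F^{Q_2,g}_\alpha(y)\big|=\frac{1}{1-\alpha}\Big|\int (g-y)^+\,d(Q_1-Q_2)\Big|\le \frac{\Lip(g)}{1-\alpha}\,\mathcal{W}_1(Q_1,Q_2).
\end{equation*}
The integrals are finite since $Q_i\in\Qg$. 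If $\mathcal{W}_1(Q_1,Q_2)=\infty$ there is nothing to prove. This bound is uniform in $y$, and an infimum of a uniformly close family is itself close, since $|\inf_y a(y)-\inf_y b(y)|\le\sup_y|a(y)-b(y)|$. Both infima are finite (they equal the CVaRs), so \eqref{Eq:Loc:Lip:CVaR-app} follows.

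For part (2) I split $\Fg=\LipKL(\cdot\|\Ptar)+\lambda(\Delta C)^2$ and treat the two terms separately.

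For the divergence term I use the variational representation \eqref{equation:Variational_KL_Lip}. For any $\phi\in\Gamma_L$, the objective difference between $Q_1$ and $Q_2$ is $\int\phi\,d(Q_1-Q_2)$, whose absolute value is at most $L\,\mathcal{W}_1(Q_1,Q_2)$. Taking suprema gives $|\LipKL(Q_1\|\Ptar)-\LipKL(Q_2\|\Ptar)|\le L\,\mathcal{W}_1(Q_1,Q_2)$. This step needs both values finite, which holds under Assumption~\ref{Assump}(2) by the cited results of \cite{chen2025robust}; alternatively, finiteness follows from the infimal convolution \eqref{Eq:Lip:KL:Def} by taking $\gamma=\Ptar$ together with a finite first moment. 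I expect this to be the main technical point. It requires care when $\Ptar$ has no finite first moment, because then $\mathcal{W}_1(Q,\Ptar)$ may be infinite. In that case I would instead use the infimal convolution directly: the triangle inequality for $\mathcal{W}_1$ inside the infimum over $\gamma$ gives the same $L$-Lipschitz bound without requiring the infimum to be attained at $\gamma=\Ptar$.

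For the squared term I write
\begin{equation*}
(\Delta C(Q_1))^2-(\Delta C(Q_2))^2=\big(\CVaR_\alpha^{Q_2,g}-\CVaR_\alpha^{Q_1,g}\big)\big(2\CVaR_\alpha^{\Ptar,g}-\CVaR_\alpha^{Q_1,g}-\CVaR_\alpha^{Q_2,g}\big).
\end{equation*}
By part (1), the first factor is bounded by $\frac{\Lip(g)}{1-\alpha}\mathcal{W}_1(Q_1,Q_2)$. On a $\mathcal{W}_1$-ball of radius $r$ around a fixed $Q$, part (1) also bounds the second factor by $2|\Delta C(Q;\Ptar)|+\frac{2\Lip(g)}{1-\alpha}r$. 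This gives a local Lipschitz constant $L+\lambda\frac{\Lip(g)}{1-\alpha}\big(2|\Delta C(Q;\Ptar)|+\frac{2\Lip(g)r}{1-\alpha}\big)$. The constant grows with $r$, so the squared term is only locally Lipschitz, which is why part (2) asserts local rather than global Lipschitz continuity.
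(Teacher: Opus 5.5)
Your proposal is correct and follows the paper's proof essentially step for step. Part (1) uses the uniform-in-$y$ Kantorovich--Rubinstein bound on $F^{Q,g}_\alpha(y)$ together with $|\inf a-\inf b|\le\sup|a-b|$, and part (2) combines the $L\,\mathcal{W}_1$ bound from the variational representation \eqref{equation:Variational_KL_Lip} with the factorization of the squared CVaR difference on a $\mathcal{W}_1$-ball, giving exactly the paper's local constant $L+\frac{2\lambda\Lip(g)}{1-\alpha}\big(|\Delta C(Q_0;\Ptar)|+\frac{\Lip(g)}{1-\alpha}R\big)$. One small slip: the function to place in $\Gamma_1$ is $(g-y)^+/\Lip(g)$, not $\frac{1-\alpha}{\Lip(g)}(g-y)^+$, since the latter would only yield the weaker constant $\Lip(g)/(1-\alpha)^2$; your displayed inequality is nonetheless correct as written.
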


\begin{proof}

(1) Fix $y\in\R$. Since $t\mapsto t^+$ is $1$-Lipschitz and $g$ is $\Lip(g)$-Lipschitz, $x\mapsto (g(x)-y)^+$ is $\Lip(g)$-Lipschitz, uniformly in $y$. Thus we have
\begin{align*}
  \big|F^{Q_1,g}_\alpha(y) - F^{Q_2,g}_\alpha(y)\big|
&=  \frac{1}{1-\alpha} \big|\E_{Q_1}[(g-y)^+] - \E_{Q_2}[(g-y)^+]\big| \\
&\leq  \frac{1}{1-\alpha} \sup_{\phi\in\Gamma_{\Lip(g)}} \int \phi\, d(Q_1-Q_2),
 \nonumber \\ \\
&= \frac{\Lip(g) }{1-\alpha} \sup_{\psi\in\Gamma_{1}} \int \psi\, d(Q_1-Q_2)\\
&= \frac{\Lip(g) }{1-\alpha}  \mathcal{W}_1(Q_1,Q_2),
\end{align*}
for every $y$, uniformly. Here the last equality follows from the Kantorovich-Rubinstein duality \cite{kantorovich1958space}
$$\sup_{\psi \in \Gamma_1} \int \psi\, d(Q_1-Q_2) = \mathcal{W}_1(Q_1,Q_2).$$ Thus it follows 
\begin{align*}
\big|\CVaR_\alpha^{Q_1,g} - \CVaR_\alpha^{Q_2,g} \big| 
&=\big|\inf_{y \in \R} F^{Q_1,g}_\alpha(y) - \inf_{y \in \R} F^{Q_2,g}_\alpha(y) \big|\\ &\leq \sup_{y \in \R} \big|F^{Q_1,g}_\alpha(y) - F^{Q_2,g}_\alpha(y)\big| \\
&\leq \frac{\Lip(g) }{1-\alpha}  \mathcal{W}_1(Q_1,Q_2),
\end{align*}
proving \eqref{Eq:Loc:Lip:CVaR}.

(2) Fix $Q_0 \in \mathcal{Q}^g$ and $R>0$, and let $Q_1,Q_2 \in B_R(Q_0)$, so that $\mathcal{W}_1(Q_1,Q_0)\le R$ and $\mathcal{W}_1(Q_2,Q_0)\le R$.
By \eqref{Eq:Loc:Lip:CVaR}, for $i=1,2$
\begin{align}
\big|\Delta C(Q_i;\Ptar) - \Delta C(Q_0;\Ptar)\big| \;\le\; \frac{\Lip(g)}{1-\alpha}\,\mathcal{W}_1(Q_i,Q_0) \;\le\; \frac{\Lip(g)}{1-\alpha}R.
\end{align}
Hence 
\begin{align*}
|\Delta C(Q_i;\Ptar)| &\le |\Delta C(Q_0;\Ptar)| + \big|\Delta C(Q_i;\Ptar)-\Delta C(Q_0;\Ptar)\big| \\
&\leq |\Delta C(Q_0;\Ptar)| + \frac{\Lip(g)}{1-\alpha}R
:=M_R
\end{align*}
Then it readily follows that 
\begin{align*}
& \big|\left(\Delta C(Q_1;\Ptar)\right)^2 - \left( \Delta C(Q_2;\Ptar)\right)^2 \big| \\ =& \big|\Delta C(Q_1;\Ptar)+\Delta C(Q_2;\Ptar)\big| \, \big|\Delta C(Q_1;\Ptar)-\Delta C(Q_2;\Ptar)\big|\\ 
=& \left(\big|\Delta C(Q_1;\Ptar)\big|+\big|\Delta C(Q_2;\Ptar)\big| \right) \, \big|\Delta C(Q_1;\Ptar)-\Delta C(Q_2;\Ptar)\big|
\\ \leq&  2 M_R \, \big|\Delta C(Q_1;\Ptar)-\Delta C(Q_2;\Ptar)\big| 
\\ \leq&  2 M_R \dfrac{\Lip(g)}{1-\alpha}\, \mathcal{W}_1(Q_1,Q_2) ; \quad \text{by \eqref{Eq:Loc:Lip:CVaR}}.
\end{align*}

Let $A^{Q_1}(\phi):=\E_{Q_1}[\phi]-\log\E_{\Ptar}[e^\phi]$ and $A^{Q_2}(\phi):=\E_{Q_2}[\phi]-\log\E_{\Ptar}[e^\phi]$ for $\phi\in\Gamma_L$. Then, by the variational representation \eqref{equation:Variational_KL_Lip}, we have
\begin{align*}
\LipKL(Q_1\|\Ptar) - \LipKL(Q_2\|\Ptar) &=   \sup_{\phi \in \Gamma_L} A^{Q_1}(\phi) -  \sup_{\phi \in \Gamma_L} A^{Q_2}(\phi) \nonumber \\
   &\leq  \sup_{\phi \in \Gamma_L}   A^{Q_1}(\phi) -   A^{Q_2}(\phi) \nonumber \\
&=   \sup_{\phi\in\Gamma_L}\Big\{\E_{Q_1}[\phi]-\E_{Q_2}[\phi]\Big\} \nonumber \\
&=  \sup_{\phi\in\Gamma_L} \int \phi\, d(Q_1-Q_2),
 \nonumber \\
&= L \sup_{\psi\in\Gamma_1} \int \psi\, d(Q_1-Q_2) \nonumber \\
&= L \mathcal{W}_1(Q_1,Q_2),
\end{align*}

where the last equality follows from the Kantorovich-Rubinstein duality. A similar argument yields $\LipKL(Q_2\|\Ptar) - \LipKL(Q_1\|\Ptar) \leq  L \mathcal{W}_1(Q_1,Q_2)$. Hence we obtain 
\begin{align}
    \label{Eq:Loc:Lip:KL}
    \big|\LipKL(Q_1\|\Ptar) - \LipKL(Q_2\|\Ptar)\big| \le L\, \mathcal{W}_1(Q_1,Q_2).
\end{align}
Then we have 
\begin{align*}
& \big|\Fg(Q_1;\Ptar)-\Fg(Q_2;\Ptar)\big| \\
\le& \big|\LipKL(Q_1\|\Ptar)-\LipKL(Q_2\|\Ptar)\big| + \lambda\big|(\Delta C(Q_1;\Ptar))^2-(\Delta C(Q_2;\Ptar))^2\big| \\
\le& L\,\mathcal{W}_1(Q_1,Q_2) + 2\lambda M_R\frac{\Lip(g)}{1-\alpha}\,\mathcal{W}_1(Q_1,Q_2) \\
=& K_R\,\mathcal{W}_1(Q_1,Q_2)
\end{align*}
where $K_R = L + 2\lambda M_R\frac{\Lip(g)}{1-\alpha}$. 
Since $Q_0\in\mathcal{Q}^g$ and $R>0$ were arbitrary, every $Q\in\mathcal{Q}^g$ has a $\mathcal{W}_1$-neighborhood on which $\Fg(\cdot\,;\Ptar)$ is Lipschitz, i.e., $\Fg(\cdot\,;\Ptar)$ is locally Lipschitz continuous with respect to $\mathcal{W}_1$.
\end{proof}

\section{Proof of Lemma \ref{lemma1}}
\label{app:proof-lemma1}
\begin{Lemma}
Assume that $g$ is Lipschitz continuous and $Q \in \Qg$. Let $\rho$ be right-admissible at $Q$. Then there exists $\epsilon_0>0$ such that 
\begin{enumerate}
    \item The function $(\epsilon,y) \mapsto F^{Q^{\epsilon},g}_{\alpha}(y)$ is jointly continuous in $[0, \epsilon_0] \times  \R$.

    \item The set $ T := \bigcup_{\epsilon \in [0, \epsilon_0]} T(Q^{\epsilon})$ is compact.  
\end{enumerate}
\end{Lemma}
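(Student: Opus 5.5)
The argument rests on the decomposition \eqref{eq:split}, namely $F^{Q^\epsilon,g}_\alpha(y)=F^{Q,g}_\alpha(y)+\epsilon\,\ell_\rho(y)$ with $\ell_\rho(y)=\frac{1}{1-\alpha}\int (g-y)^+\,d\rho$. We take $\epsilon_0>0$ small enough that $Q^\epsilon\in\PR$ for all $\epsilon\in[0,\epsilon_0]$; this is possible by right-admissibility. By \eqref{Eq:g:integrability}, $\int g\,d|\rho|<\infty$, so $Q^\epsilon\in\Qg$ and every quantity in the argument is finite.

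\emph{Part (1).} Since $y\mapsto(g(x)-y)^+$ is $1$-Lipschitz for each $x$, we have
\[
|F^{Q,g}_\alpha(y)-F^{Q,g}_\alpha(y')|\le \Big(1+\tfrac{1}{1-\alpha}\Big)|y-y'|
\quad\text{and}\quad
|\ell_\rho(y)-\ell_\rho(y')|\le \tfrac{1}{1-\alpha}|\rho|(\R^d)\,|y-y'|.
\]
So both functions are Lipschitz in $y$. Next, $|\ell_\rho(y)|\le \frac{1}{1-\alpha}\int (g+|y|)\,d|\rho|$, which is bounded on bounded $y$-sets. Combining these,
\[
|F^{Q^\epsilon,g}_\alpha(y)-F^{Q^{\epsilon'},g}_\alpha(y')|\le C|y-y'|+|\epsilon-\epsilon'|\,|\ell_\rho(y')|.
\]
This gives joint continuity on $[0,\epsilon_0]\times\R$.

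\emph{Part (2).} The key step is a uniform bound $T\subset[a,b]$. For the lower bound, $g\ge0$ gives $F^{Q^\epsilon,g}_\alpha(y)=y+\frac{1}{1-\alpha}(\E_{Q^\epsilon}[g]-y)$ for $y\le0$. This is strictly decreasing in $y$ with slope $-\frac{\alpha}{1-\alpha}$. Hence no minimizer lies in $(-\infty,0)$, and we may take $a=0$.

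The upper bound is the main obstacle. Here we need uniform tail control of the law of $g$ under $Q^\epsilon$. Write $Q^\epsilon=(1-\epsilon)Q+\epsilon(Q+\rho)$. Positivity of $Q^{\epsilon_0}$ gives $Q+\rho\ge(1-\epsilon_0^{-1})Q$ as measures. Rather than rely on this, I would bound directly:
\[
\Psi^{Q^\epsilon,g}(c)\ge \Psi^{Q,g}(c)-\epsilon_0|\rho|(\{g>c\}).
\]
Since $|\rho|$ is a finite measure, $|\rho|(\{g>c\})\to0$ as $c\to\infty$. Also $\Psi^{Q,g}(c)\to1$. Choose $b$ so that $\Psi^{Q,g}(b)-\epsilon_0|\rho|(\{g>b\})>\alpha$. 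Then $\Psi^{Q^\epsilon,g}(b)>\alpha$ for all $\epsilon\in[0,\epsilon_0]$, and so $\VaRbar^{Q^\epsilon,g}_\alpha\le b$. By \cite[Theorem 10]{Rock2002CVaR}, $T(Q^\epsilon)=[\VaR^{Q^\epsilon,g}_\alpha,\VaRbar^{Q^\epsilon,g}_\alpha]\subset[0,b]$. As a check that avoids this quantile theorem, one can note that for $y>b$ the right derivative of $F^{Q^\epsilon,g}_\alpha$ is $1-\frac{1-\Psi^{Q^\epsilon,g}(y)}{1-\alpha}>0$, so $F^{Q^\epsilon,g}_\alpha$ is strictly increasing there. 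If needed, shrink $\epsilon_0$ in advance.

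\emph{Closedness.} Take $y_n\in T(Q^{\epsilon_n})$ with $y_n\to\bar y$. Pass to a subsequence with $\epsilon_n\to\bar\epsilon\in[0,\epsilon_0]$. For any $y^*\in\R$, $F^{Q^{\epsilon_n},g}_\alpha(y_n)\le F^{Q^{\epsilon_n},g}_\alpha(y^*)$. By joint continuity from part (1), the limit gives $F^{Q^{\bar\epsilon},g}_\alpha(\bar y)\le F^{Q^{\bar\epsilon},g}_\alpha(y^*)$. Hence $\bar y\in T(Q^{\bar\epsilon})\subset T$. So $T$ is closed and contained in $[0,b]$, and therefore compact.
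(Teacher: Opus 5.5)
Your proof is correct. Part (1) and the closedness step match the paper's argument: a Lipschitz-in-$y$ estimate plus the term $|\epsilon-\epsilon'|\,|\ell_\rho|$, followed by passing to the limit in $F^{Q^{\epsilon_n},g}_\alpha(y_n)\le F^{Q^{\epsilon_n},g}_\alpha(y^*)$ via joint continuity.

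You differ from the paper in how you confine $T$ to a bounded interval.

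\begin{itemize}
\item \emph{The paper's route.} It uses the uniform bound $|\Psi^{Q^\epsilon,g}(c)-\Psi^{Q,g}(c)|\le\epsilon\|\rho\|$ and picks $a<b$ with $\Psi^{Q,g}(a)\le\alpha-2\zeta$ and $\Psi^{Q,g}(b)\ge\alpha+2\zeta$. It then shrinks $\epsilon_0$ to $\min\{\zeta/\|\rho\|,\bar\epsilon\}$, so that the level $\alpha$ stays strictly between $\Psi^{Q^\epsilon,g}(a)$ and $\Psi^{Q^\epsilon,g}(b)$.
\item \emph{Your lower endpoint.} You get $a=0$ structurally from $g\ge 0$: $F^{Q^\epsilon,g}_\alpha$ is affine with slope $-\alpha/(1-\alpha)$ on $(-\infty,0]$.
\item \emph{Your upper endpoint.} You use the exact identity $\Psi^{Q^\epsilon,g}(c)=\Psi^{Q,g}(c)-\epsilon\rho(\{g>c\})$ (valid because $Q$ and $Q^\epsilon$ both have unit mass) together with $|\rho|(\{g>c\})\to 0$ as $c\to\infty$. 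Here $b$ absorbs the perturbation instead of $\epsilon_0$.
\end{itemize}

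The payoff is that your version holds for every $\epsilon_0$ in the admissibility range with no further restriction. Your closing remark ``if needed, shrink $\epsilon_0$ in advance'' is therefore unnecessary. The paper's version is symmetric and does not use $g\ge 0$, but its $\epsilon_0$ depends on $\|\rho\|$ and on the margin $\zeta$.

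Your right-derivative check also makes the bound on $T(Q^\epsilon)$ self-contained. On $[b,\infty)$ the convex function $F^{Q^\epsilon,g}_\alpha$ has right derivative $1-\frac{1-\Psi^{Q^\epsilon,g}(y)}{1-\alpha}>0$. Combined with your lower-bound argument, this excludes minimizers outside $[0,b]$ without appealing to \cite[Theorem 10]{Rock2002CVaR}.
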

\begin{proof}
(1) Denote by $\ell_{\rho}(y):= \frac{1}{1-\alpha} \int (g(x)-y)^+ \, d \rho(x)$ for any $y \in \R$ and $\|\rho\| = \int d|\rho|$. For all $y \in \R$, $|\ell_{\rho}(y)| \leq \frac{1}{1-\alpha} \left[ \int g \,d |\rho| + |y|\, \|\rho\| \right] < \infty$ since $\rho$ is right-admissible and $\int g\,d|\rho|<\infty$ by \eqref{Eq:g:integrability}.  Let $\epsilon_0$ be a given threshold and let $(\epsilon,y) \in [0,\epsilon_0] \times \R$ be arbitrary. Let $\tilde \epsilon > 0$ and $\delta < \frac{\tilde \epsilon}{E_0}$ where $E_0:=\frac{2-\alpha}{1-\alpha}  + |\ell_{\rho}(y)|   + \frac{\epsilon_0   \,\| \rho\|}{1-\alpha}$. For $\epsilon' \in [0,\epsilon_0]$ let $\| (y ,\epsilon) -(y',\epsilon') \| < \delta$. Since the function $t\mapsto t^+$ is $1$-Lipschitz continuous, $|(g(x)-y)^+ -(g(x)-y')^+| \leq |y-y'|$ for all $x \in \R^d$. Hence, for any signed measure $\pi$, we have the following
\begin{align}
\label{Eq:App:lemma}
 \left|   \int (g(x)-y)^+ -(g(x)-y')^+ d \pi \right| \leq  \int|y-y'| d |\pi| =  |y-y'| \int d|\pi|  
\end{align}
\begin{align*}
& |F_{\alpha}^{ Q^{\epsilon},g}(y)-F_{\alpha}^{ Q^{\epsilon'},g}(y')| \nonumber \\ 
=&  \left| y-y' + \frac{1}{1-\alpha} \int \left[  (g(x)-y)^+ -(g(x)-y')^+\right] d Q + \epsilon \ell_{\rho} (y)- \epsilon' \ell_{\rho} (y')  \right| \nonumber \\ 
  \leq&   |y-y'|+ \frac{1}{1-\alpha}   |y-y'|  + \left| \epsilon   \ell_{\rho}(y )- \epsilon'  \ell_{\rho}(y')  \right|; \,\text{(from \eqref{Eq:App:lemma} with  $\pi=Q$)} \nonumber\\ 
   =& \frac{2-\alpha}{1-\alpha}|y-y'| + \left| \epsilon   \ell_{\rho}(y )- \epsilon'  \ell_{\rho}(y')  \right| \nonumber \\ 
 \leq&  \frac{2-\alpha}{1-\alpha}|y-y'| + \left| \epsilon   \ell_{\rho}(y )- \epsilon'  \ell_{\rho}(y)  \right| +  \left| \epsilon '  \ell_{\rho}(y )- \epsilon'  \ell_{\rho}(y')  \right| \nonumber \\ 
 \leq&  \frac{2-\alpha}{1-\alpha}|y-y'| + |\ell_{\rho}(y)| \, \left| \epsilon - \epsilon'    \right| + \epsilon '  \left|   \ell_{\rho}(y )-  \ell_{\rho}(y')  \right| \nonumber \\ 
  \leq&  \frac{2-\alpha}{1-\alpha}|y-y'| + |\ell_{\rho}(y)| \, \left| \epsilon - \epsilon'    \right| + \epsilon_0  \left|   \ell_{\rho}(y )-  \ell_{\rho}(y')  \right| \nonumber \\ 
    \leq& \frac{2-\alpha}{1-\alpha}|y-y'| + |\ell_{\rho}(y)| \, \left| \epsilon - \epsilon'    \right| + \frac{\epsilon_0   \,\| \rho\|}{1-\alpha} \left|y-  y' \right|; \, \text{(from \eqref{Eq:App:lemma} with  $\pi=\rho$)} \nonumber \\ 
      \leq& \left(\frac{2-\alpha}{1-\alpha}  + |\ell_{\rho}(y)|   + \frac{\epsilon_0   \,\| \rho\|}{1-\alpha} \right) \delta = E_0 \delta < \tilde \epsilon, 
\end{align*}
Hence, $F^{Q^{\epsilon},g}_{\alpha}(y)$ is jointly continuous in $[0, \epsilon_0] \times  \R$. 

(2) For any $c \in \R$, $|\Psi^{Q^\epsilon,g}(c)-\Psi^{Q,g}(c)|\le\epsilon\|\rho\|$. In the trivial case $\|\rho\|=0$, $T=T(Q)$, and the result readily follows. Assume $\|\rho\|>0$. Fix $\zeta\in(0,\tfrac12\min\{\alpha,1-\alpha\})$ and $a<b$ with $\Psi^{Q,g}(a)\le\alpha-2\zeta$, $\Psi^{Q,g}(b)\ge\alpha+2\zeta$; set $\epsilon_0=\min\{\zeta/\|\rho\|,\bar\epsilon\}$, where $\bar\epsilon>0$ is an admissibility bound ($Q^\epsilon\in\PR$ for $\epsilon\in[0,\bar\epsilon]$). Then for all \(\epsilon\in[0,\epsilon_0]\) we have
\begin{align}
    \Psi^{Q^\epsilon,g }(a)\le \alpha-\zeta<\alpha, \qquad
\Psi^{Q^\epsilon,g }(b)\ge \alpha+\zeta>\alpha.
\end{align}
Thus, by definition of $\mathrm{VaR}_\alpha ^{Q^\epsilon,g}$ and $\overline{\mathrm{VaR}}_\alpha ^{Q^\epsilon,g}$ we have that $a \leq \mathrm{VaR}_\alpha ^{Q^\epsilon,g} \leq \overline{\mathrm{VaR}}_\alpha ^{Q^\epsilon,g} \leq b$. Therefore, $T(Q^{\epsilon}) =[\mathrm{VaR}_\alpha ^{Q^\epsilon,g},\overline{\mathrm{VaR}}_\alpha ^{Q^\epsilon,g}] \subseteq [a,b]\quad \text{for every }\epsilon\in[0,\epsilon_0]$. Thus $ T := \bigcup_{\epsilon \in [0, \epsilon_0]} T(Q^{\epsilon})$ is a bounded set.

Let $y_n \in T$ be a sequence that converges to $\hat y$. For each $n$, there exists $\epsilon_n \in [0, \epsilon_0]$ such that $y_n \in T(Q^{\epsilon_n})$. By the compactness of the interval $[0, \epsilon_0]$, there exists a subsequence $\epsilon_{n_k}$ that converges to $\epsilon^* \in [0, \epsilon_0]$. Since $(\epsilon,y) \mapsto F_{\alpha}^{Q^{\epsilon},g}(y)$ is jointly continuous in $[0, \epsilon_0] \times \R$ for any threshold $\epsilon_0$, we have the following for any $z \in \R$
\begin{align}
   F_{\alpha}^{Q^{\epsilon^*},g}(\hat{y}) = \lim_{k \to \infty}   F_{\alpha}^{Q^{\epsilon_{n_k}},g}(y_{n_k}) \leq \lim_{k \to \infty}F_{\alpha}^{Q^{\epsilon_{n_k}},g}(z) 
   = F_{\alpha}^{Q^{\epsilon^{*}},g}(z) .
\end{align}
That is $\hat y \in T(Q^{\epsilon^*}) \subseteq T$. Thus, $T$ is closed. Hence, the result.  
\end{proof}
\end{document}